\title{Regime-Aware Conditional Neural Processes with Multi-Criteria Decision Support for Operational Electricity Price Forecasting}
\author[1]{Abhinav Das}
\author[2]{Stephan Schlüter}
\affil[1]{\textit{Faculty of Mathematics and Economics, Ulm University, Helmholtzstrasse 20, Ulm 89081, Germany}}
\affil[2]{\textit{Department of Mathematics, Natural and Economic Sciences, Ulm University of Applied Sciences, Prittwitzstrasse 10, Ulm, 89075, Germany}}
\date{}
\begin{document}
\newtheorem{theorem}{Theorem}
\newtheorem{remark}{Remark}

\maketitle

\begin{abstract}
\hfill\newline
\hrule
\hfill\newline\newline
This work integrates Bayesian regime detection with conditional neural processes for 24-hour electricity price prediction in the German market. Our methodology integrates regime detection using a disentangled sticky hierarchical Dirichlet process hidden Markov model (DS-HDP-HMM) applied to daily electricity prices. Each identified regime is subsequently modeled by an independent conditional neural process (CNP), trained to learn localized mappings from input contexts to 24-dimensional hourly price trajectories, with final predictions computed as regime-weighted mixtures of these CNP outputs. We rigorously evaluate R-NP against deep neural networks (DNN) and Lasso estimated auto-regressive (LEAR) models by integrating their forecasts into diverse battery storage optimization frameworks, including price arbitrage, risk management, grid services, and cost minimization. This operational utility assessment revealed complex performance trade-offs: LEAR often yielded superior absolute profits or lower costs, while DNN showed exceptional optimality in specific cost-minimization contexts. Recognizing that raw prediction accuracy doesn't always translate to optimal operational outcomes, we employed TOPSIS as a comprehensive multi-criteria evaluation layer. Our TOPSIS analysis identified LEAR as the top-ranked model for 2021, but crucially, our proposed R-NP model emerged as the most balanced and preferred solution for 2021, 2022 and 2023.
\hfill\newline
\hrule
\hfill\newline\newline
\textit{Keywords: }Regime-Aware Prediction, MCDM, Electricity Price Forecasting, Battery Energy Storage Systems
\hfill\newline\newline
\textbf{AMS subject classification 2020:} Primary 68T07, 62G050; Secondary 68T37, 62P20
\hfill\newline
\thanks{*Preprint Submitted to Energy Economics}
\end{abstract}

\section{Introduction}\label{sec:intro}
The energy market has faced a significant structural change in the past decade. The global strife for decarbonization is encouraging the use of renewable energy sources, thus affecting the traditional supply-demand pattern, which were historically dominated by fossil fuels like coal, oil, and natural gas \cite{FLORES2024123464}. The growing integration of renewable energy sources into the power supply increases uncertainties in the electricity market due to intermittent nature of the sources such as wind or sunshine  \cite{WESTPHAL2024114415}. The volatility of the generation sources causes high price shocks and regime changes that is compromising to financial stability as well as investment strategies in the power market \cite{xu2024resilience}. Particularly for countries such as Germany, where the larger percentage of electricity is produced by renewable energy sources \cite{LOIZIDIS2024123058}, levels of sunlight and wind impact electricity generation and thus prices. This introduces, in addition to the physical problem of balancing the grid, non-stationarity to most price models, which further adds unreliability to the predictions. Accurate electricity price forecasting is crucial for efficient resource planning, financial risk management, and stabilization of the market, especially with increasing renewable energy penetration, which enables utilities, businesses, and governments to optimize planning and policy maximization while matching demand and supply. The building of an adequate prediction model, which is relatively straightforward and understandable but at the same time can reflect the market complexity and all influence factors engaged in it is not straightforward, and authors have utilized quite broadly three types of model for prediction: statistical/(probability-based) models \cite{CORNELL20241421}, machine learning/deep learning models \cite{appliedmath3020018}, and mixed models \cite{JIANG2023106471}. Precise forecasting allows the players in the market to make sound monetary policy.

The regime changes and the non-stationarity and non-linearity of electricity prices, due to the impact of exogenous variables like economic activity, weather patterns, and regulatory policy, demand robust forecast techniques capable of capturing the subtle relationship between these variables \cite{su16072691}. Moreover, this model will also enhance the accuracy of forecasts and enable risk-based financial decisions when market conditions are changing. Accurate price forecasts also help stakeholder to make comparatively better decisions regarding dispatch schedules, for example. Policymakers also gain from delivered accurate predictions,a s this facilitates regulatory structure design towards more general application of clean energy sources\cite{alshater2025early}. 

Besides direct price prediction, the real value of accurate predictions lies in their ability to inform and optimize advanced operation plans, particularly for flexible resources like Battery Energy Storage Systems (BESS) \cite{KOLLER2015128, en16176344}. BESS applications are at the heart of modern grids, offering a range of services from energy arbitrage (selling high, buying low) and peak shaving to providing critical ancillary services and soaking up intermittent renewables \cite{YANG2022112671}. The profitable participation of BESS in wholesale electricity markets requires sophisticated operating strategies that transform price predictions into best-selling and buying schedules \cite{MERCIER2023106721}. They might range from basic profit maximization through energy arbitrage in day-ahead and intra-day markets to risk-averse operations for limiting exposure to price spikes, and even more to providing essential grid services such as frequency regulation or renewable energy firming \cite{10761266}. For some actors, the objective might even be cost-minimizing for a given load, with BESS maximizing grid usage over self-discharge \cite{7977466}. All these different operation goals impose unique demands on the price forecasts underlying them as well as the consequent optimization scheme, since the robustness of decisions to prediction mistakes may significantly differ based on alternative strategic objectives. Above all, the relationship between the basic predictive accuracy of a prediction model and its capacity to guide actually optimal operating decisions is not necessarily straight or proportional \cite{AHMED20199}.

A slightly worse raw forecast errors model can actually have better operational performance if its errors occur during times of low market impact, or at correctly identifying key price turning points (e.g., highs and lows) most beneficial for profitable dispatch. A very accurate forecast that fails to identify these key decision points, however, can still result in poor operational judgments. This intrinsic detachment, wherein prediction capability and real-world applicability branch apart, creates a multi-dimensional problem of evaluation where no single measure is in a position to best describe a model's real-world utility in a BESS implementation. An extensive evaluation therefore demands an systematic process capable of converting several dimensions of performance into valuable insight. This calls for the application of Multi-Criteria Decision-Making (MCDM) techniques, which are designed to rank alternatives in terms of several, often competing, criteria \cite{Ching_Kwangsun, Brans2016}.

For the evaluation of forecasting models in the complex setting of BESS operating policies, MCDM provides a solid framework to consolidate not only prediction performance (e.g., various accuracy measures for price forecasts) but also heterogeneous indicators of operating performance (such as realized profit, cost minimized, or optimized regret measured). Among various MCDM techniques, the Technique for Order Preference by Similarity to Ideal Solution (TOPSIS) is best suited for this research \cite{CHEN20001}. TOPSIS offers a straightforward distance-based method, ordering alternatives in terms of proximity to an ideal best solution and remoteness from an ideal worst solution. The approach can offer a clear, transparent, and understandable composite score able to accommodate the diversified quantitative measures of performance that result from both the estimation accuracy assessment and the comprehensive BESS optimization runs across different operational settings and appraisal horizons. Through the use of TOPSIS, our research extends beyond the common univariate analyses to provide a superior, more holistic, and actionable forecast of forecasting model performance in actual energy storage market participation, responding to the question of which model actually performs best when all the elements of market participation are accounted for. The conventional time series models fail to capture the non-stationary regime changes, such as demand changes, supply changes, or regulatory regimes changes. These may lead to eras of unique market regimes where the price formation process dynamics, volatility, and demand shapes significantly change.

In order to learn this regimes-based behavior, the regime-switching models are used that allow us to identify and model the different phases of market behavior.

However, traditional regime-switching models that deduce from a pre-specified number of regimes or Markovian state dependencies might not be suitable for the electricity market, in which regime switching might be unexpected and depends on outside influences. To bridge this gap, we apply the disentangled sticky hierarchical Dirichlet process hidden Markov model (DS-HDP-HMM). It is a Bayesian nonparametric model that can learn automatically an unseen and dynamically evolving number of regimes with persistence over discovered states \cite{zhou2020disentangled}. This is significant, as energy markets do not adhere to a specific set of behaviors and new market conditions can emerge based on changes in technology, regulation, or geopolitics. The disentanglement option ensures that many latent factors that influence price regimes remain separable so that an orderly and interpretable regime-switching is feasible. Having determined the regimes as latent, forecasting must now be adapted to each individual statistical nature of each regime.

Conventional forecasting methodologies are not applicable under such scenarios due to their requirements of strict stationarity and model structure. While GARCH models are capable of representing time-varying volatility and heteroskedasticity, they are still not able to explain regime switching, where the underlying market behavior can change radically due to some external factors such as changes in regulation or technology. \cite{CHAN2016182}. In order to circumvent this, we utilize conditional neural process (CNP), which is meta-learning-based probabilistic model, for every regimes identified, that offers regime-specific, distributional predictions with inherent uncertainty quantification. As opposed to standard neural networks, the NP model learns a generalizable stochastic mapping from historical market conditions to future prices over market regimes. This not only can capture the subtleties of functional relationships within electricity price data but, more critically, also allow for principled uncertainty estimation, as is essential for risk-sensitive decision-making in financial and energy markets. With the incorporation of S-HDP-HMM for regime adjustment and CNPs for regime-dependent probabilistic prediction, our system offers a scalable and high-quality solution to the problem of uncertainty in electricity prices in the scenario of energy transition and climate risk. The results in this paper demonstrate that the suggested training structure of the conditional neural process for each regime not only achieves better performance than the baseline models, i.e., deep neural network model and lasso estimated autoregressive model presented by \cite{LAGO2021116983} but also gives a balanced model for the battery operational strategies.

The rest of the paper is as follows: In Section \ref{sec:related work} we give a overview of related works in this field. In Section \ref{sec:data_set} we discuss the dataset and variables affecting the electricity prices. In Section \ref{sec:prelim} we give a basic mathematical overview of the hidden Markov models and hierarchical Dirichlet process hidden Markov model and its extensions. Section \ref{sec:DS-HDP-HMM} explains the model formation for the regime detection in the electricity price time series. In Section \ref{sec:neural_process} we briefly explain the architecture of conditional neural process where as Section \ref{sec:aggreate_mechanism}. In Section \ref{sec:operational_strategies} we formulate different operational strategies for predicted prices. In Section \ref{sec:numerical_result} we analyze the proposed frame work in terms of prediction accuracy of the model as well the operational strategies. the  explains the framework for the combination of prediction done by each conditional neural processes trained per regimes. Section \ref{sec:conslusion} concludes the paper. The benchmark models, some mathematical explanation and supplementary results are shown in the Appendix.

\section{Literature Review}\label{sec:related work}
 
Electricity can not be treated as other financial commodities and securities: Due to the lack of sufficient storage capabilities in most markets, it is not storable in considerable quantities yet, i.e. once it is produced it must be consumed. This immediate physical delivery property adds complexity to price forecasting \cite{o2025conformal_1, ForecastingElectricityPrices_3}. Besides, the combination of non-storability and its dependency on weather conditions sometimes causes considerable imbalance between supply and demand \cite{inproceedings_6, article_7} which leads to price spikes or also negative prices. Further factors such as demand uncertainty add to the uncertainty. Given the complexity, the development of accurate forecasting model is challenging and there is a vast amount of literature concerning with this topic.
As a comprehensive overview is not in the scope of this paper we refrain from this attempt. Instead we solely focus on citing examples for the most widely used methods and especially methods related to our approach. In general, literature suggests that  electricity time series often exhbit a changing behavior, reflecting discrete shifts in the market structure and participant’s strategies \cite{GABRIELLI2022123107_9}. 

\subsection{Linear Models and Parametric Heteroskedasticity Models}
Time series models such as the autoregressive integrated moving average (ARIMA) or the generalized autoregressive conditional heteroskedasticity (GARCH) model show greater edge on capturing linear dependencies and are comparably simple models, which have been used to forecast the electricity prices \cite{en14237952, KETTERER2014270}. 
In order to better capture non-linear effects, a combination of both model and various modifications has been proposed. For example, in \cite{rubio2023forecasting} authors are building a model based on wavelet ARIMA and GARCH to capture the non-linear patterns in the price time-series. Similarly, a modified version of ARIMA which has a exogenous component, namely, ARIMAX is combined with the GARCH model to include the effect of weather variables on the price and capture the non-linearity in \cite{ HUURMAN20123793}. Several other modifications such as seasonal ARIMA (SARIMA), seasonal ARIMAX  (SARIMAX) and different combinations are explored in \cite{ etep_1734, DIONGUE2009505} and the references therein. Even though these combined models are capable of capturing the non-linearities within the time-series, the shift in the regimes or even the regime detections remains unexplored. In \cite{article_schwartz} two-factor model provides a foundational framework by decomposing commodity prices into a mean-reverting short-term factor and a geometric Brownian motion long-term factor, it assumes constant parameters. This inherent rigidity fails to capture the empirically observed non-stationary and episodic changes in commodity market dynamics, such as shifts between periods of high and low volatility or distinct mean-reversion characteristics. Consequently, a single set of parameters may lead to suboptimal forecasting and risk management, especially during periods of structural change or crisis. While methods like those explored in \cite{Schluter_Deuschle_2014}, which leverage wavelet transforms for multi-resolution signal decomposition and noise reduction, can improve forecasting accuracy by isolating distinct frequency components, they primarily focus on the statistical characteristics of the time series itself. However, these wavelet-based approaches do not inherently model or provide direct economic interpretation for the underlying fundamental shifts or distinct market states that drive significant changes in the energy market's behavior. This problem is addressed in \cite{le2022variance} by formulating the regime-switching models, allowing parameters  to adapt to distinct market states. The model demonstrate its applicability in crude oil market by developing a tractable regime-switching stochastic volatility model where only the market price of volatility risk transitions between two distinct regimes (normal and crisis), while the underlying risk-neutral dynamics remain consistent.

Although the aforementioned models are effective in modeling linear temporal dependencies and volatility clustering, they have per construction limitations regarding the parametric structure and when it comes to nonlinear dynamics or abrupt shifts in market conditions. To address this, hybrid models have emerged, which often pair statistical approaches that capture linear dynamics with machine learning methods designed to learn nonlinear patterns. For instance, an ARIMA or GARCH model might be combined with a neural network to jointly model trend and nonlinear deviations. 

While this approach extends the expressive power of traditional models, its performance is highly sensitive to the choice and integration of components. Moreover, it still assumes a globally consistent structure and often overlooks latent temporal regimes, especially when the transitions are not explicitly modeled.

\subsection{Machine Learning Models}
Machine learning/deep learning models have gained popularity in almost every field research and 
several neural network architectures have been developed for price prediction tasks; \cite{SINGHAL2011550,POURDARYAEI2024121207, GHIMIRE2024122059} and the references therein are just a few to mention. Particularly, the long short-term memory (LSTM) and deep neural networks (DNN) -- due to the capabilities of handling non-linearity within the data -- have been able to outperform various benchmark models such as the aforementioned ARIMA model \cite{LAGO2021116983}. Ensemble DNN models, as per the claim by the authors in \cite{LAGO2021116983}, are increasing performance even further. Roughly speaking ensemble models combine two or more models from the same or a different class in order to improve foreasting accuracy. For example, the ensemble model proposed by \cite{LAGO2021116983} is the combination of four DNN models trained with different training batch sizes. However, their DNN has only been tested against the ensemble Lasso estimated autoregressive (LEAR) method and its own individual models. For a better comparison, the ensemble DNN should be compared with other ensembles e.g. \cite{das2025electricitypricepredictionusing}. Also the DNN models in \cite{LAGO2021116983} does not quantify the uncertainty which is highly important in price prediction. Global deep learning models such as LSTMs or transformer models, known for their attention mechanisms that allow them to weigh the importance of different parts of input sequences for capturing long-range dependencies, though being more flexible, exhibit related limitations \cite{vaswani2017attention}. When trained on data spanning multiple regimes, the learned function approximator effectively minimizes loss over a multi-modal conditional distribution blending behavior across regimes. This leads to mode averaging, where the model fails to accurately represent regime-specific dynamics, potentially resulting in biased or over-smoothed forecasts particularly near regime transitions. Ensemble approaches that average outputs from models trained on different temporal windows attempt to mitigate this but do not solve the core issue: the absence of regime-awareness in the learning objective. To address this issue, a model that can capture the abrupt market shift and can account for the time-varying price dynamics, is required.

\subsection{Regime Switching Models}
The work by Hamilton \cite{Hamilton_1989}, in which he uses Markov regime switching (MRS) to forecast the electricity price of New Zealand, is fundamental. In early 2000s regime-based models were used to forecast the German, UK and Dutch electricity market \cite{HUISMAN2003425}. The ability to explicitly model different market states is a significant advantage of regime-switching models, as it allows for a more nuanced understanding of price dynamics compared to models that assume a single underlying process \cite{DECASTROMATIAS2025108341,demiralay2024uncertainty,GABRIELLI2022123107_9}. Key aspect is determining the number of regimes. This choice often reflects the different market conditions that are believed to influence electricity prices. For instance, a two-regime model might distinguish between a base regime characterized by normal price levels and volatility, and a spike regime with significantly higher prices and volatility \cite{article_7}. 
Thereby the transition probabilities between the individual regimes are a crucial specification, as they determine the likelihood and persistence of each regime \cite{article_7}. Various types of regime-switching models exist, allowing for regime-dependent means, variances (modeling heteroscedasticity), or even the entire dynamic structure of the price process \cite{inproceedings_6}. The flexibility of regime-switching models allows them to be tailored to the specific characteristics of the electricity market under study by adjusting the number of regimes, the transition probabilities, and the within-regime dynamics \cite{Ahlawat2025}.
 \cite{MR4784780} formulate an optimal trading strategy for energy storage devices in balancing markets by modeling price dynamics with a MRS framework combined with stochastic differential equations (SDE).  While the hybrid SDE-MRS approach captures short-term stochasticity and structural transitions, it assumes a fixed number of regimes and lacks mechanisms for adaptive regime expansion or robust long-term memory. Whereas in \cite{MR4667848}, the authors evaluate MRS and Extreme Value Theory (EVT) models on New Zealand electricity price data. The MRS framework models up to five regimes with time-varying transitions and exogenous market covariates, demonstrating strong performance under stable market conditions. EVT performs well in capturing tail behavior. However, both approaches are limited: MRS models require predefined regime counts and may overfit, while EVT lacks dynamic adaptability. Similarly, the contribution in \cite{MR3451179} introduces regime-switching Lévy semistationary (LSS) processes to account for the impact of wind penetration on electricity prices. The regime switches are driven by the wind penetration index, capturing the structural dependence between renewables and prices. While capturing forward-looking impacts, this framework embeds the regimes in deterministic external signals, and lacks latent stochastic transitions or hierarchical structure.  The work in \cite{MR3184351} surveys and advances MRS models for electricity spot prices, emphasizing their success in capturing stylized features such as spikes, mean-reversion, and transition persistence. However, classical MRS frameworks remain limited due to the requirement of fixed regime counts and the assumption of memoryless transitions. Moreover, joint inference of parameters and states becomes computationally burdensome.  In \cite{MR3088175}, a self-excited threshold regime model is developed, allowing electricity price spikes to emerge as stochastic orbits in a nonlinear phase space, without explicit jump processes. The model achieves spike realism through endogenous dynamics but introduces significant complexity and lacks modularity for multi-covariate learning. The study in \cite{MR2507759} explores heavy-tailed MRS models for electricity prices, challenging the norm of log-price modeling and instead advocating for raw-price modeling with t-distributed errors. This improves tail fit but does not resolve core MRS limitations: fixed regime counts and limited memory. Finally, in \cite{MR2328405}, the authors introduce a regime-switching model that accommodates long-memory processes via fractional integration. This is particularly relevant for Nordic markets, where transmission congestion and integration policies yield regimes with persistent memory. While theoretically appealing, the model faces computational and identifiability challenges. 

Taken together, these prior works have significantly advanced electricity price modeling, particularly through regime-switching mechanisms. Yet, they often fall short in terms of adaptability, memory modeling, uncertainty quantification,  and scalability. This is why, in this article, we propose a new model which uses a modified version of the hierarchical Dirichlet process hidden Markov model (HDP-HMM) which is disentangled sticky HDP-HMM. For the ease of readers, we briefly introduce the hidden Markov models and its variants in the next section.

\section{Data Exploration}\label{sec:data_set}
In 2023, renewable energy sources supplied 57\% of Germany’s total electricity demand, with onshore wind being the most prominent, followed by solar and hydropower \cite{Fraunhofer}. This substantial share, coupled with the inherent variability of renewable sources, makes them a key factor influencing electricity prices in the country. Our study utilizes data from the Federal Network Agency of Germany (Bundesnetzagentur\footnote{{\href{https://www.smard.de/home}{Link to Data: Accessed on March 12, 2025}}}), which includes historical electricity prices, residual load forecasts, and projections of renewable energy generation.

In electricity markets, renewable energy is typically traded with priority. Therefore, the residual load—defined as gross electricity demand minus the renewable energy output—is especially relevant, as it indicates the amount of power that must be generated from conventional (non-renewable) sources. A higher residual load signals lower renewable output and a greater reliance on non-renewable generation to meet demand. Figure \eqref{price_load_scatter} illustrates how electricity prices relate to the residual load forecast, while Figure \eqref{price_ren_scatter} depicts their relationship with total forecasted renewable production.

Incorporating both residual load and total renewable output into the model allows it to better capture underlying trends in price formation.

\begin{figure}[ht!]
    \centering
    \begin{subfigure}{0.5\linewidth}
        \includegraphics[height = 5cm, width = 6.5cm]{ 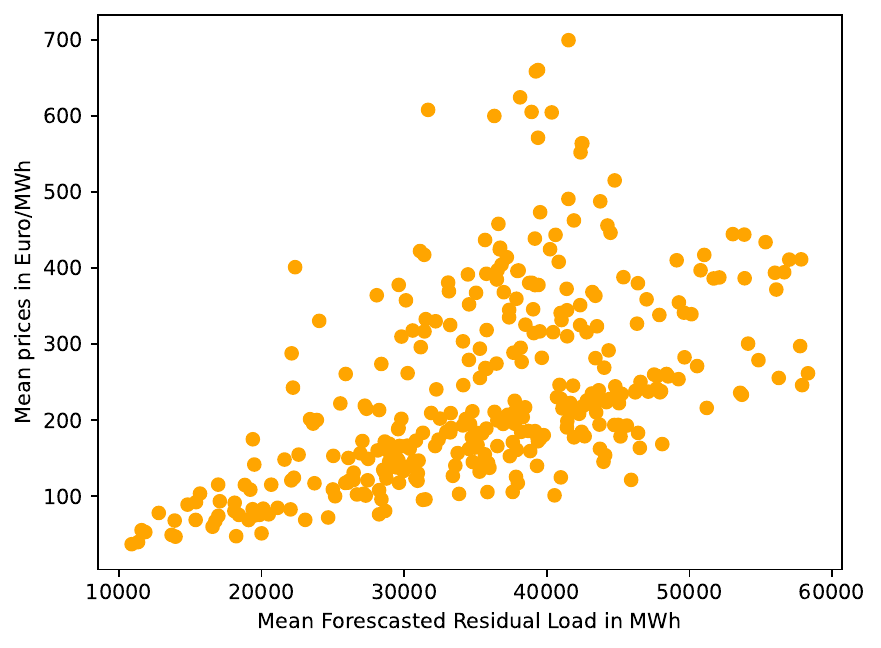}
        \caption{}
        \label{price_load_scatter}
    \end{subfigure}
    \begin{subfigure}{0.45\linewidth}
        \includegraphics[height = 5cm, width = 6.5cm]{ 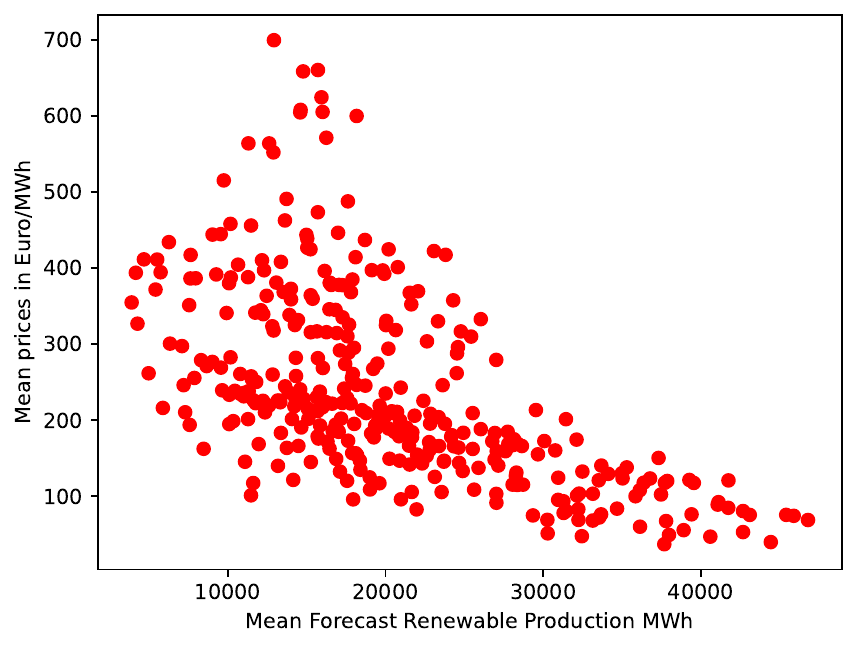}
        \caption{}
        \label{price_ren_scatter}
    \end{subfigure}
    \caption{\centering{Scatter Plot of Historic Data for (a) Daily Average of Forecast Residual Load vs Daily Average of Price and (b) Daily Average of Forecast Total Renewable Production vs Daily Average of Price, for One Year}}
    \label{scatter}
\end{figure}

Despite the known relevance of meteorological data, we exclude weather variables from the model due to their inherently local, station-specific nature, which is inconsistent with our use of nationally aggregated datasets. Although weather data are available, their integration poses challenges in terms of dimensionality, spatial aggregation, and data quality. Additionally, weather parameters do not directly influence electricity prices; their impact is mediated through renewable energy generation, which is explicitly modeled via forecasted production from solar, wind, biomass, and hydropower sources.

The dataset is structured as in \cite{das2025electricitypricepredictionusing}:

$$\text{Price data} = \begin{bmatrix}
    P_{1}^{(1)}&P_{2}^{(1)}, \cdots, &P_{24}^{(1)}\\
    P_{1}^{(2)}&P_{2}^{(2)}, \cdots, &P_{24}^{(2)}\\
    \vdots & \vdots  &\vdots\\
    P_{1}^{(n)}&P_{2}^{(n)}, \cdots, &P_{24}^{(n)}
\end{bmatrix}_{n\times 24}$$ 

Similarly, we have the load data and total energy production (TRP) data as follows:

$$\text{Load data} = \begin{bmatrix}
    L_{1}^{(1)}&L_{2}^{(1)}, \cdots, &L_{24}^{(1)}\\
    L_{1}^{(2)}&L_{2}^{(2)}, \cdots, &L_{24}^{(2)}\\
    \vdots & \vdots  &\vdots\\
    L_{1}^{(n)}&L_{2}^{(n)}, \cdots, &L_{24}^{(n)}
\end{bmatrix}_{n\times 24}\text{ and\hspace{0.5cm}} \text{TRP data} = \begin{bmatrix}
    R_{1}^{(1)}&R_{2}^{(1)}, \cdots, &R_{24}^{(1)}\\
    R_{1}^{(2)}&R_{2}^{(2)}, \cdots, &R_{24}^{(2)}\\
    \vdots & \vdots  &\vdots\\
    R_{1}^{(n)}&R_{2}^{(n)}, \cdots, &R_{24}^{(n)}
\end{bmatrix}_{n\times 24}$$

In this work the prediction models are trained using the a 248-dimensional input, $\mathbf{x}_{t}$ and 24 -dimensional output data $P_{\mathbf{x}_{t}}$ as follows: 
\begin{equation}
\label{train_set}
\mathbf{P}_{\text{train}} 
= \{P_{\mathbf{x}_{t}},: \mathbf{x}_{t} \in T \subset \mathbb{R}^{248}, \forall i=1,\cdots n\}.
\end{equation}
where $$\mathbf{x}_{t} = [i, P^{(i-1)}, P^{(i-2)}, P^{(i-3)}, P^{(i-7)}, L^{(i)}, L^{(i-1)}, L^{(i-7)}, R^{(i)}, R^{(i-1)}, R^{(i-7)}, BD^{(i)}].$$ Unless explicitly specified, we consider $P_{\mathbf{x}_{t}}$ and $P_{t}$ as same for notational simplicity.

\section{Fundamentals of Hidden Markov Models}\label{sec:prelim}
In this section we briefly introduce the concept of Bayesian hidden Markov model (HMM) and hierarchical Dirichlet process (HDP-HMM) and their application to electricity price time series.
\subsection{Hidden Markov Model}

The Hidden Markov Model is a statistical model that describes a system that is assumed to be a Markov process with unobserved (hidden) states.  For electricity prices, let $P_t$ denote the hourly mean price at day $t$, and let $z_t$ represent the hidden state at day $t$, where $z_t \in \{1, 2, ..., K\}$, and $K$ is the number of hidden states. The HMM is formally defined by a set of parameters $\Theta = \{\boldsymbol{\pi}, \mathbf{A}, \boldsymbol{\theta}\}$, where:
\begin{itemize}
    \item \textbf{Initial State Probabilities:} The vector $\boldsymbol{\pi} = (\pi_1, \pi_2, \dots, \pi_K)$ specifies the probability of starting in each hidden state at time $t=1$. Specifically, $\pi_i = \mathbb{P}(z_1 = i)$, with the constraint that $\sum_{i=1}^{K} \pi_i = 1$.
    \item \textbf{Transition Probabilities:} The transition matrix $\mathbf{A} = \{a_{ij}\}$ of size $K \times K$ defines the probabilities of switching from one hidden state to another. An element $a_{ij}$ represents the probability of transitioning from state $i$ at time $t-1$ to state $j$ at time $t$, i.e., $a_{ij} = \mathbb{P}(z_t = j \mid z_{t-1} = i)$. For each row, the probabilities must sum to one: $\sum_{j=1}^{K} a_{ij} = 1$ for all $i \in \{1, \dots, K\}$. This matrix governs the dynamics of switching between different price regimes.
    \item \textbf{Emission Probabilities:} For each hidden state $i$, there is an associated emission probability distribution $f(P_t \mid \theta_i)$. This distribution models the probability of observing the electricity price $P_t$ given that the system is in hidden state $i$, parameterized by $\theta_i$. Common choices for $f$ include Gaussian distributions for continuous data, where $\theta_i = (\mu_i, \sigma_i^2)$ represents the mean and variance specific to regime $i$. This component captures the distinct statistical properties of electricity prices within each regime.
\end{itemize}
The likelihood of an observed mean electricity price sequence $P = \{P_1, P_2, \dots, P_T\}$ given model parameters $\Theta$ can be efficiently calculated using the forward algorithm. The estimation of model parameters $\Theta$ from observed data is typically performed using the Expectation-Maximization (EM) algorithm, specifically the Baum-Welch algorithm, which iteratively refines the parameters to maximize the likelihood of the observations. For a more detailed exposition of the HMM and its associated algorithms, we refer the reader to \cite{1165342}.

Despite its utility, the standard HMM possesses several limitations when directly applied to complex time series like electricity price data. A primary constraint is the requirement to pre-determine the number of hidden states $K$. In dynamic electricity markets, the true number of underlying price regimes may not be known a priori and, critically, can change over time due to evolving market conditions. Furthermore, estimating the HMM parameters with absolute certainty can be challenging, especially with limited data or when regimes are not clearly separable \cite{zhou2020disentangled}. This inherent uncertainty in parameter estimation suggests that a Bayesian approach, which naturally quantifies uncertainty through posterior distributions, would be more desirable for robust modeling.

\subsection{Hierarchical Dirichlet Process Hidden Markov Model}

The Hierarchical Dirichlet Process Hidden Markov Model (HDP-HMM) addresses the limitation of a fixed number of states by using a Bayesian nonparametric approach \cite{ teh2004sharing}. It allows the number of states to grow as needed based on the data. The HDP-HMM leverages the Dirichlet Process (DP), a stochastic process that defines a probability distribution over probability distributions.

The generative process of of the HDP-HMM is defined as follows:
Let $T$ denote the number of time steps. The generative process of the HDP-HMM is defined as follows:

\begin{align}
\beta &\sim \text{GEM}(\gamma), \quad \text{where } \beta_k = v_k \prod_{l=1}^{k-1} (1 - v_l),\quad v_k \sim \text{Beta}(1, \gamma), \label{eq:stickbreaking} \\
\boldsymbol{\pi}_j &\sim \text{DP}(\alpha, \beta), \quad \forall j \in \mathbb{N}, \label{eq:transition_dists} \\
\theta_k &\sim H, \quad \forall k \in \mathbb{N}, \label{eq:emission_params} \\
z_0 &\sim \pi_0, \quad \text{(initial state)} \label{eq:initial_state} \\
z_t \mid z_{t-1} &\sim \boldsymbol{\pi}_{z_{t-1}}, \quad t = 1, \dots, T, \label{eq:latent_states} \\
y_t \mid z_t &\sim f(P \mid \theta_{z_t}), \quad t = 1, \dots, T. \label{eq:observations}
\end{align}

Here $\gamma > 0$ is the concentration parameter controlling the spread of the global weights $\beta$. where $\beta \sim \text{GEM}(\gamma)$. The GEM distribution is derived from a stick-breaking process, which effectively defines a discrete probability distribution over an infinite number of potential states. For each potential state $j \in \{1, 2, \dots\}$, an emission parameter $\theta_j$ is drawn from a base measure $H$, i.e., $\theta_j \sim H$. This $H$ is typically a conjugate prior for the emission distribution $f(P \mid \theta_j)$. In this work we have considered $f$ is a Gaussian distribution, hence $H$ should be a Normal-Inverse-Gamma distribution. Similarly, for each state $j \in \{1, 2, \dots\}$, the outgoing transition distribution $\boldsymbol{\pi}_j = (\pi_{j1}, \pi_{j2}, \dots)$ is drawn from a Dirichlet Process with base distribution $\beta$ and concentration parameter $\alpha$, i.e., $\boldsymbol{\pi}_j \sim \text{DP}(\alpha, \beta)$ and $\pi_0$ is the initial state distribution, often taken as uniform over the first $K$ components in practical truncation. This means that each row of the (potentially infinite) transition matrix $\mathbf{A}$ is a draw from a DP centered around the global $\beta$. The concentration parameter $\alpha > 0$ controls the similarity of the transition distributions across different states; a larger $\alpha$ implies that the $\boldsymbol{\pi}_j$ distributions are more similar to each other and to the global $\beta$. The sequence of latent states $z_t$ evolves according to the Markov property. At each time step $t$, the current state $z_t$ is drawn from the transition distribution corresponding to the previous state $z_{t-1}$, i.e., $z_t \sim \boldsymbol{\pi}_{z_{t-1}}$ for $t = 1, \dots, T$.
The observed data $P_t$ at time $t$ is generated from the emission probability distribution associated with the current latent state $z_t$, i.e., $P_t \sim f(P \mid \theta_{z_t})$ for $t = 1, \dots, T$.

The complete joint distribution over the model’s random variables, including the infinite sequences $(\beta_k)_{k=1}^\infty$, $(\pi_j)_{j=1}^\infty$, $(\theta_k)_{k=1}^\infty$, the latent states $(z_t)_{t=0}^T$, and the observations $(P_t)_{t=1}^T$, is given by:

\begin{align}
&\mathbb{P}(\beta, \{\pi_j\}, \{\theta_k\}, z_{0:T}, y_{1:T}) = \mathbb{P}(\beta) \prod_{j=1}^\infty \mathbb{P}(\pi_j \mid \beta) \prod_{k=1}^\infty \mathbb{P}(\theta_k) \cdot \mathbb{P}(z_0) \nonumber \\
&\quad \times \prod_{t=1}^T \mathbb{P}(z_t \mid \pi_{z_{t-1}}) \cdot \mathbb{P}(P_t \mid \theta_{z_t}) \label{eq:joint}.
\end{align}

This expression encapsulates the entire HDP-HMM. In practice, inference is performed by truncating the infinite-dimensional objects (e.g., using a finite $K$ for $\beta$ and $\pi_j$) and applying variational inference or Gibbs sampling techniques. Despite its flexibility, the HDP-HMM exhibits a limitation in practice: it tends to rapidly switch between states even when the true underlying regime remains constant. This is due to the lack of an inductive bias favoring state persistence. Mathematically, the Dirichlet Process prior over the transition distribution $\boldsymbol{\pi}_j \sim \text{DP}(\alpha, \beta)$ treats all transitions, including self-transitions, as exchangeable. Consequently, the expected self-transition probability $\mathbb{E}[\pi_{jj}]$ is just $\beta_j$, the same as the expected probability of transitioning to any other state $k \neq j$. We have shown this in Appendix \ref{thm:hdp-exchangeable}

This results in an unreasonably high rate of switching, especially when the number of effective states is large, as $\beta_j$ becomes small. In time-series applications such as electricity pricing, where regimes often persist for contiguous durations, this behavior is undesirable.

\subsection{Sticky Hierarchical Dirichlet Process Hidden Markov Model}

The Sticky HDP-HMM, further refines the HDP-HMM by encouraging self-transition \cite{10.1145/1390156.1390196}, which improves upon the standard HDP-HMM by introducing a state-specific self-transition bias. This is particularly important in time series where regimes (latent states) tend to persist over extended durations. We show it mathematically in the Appendix \ref{rem:shdp-self}. The Sticky HDP-HMM introduces a mechanism to capture the observed persistence of the regimes, which can be crucial for accurately modeling periods of prolonged high or low volatility. Mathematically, this is achieved by modifying the base distribution of the Dirichlet Process (DP) prior over transition distributions:

\begin{equation}
\boldsymbol{\pi}_j \sim \mathrm{DP}(\alpha \beta + \kappa \delta_j), \quad j = 1, 2, \dots
\end{equation}

where $\alpha \beta$ is the usual HDP prior with global base measure $\beta \sim \mathrm{GEM}(\gamma)$, $\kappa > 0$ is a "stickiness" parameter that controls the weight placed on self-transitions and $\delta_j$ is a Dirac measure concentrated at state $j$, i.e., $\delta_j(k) = \mathbb{I}\{k = j\}$.

Despite its benefits, a notable limitation of sticky HDP-HMM couples two independent modeling aspects: regime persistence (via $\kappa$) and transition diversity (via $\alpha$). The additive nature of the Dirichlet Process base measure, $\alpha \beta + \kappa \delta_j$, inherently entangles these, making it challenging to tune them independently. To resolve the entanglement in the sticky HDP-HMM, the Disentangled Sticky HDP-HMM (DS-HDP-HMM) is introduced in \cite{zhou2020disentangled} which we briefly explain in the next section.

\section{A New Disentangled Sticky Sticky Hierarchical Dirichlet Process Hidden Markov Model Approach for Electricity Prices}\label{sec:DS-HDP-HMM}

The disentangled sticky HDP-HMM (DS-HDP-HMM) addresses the limitations of the sticky HDP-HMM by providing separate control over the transition probabilities and the self-persistence tendency. We briefly show this mathematically in Appendix \ref{rem::limitsHDPHMM} and \ref{thm:dshdp-expected}. This added flexibility is particularly beneficial for modeling electricity price dynamics, where the persistence of price regimes can vary significantly and independently of the overall similarity of transitions \cite{zhou2020disentangled}.

\subsection{Model Formulation}

The DS-HDP-HMM modifies the transition matrix prior as follows:
        \begin{align*}
            \kappa_j &\sim Beta(\rho_1, \rho_2) \\
            \overline{\pi}_j &\sim DP(\alpha \beta) \\
            \pi_j &= \kappa_j \delta_j + (1 - \kappa_j) \overline{\pi}_j, \quad j = 1, 2, ...
        \end{align*}

For each state $j$, a specific self-persistence probability $\kappa_j$ is drawn from a beta distribution with parameters $\rho_1$ and $\rho_2$. The beta distribution, defined on the interval $[0,1]$, is a natural choice for modeling probabilities. In the context of time series, $\kappa_j$ quantifies the probability that the series regime at time $t$ will remain the same at time $t+1$, given that the current regime is $j$. A high value of $\kappa_j$ indicates a highly persistent regime (e.g., a prolonged period of stable prices or sustained high volatility), whereas a low $\kappa_j$ suggests a short-lived regime. {Transition distribution to other states $\overline{\boldsymbol{\pi}}_j$:} This component, $\overline{\boldsymbol{\pi}}_j$, represents the probability distribution over all other states (excluding the current state $j$) to which the system can transition. It is drawn from a Dirichlet Process with a global base distribution $\beta$ and a concentration parameter $\alpha$, i.e., $\overline{\boldsymbol{\pi}}_j \sim \text{DP}(\alpha \beta)$. As in the HDP-HMM, $\beta$ is a global "menu" of possible states drawn from $\text{GEM}(\gamma)$, and $\alpha$ controls the similarity of these transition distributions across different states. {Mixture transition distribution $\boldsymbol{\pi}_j$:} The final transition distribution $\boldsymbol{\pi}_j$ for state $j$ is a convex combination of the self-persistence component and the transition component to other states. With probability $\kappa_j$, the next state is deterministically the same as the current state (represented by $\delta_j$, the Dirac delta function centered at $j$). With probability $(1 - \kappa_j)$, the next state is drawn from $\overline{\boldsymbol{\pi}}_j$, allowing for transitions to any other regime.

An equivalent formulation of the latent state dynamics using auxiliary variables provides further clarity:
\begin{align*}
    w_t &\sim \text{Bernoulli}(\kappa_{z_{t-1}}) \\
    z_t &\sim w_t \delta_{z_{t-1}} + (1 - w_t) \overline{\boldsymbol{\pi}}_{z_{t-1}}, \quad t = 1, \dots, T
\end{align*}
Here, $w_t$ is a binary auxiliary variable. If $w_t = 1$, it indicates that the system persists in the same price regime as $z_{t-1}$. If $w_t = 0$, it signifies a transition to a different regime, with the new regime $z_t$ being drawn from $\overline{\boldsymbol{\pi}}_{z_{t-1}}$. This formulation makes the disentanglement explicit: the decision to persist or switch is made first (governed by $\kappa_{z_{t-1}}$), and then, if a switch occurs, the destination regime is chosen from $\overline{\boldsymbol{\pi}}_{z_{t-1}}$.

\subsection{Gibbs Sampling Inference}
To estimate the parameters of the DS-HDP-HMM and infer the hidden electricity price regimes, we employ a Markov Chain Monte Carlo (MCMC) inference scheme, specifically a Gibbs sampling approach. Among the various Gibbs samplers for HDP-HMMs, the Direct Assignment Sampler is particularly well-suited as it directly samples the latent states, making it efficient for inference. This sampler targets the true posterior distribution of the DS-HDP-HMM, providing samples from which we can estimate the parameters and the sequence of hidden states. The direct assignment sampler for electricity price data proceeds iteratively through the following steps:
\begin{enumerate}
    \item \textbf{Sequential Sampling of Latent States and Auxiliary Variables $\{z_t, w_t\}$:} For each time step $t \in \{1, \dots, T\}$, we sequentially sample the current price regime $z_t$ and the self-persistence indicator $w_t$, conditional on all other latent variables $\{z_{s \neq t}, w_{s \neq t}\}$, the observed mean prices $P = \{P_1, \dots, P_T\}$, and the model hyperparameters $(\alpha, \gamma, \rho_1, \rho_2)$. This step involves calculating the full conditional probability $\mathbb{P}(z_t, w_t \mid \dots)$ for each possible combination of $z_t$ and $w_t$. The probability of assigning $z_t$ to a particular state $k$ and $w_t$ to 0 or 1 depends on:
    \begin{itemize}
        \item The emission probability $\mathbb{P}(P_t \mid z_t = k, \theta_k)$.
        \item The transition probability from the previous state $z_{t-1}$ to $z_t$, which is $\mathbb{P}(z_t \mid z_{t-1}, \kappa_{z_{t-1}}, \overline{\boldsymbol{\pi}}_{z_{t-1}})$.
        \item The transition probability from $z_t$ to the next state $z_{t+1}$, which is $\mathbb{P}(z_{t+1} \mid z_t, \kappa_{z_t}, \overline{\boldsymbol{\pi}}_{z_t})$.
    \end{itemize}
    If a new price regime (i.e., a state $z_t$ that has not been previously observed) is encountered during this sampling process, the number of active regimes $K$ is effectively incremented, and a new self-persistence probability $\kappa_{K+1}$ is sampled for this newly discovered regime from its Beta prior.
    \item \textbf{Sampling Self-Persistence Probabilities $\{\kappa_j\}_{j=1}^{K}$:} Given the sampled self-persistence indicators $w_t$ for all $t$, we can update our belief about the self-persistence probabilities $\kappa_j$ for each active regime $j$. This step typically leverages the Beta-Binomial conjugacy. For each state $j$, if $N_j^{\text{persist}}$ is the number of times the system transitioned from state $j$ to itself (i.e., $w_t=1$ when $z_{t-1}=j$), and $N_j^{\text{switch}}$ is the number of times it transitioned from state $j$ to another state (i.e., $w_t=0$ when $z_{t-1}=j$), then the posterior distribution for $\kappa_j$ is:
    $$\kappa_j \mid \dots \sim \text{Beta}(\rho_1 + N_j^{\text{persist}}, \rho_2 + N_j^{\text{switch}})$$
    \item \textbf{Sampling Global Base Distribution $\beta$:} The global transition base distribution $\beta$ is sampled using Dirichlet-multinomial conjugacy, similar to the standard HDP-HMM. This involves introducing auxiliary variables (often referred to as``Chinese Restaurant Franchise" variables) that track the number of times each state has been chosen as a destination state across all transitions. The posterior for $\beta$ becomes a Dirichlet distribution whose parameters are updated based on these counts.
    \item \textbf{Sampling Emission Parameters $\{\theta_j\}_{j=1}^{K}$:} For each active regime $j$, the emission parameters $\theta_j$ (e.g., mean and variance for a Gaussian emission) are sampled from their posterior distribution, conditional on all observations $y_t$ assigned to regime $j$. This step also utilizes conjugacy; for example, if the emission is Gaussian with a Normal-Inverse-Gamma prior, the posterior will also be Normal-Inverse-Gamma, updated by the sufficient statistics of the observed data points in regime $j$.
    \item \textbf{Sampling Hyperparameters:} The global hyperparameters $\alpha$, $\gamma$, $\rho_1$, and $\rho_2$ are also sampled from their respective posterior distributions.
    \begin{itemize}
        \item \textbf{$\alpha$ (Concentration for $\overline{\boldsymbol{\pi}}_j$):} The concentration parameter $\alpha$ is typically sampled using a Gamma-conjugate prior, given auxiliary variables derived from the empirical transition counts between states. This involves a slice sampling or Metropolis-Hastings step.
        \item \textbf{$\gamma$ (Concentration for $\beta$):} Similarly, $\gamma$ is sampled from its posterior, often using a Gamma prior and auxiliary variables related to the number of distinct states observed.
        \item \textbf{$\rho_1$ and $\rho_2$ (Beta prior for $\kappa_j$):} The parameters $\rho_1$ and $\rho_2$ of the Beta distribution for the self-persistence probabilities $\kappa_j$ are often sampled using a grid approximation of their joint posterior distribution or a Metropolis-Hastings step. This involves evaluating the posterior probability at various candidate values of $\rho_1$ and $\rho_2$ and sampling from this discretized distribution.
    \end{itemize}
\end{enumerate}
By iteratively performing these sampling steps, the Gibbs sampler generates a sequence of samples from the joint posterior distribution of all model parameters and latent states. After a sufficient burn-in period, these samples can be used to obtain point estimates (e.g., posterior means) and credible intervals for the parameters, as well as to infer the most probable sequence of hidden price regimes.

\begin{figure}[ht!]
    \centering
    \begin{subfigure}{0.58\linewidth}
        \includegraphics[height = 5cm, width = 12cm]{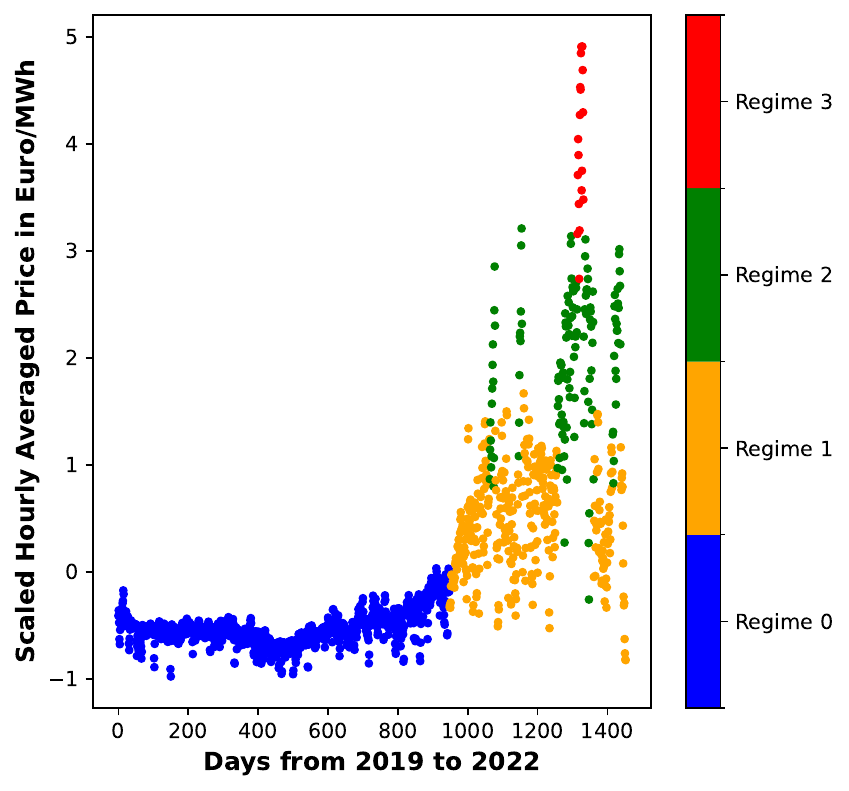}
        \caption{}
        \label{4_regimes}
    \end{subfigure}
    \begin{subfigure}{0.55\linewidth}
        \includegraphics[height = 5cm, width = 10cm]{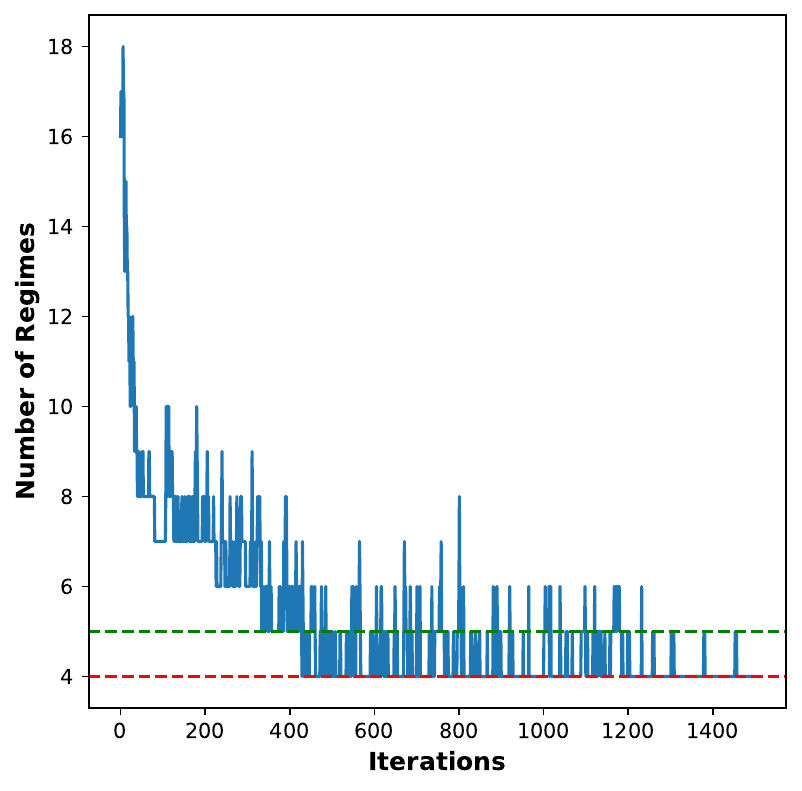}
        \caption{}
        \label{Regime_Convergence_4regimes}
    \end{subfigure}
    \caption{\centering{Regime Detection and Convergence}}
    \label{DS_2019_2022_4_regimes}
\end{figure} 

\begin{figure}[ht!]
    \centering
    \begin{subfigure}{0.8999\linewidth}
        \includegraphics[height = 4.5cm, width = 14cm]{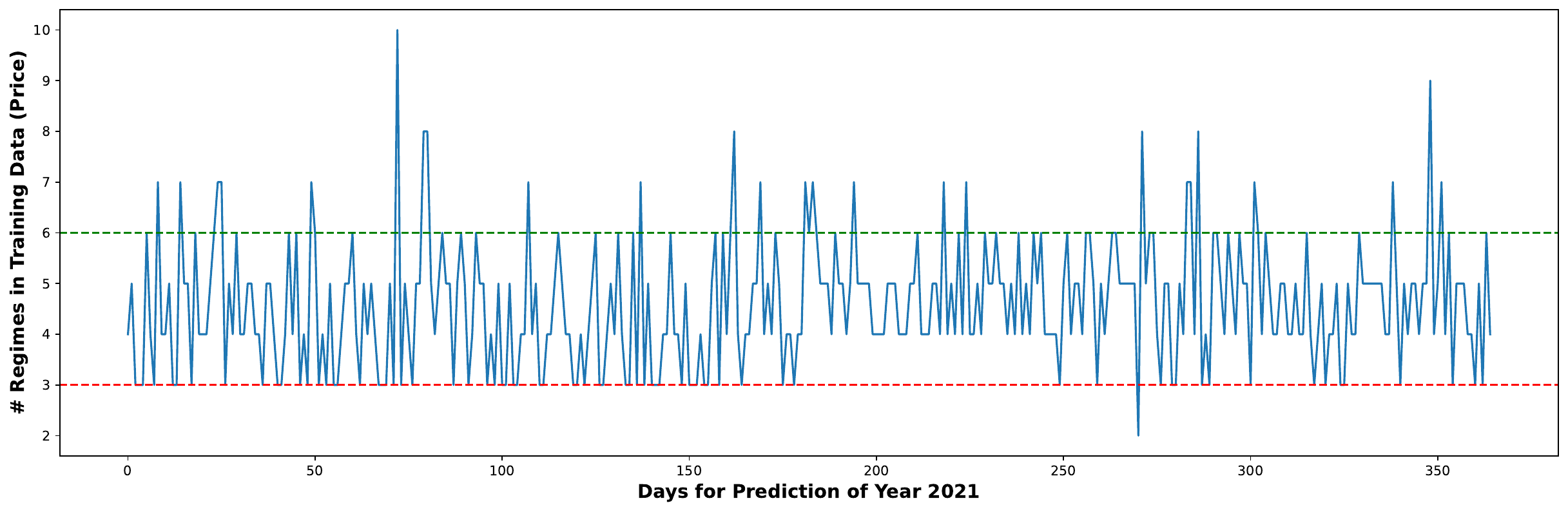}
        \caption{}
        \label{num_regime_2021}
    \end{subfigure}
    \begin{subfigure}{0.8999\linewidth}
        \includegraphics[height = 4.5cm, width = 14cm]{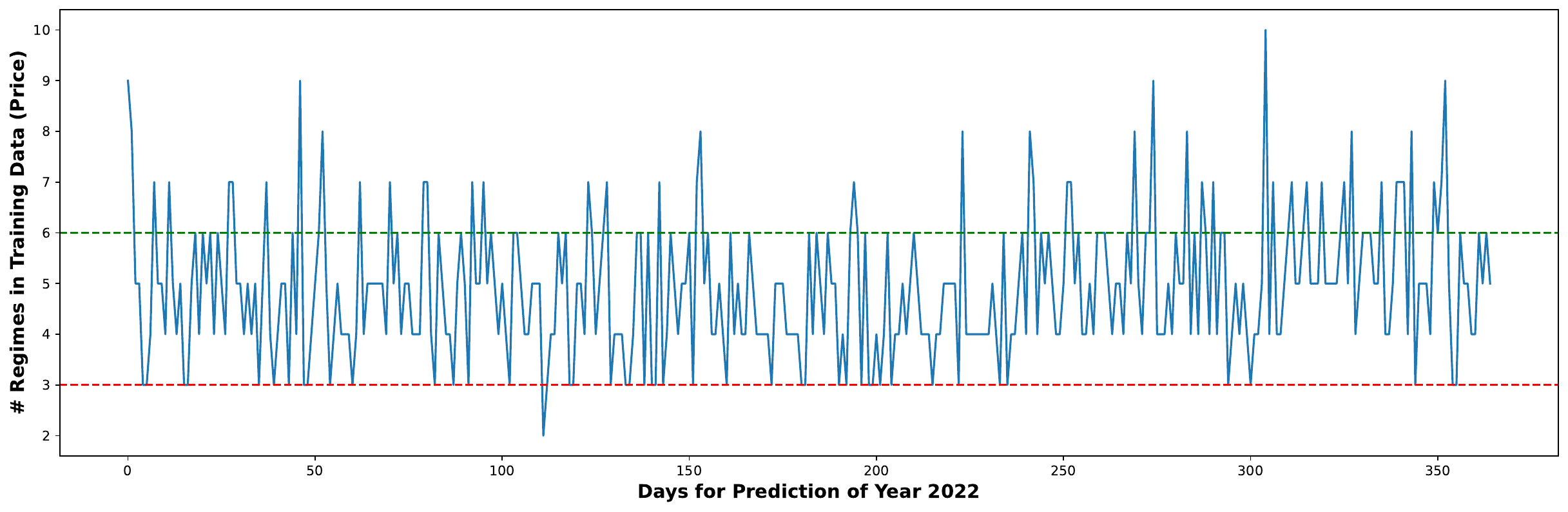}
        \caption{}
        \label{num_regime_2022}
    \end{subfigure}
    \begin{subfigure}{0.8999\linewidth}
        \includegraphics[height = 4.5cm, width = 14cm]{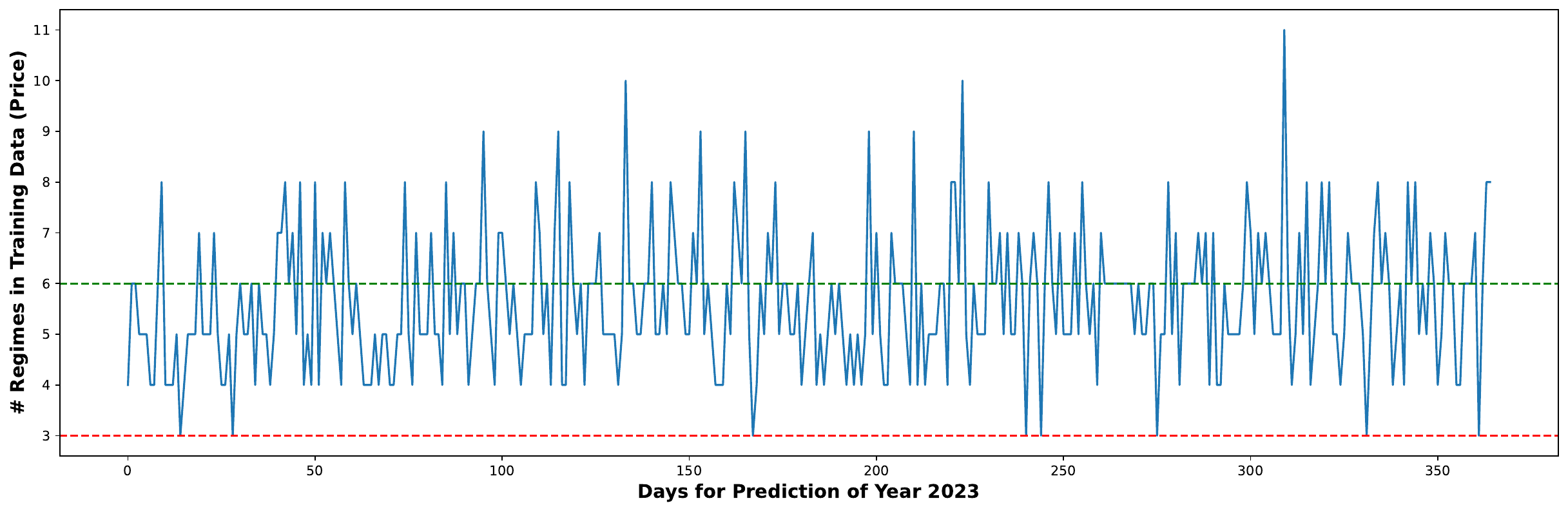}
        \caption{}
        \label{num_regime_2023}
    \end{subfigure}
    \caption{\centering{Number of regimes detected in the training data for prediction of prices from (a) 1 Jan, 2021 to 31 Dec, 2021, (b) 1 Jan, 2022 to 31 Dec, 2022 and (c) 1 Jan, 2023 to 31 Dec, 2023}}
    \label{num_regimes}
\end{figure}
The disentangled structure of the DS-HDP-HMM offers key advantages for modeling electricity price dynamics. It flexibly captures varying regime persistence, distinguishing between long-lasting regimes—such as extended periods of high demand or low prices—and short-lived events like sudden price spikes. The model is also adaptable to evolving market conditions, dynamically learning new regime behaviors in response to structural changes, such as increased renewable energy integration or regulatory shifts. Moreover, by separating parameters for self-persistence and transitions, the DS-HDP-HMM improves inference accuracy and enhances interpretability, providing clearer insights into regime characteristics and transitions—critical for informed decision-making. The Figure \ref{4_regimes} illustrates an example of detected regimes in the training data for electricity price prediction. As depicted, the DS-HDP-HMM successfully identifies distinct periods characterized by different price behaviors, such as periods of high volatility (e.g., spikes), stable base prices, or negative prices. Each color in the plot represents a unique, inferred regime, demonstrating the model's ability to segment the time series into statistically homogeneous segments without prior knowledge of the number of such segments. In Figure \ref{Regime_Convergence_4regimes}, we show the convergence of number of regimes for particular price data and similarly in the Figure \ref{num_regime_2023} and \ref{num_regime_2022} we show the number of regimes detected in the training data for the price prediction from 1 Jan, 2023 to 31 Dec, 2023 and from 1 Jan, 2022 to 31 Dec, 2022 respectively. From Figure \ref{num_regime_2023} and \ref{num_regime_2022} we see that most of the training data has number of regimes ranging from 4 to 6. 
Our model infers the number of regimes nonparametrically by maximizing the marginal likelihood under the HDP-HMM posterior, which adapts the number of regimes to the statistical complexity of each dataset. While the model detects pratically feasible regimes in most cases, some exhibit increased variance or transition frequency, leading to a higher number of latent regimes (e.g., 7) that maximize the marginal likelihood. However, we observe that certain regimes, particularly those with transient shifts or weak evidence separation, may be spurious or statistically similar. To enhance interpretability and ensure robustness, we apply a post hoc compression step that merges regimes whose emission distributions are statistically indistinguishable (low KL divergence) or whose posterior mass is negligible. This compression does not alter the underlying model inference but provides parsimonious summaries suitable for downstream prediction tasks, addressing the challenges of detecting semantically meaningful regimes.

The key advantage of the DS-HDP-HMM lies in its ability to disentangle the parameters influencing electricity price regime transitions, namely, {Concentration parameter $\alpha$:} This parameter controls the overall similarity of the transition distributions $\overline{\boldsymbol{\pi}}_j$ across different price regimes. A large $\alpha$ implies that the transition patterns between regimes are broadly similar, suggesting a more homogeneous market behavior in terms of regime switching. Conversely, a small $\alpha$ allows for more distinct and heterogeneous transition patterns, which might be appropriate in electricity markets where some regimes (e.g., periods of extreme price spikes) are very difficult to escape from, while others (e.g., periods of stable base load) might transition more readily to different states. 

Secondly, {Beta distribution parameters $\rho_1$ and $\rho_2$:} These parameters of the Beta distribution govern the self-persistence probabilities $\kappa_j$. The mean of the Beta distribution, given by $\frac{\rho_1}{\rho_1 + \rho_2}$, directly controls the average self-persistence probability across all regimes. A value close to 1 indicates a strong general tendency for price regimes to persist. The variance of the Beta distribution, calculated as $\frac{\rho_1 \rho_2}{(\rho_1 + \rho_2)^2 (\rho_1 + \rho_2 + 1)}$, controls the variability of the self-persistence probabilities across different regimes. A small variance suggests that all regimes exhibit similar persistence characteristics, while a large variance allows for significant differences in persistence. For example, in electricity markets, some regimes (e.g., extremely volatile periods driven by sudden outages) might be very short-lived (low $\kappa_j$), whereas others (e.g., periods of stable demand and ample supply) might last for extended durations (high $\kappa_j$). The DS-HDP-HMM can capture this heterogeneity.

Next, we train regime-aware regression models to predict future prices for wcich we have training input and output data. The training input data contains the past price data and exogenous variables, where as the output data is the 24 dimensional price of the current day. The figures above shows the detection of the regimes and number of regimes using the training output data. The regimes are intended to represent structural changes in the price generation mechanism — therefore, they must reflect discontinuities or non-stationarities in the conditional distribution $\mathbb{P}(P_t \mid x_t)$. From a statistical modeling standpoint, the mapping $f: x_t \mapsto P_t$, where $x_t\text{ and } P_t$ are the input and output, may change across time, and such regime shifts are best identified by analyzing the marginal behavior of $P_t$, especially when $\{x_t\}$ is high-dimensional, noisy, or semantically heterogeneous (e.g., mixing lagged prices, demand, production, calendar effects). From an information-theoretic perspective, $P_t$ serves as a low-dimensional sufficient statistic of the system’s observable behavior. In contrast, the inputs $x_t$ include multiple exogenous sources with varying signal-to-noise characteristics, making direct clustering of inputs less robust and less interpretable. As a result, we apply an unsupervised Bayesian segmentation algorithm (DS-HDP-HMM) to the sequence $\{P_t\}_{t=1}^{T}$, where $P_t$ is hourly mean price at dat $t$,  to identify latent regimes $\{z_t\}$. This inference is completely independent of the input data $\{x_t\}$. In the next sub-sections we explain the methodology for regime detection and the mathematically rigorous procedure used to associate these regimes to input features.

In our framework, regimes are inferred using a Disentangled Sticky Hierarchical Dirichlet Process Hidden Markov Model (DS-HDP-HMM), which segments the observed univariate time series $\{P_t\}_{t=1}^T$ into a sequence of latent states $\{z_t\}_{t=1}^T$, where $z_t \in \{1, \dots, K\}$ and $K$ is inferred nonparametrically. Each regime $k$ is associated with an emission distribution $\mathbb{P}(P_t \mid z_t = k)$, typically modeled as a Gaussian with unknown mean and known variance. The HDP-HMM further imposes a Markovian transition structure $\mathbb{P}(z_t \mid z_{t-1})$ with state-specific self-transition bias through a sticky parameter $\kappa_k$, encouraging temporal persistence.

While the inferred regimes originate from a dynamic probabilistic process, in practice, the posterior segmentation $\{z_t\}$ often partitions the time series into contiguous blocks of statistically homogeneous observations. These partitions behave similarly to clusters when the posterior transition matrix exhibits high self-transition probabilities and when emission distributions across regimes are well-separated. Specifically, if $\mathbb{E}[\kappa_k]$ is large for most $k$ and the Kullback-Leibler divergence between emission distributions,
\[
D_{\mathrm{KL}}\left( \mathcal{N}(\mu_i, \sigma^2) \, \| \, \mathcal{N}(\mu_j, \sigma^2) \right) = \frac{(\mu_i - \mu_j)^2}{2\sigma^2},
\]
is large for most pairs $(i, j)$, then the resulting regime assignments exhibit cluster-like behavior.

Despite this resemblance, it is important to emphasize that regimes are not equivalent to clusters in a classical i.i.d. mixture model. Clusters are defined purely based on feature similarity, typically via marginal likelihood maximization in a mixture model $\mathbb{P}(y_t) = \sum_{k=1}^K \pi_k \mathbb{P}(y_t \mid \theta_k)$. In contrast, regimes arise from a joint distribution over state sequences and transitions:
\[
\mathbb{P}(y_{1:T}, z_{1:T}) = \prod_{t=1}^T \mathbb{P}(y_t \mid z_t, \theta_{z_t}) \cdot \mathbb{P}(z_t \mid z_{t-1}, \pi),
\]
where $\pi$ denotes the transition matrix drawn from a hierarchical Dirichlet process. This temporal structure introduces dependencies not present in clustering models and can capture subtle nonstationarities that static clustering cannot.

In the context of regression, we leverage regime assignments $\{z_t\}$ as conditionally informative covariates: that is, we train a distinct regression model $f_k(x_t)$ for each regime $k$, assuming that the mapping from inputs $x_t$ to outputs $P_t$ varies with latent state. Although this effectively treats regime assignments as discrete condition labels, they remain grounded in the temporal generative structure of the HDP-HMM. Hence, even when the regression task does not exploit transition dynamics explicitly, the segmentation provided by the DS-HDP-HMM provides a principled, data-driven method of partitioning the data based on latent temporal regimes rather than arbitrary clusters.

\subsection{Input–Regime Association}\label{ss:input_regime_association}

Having inferred regime labels $\{z_t\}$ from the output, we aim to train a classifier that maps input vectors $x_t$ to their associated regime labels $z_t$. Formally, we construct a labeled dataset:

\[
\mathcal{D}_{\text{label}} = \{(x_t, z_t)\}_{t=1}^T
\]

This association is performed index-wise: the regime label $z_t$ inferred from $y_t$ is assigned to the corresponding $x_t$. Since $\{x_t\}$ was not used during the inference of $\{z_t\}$ — the labeling of inputs is performed post hoc. In real-world prediction tasks, the test output $P_t^{\text{test}}$ is unavailable at inference time. Therefore, the ability to infer $z_t$ from $x_t$ alone is not just valid but \emph{essential} for out-of-sample generalization. The regime label $z_t$ can be interpreted as a hidden state influencing the generative process $\mathbb{P}(P_t \mid x_t, z_t)$. Hence, learning the mapping $x_t \mapsto z_t$ constitutes a supervised approximation to the posterior $\mathbb{P}(z_t \mid x_t)$, consistent with the principles of latent-variable modeling.

To predict the regime labels from input data, we train a Multi-Layer Perceptron (MLP) classifier on the labeled dataset $\mathcal{D}_{\text{label}}$. The classifier is evaluated on a held-out test set. The classification performance for one randomly selected day (1 Jan, 2023), summarized in Table~\ref{tab:clf}, demonstrates strong discriminative capacity despite class imbalance.

\begin{table}[h!]
\centering
\caption{Classification Report on Test Set}
\label{tab:clf}
\begin{tabular}{lcccc}
\toprule
Regime & Precision & Recall & F1-score & Support \\
\midrule
0 & 0.99 & 0.89 & 0.99 & 200 \\
1 & 0.91 & 0.82 & 0.86 & 62 \\
2 & 0.69 & 0.75 & 0.72 & 24 \\
3 & 0.83 & 0.92 & 0.91 & 5 \\
\bottomrule
\end{tabular}
\end{table}


To independently verify that the regimes are reflected in the input space, we compute both Mutual Information (MI) and ANOVA F-statistics between each input feature $x_t^{(j)}$ and the regime label $z_t$ for the same date 1 Jan, 2023. The top three features by mutual information are $I(x^{(0)}; z) = 0.8837, I(x^{(19)}; z) = 0.6788,I(x^{(43)}; z) = 0.6750$. Similarly the top three features by ANOVA F-statistic are $F_{23} = 2628.67,\quad p < 10^{-300}$, $F_{22} = 2582.84,\quad p < 10^{-300}$ and $F_{21} = 2566.55,\quad p < 10^{-300}$. The high scores across both metrics confirm that regime structure is statistically embedded in the input space and justifies the supervised learning approach.

\subsection{Geometric Structure of Regimes in Input Space}

To visualize the regime separability in $\mathbb{R}^d$, we perform dimensionality reduction on the standardized inputs $\{\mathbf{x}_{t}\}$ using t-distributed stochastic neighboring embedding (t-SNE) which a nonlinear embedding that reveals manifold structure and priciple component analysis (PCA) which is a linear projection onto the top three principal components.

\begin{figure}[ht!]
     \centering
    \begin{subfigure}{0.5\linewidth}
        \includegraphics[height = 5cm, width = 8cm]{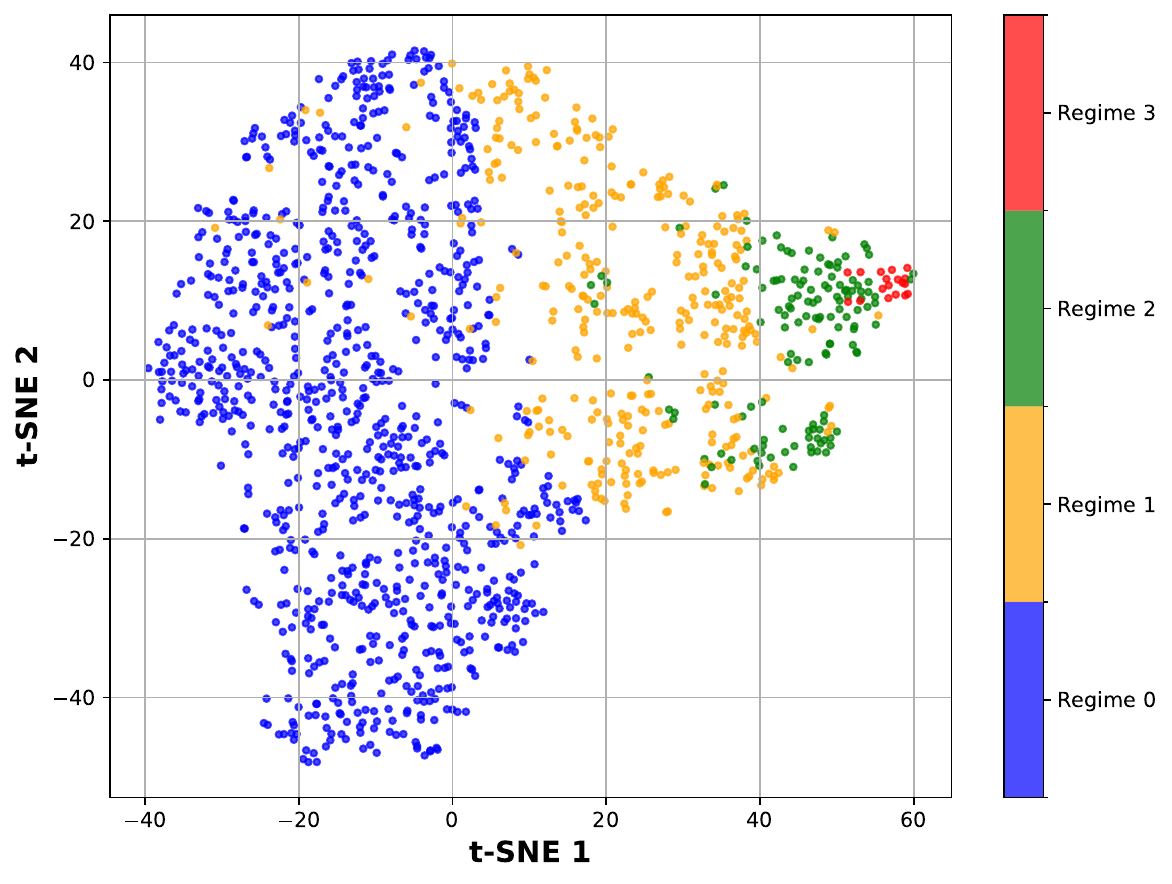}
        \caption{}
        \label{fig:tsne}
    \end{subfigure}
    \begin{subfigure}{0.5\linewidth}
        \includegraphics[height = 5cm, width = 8cm]{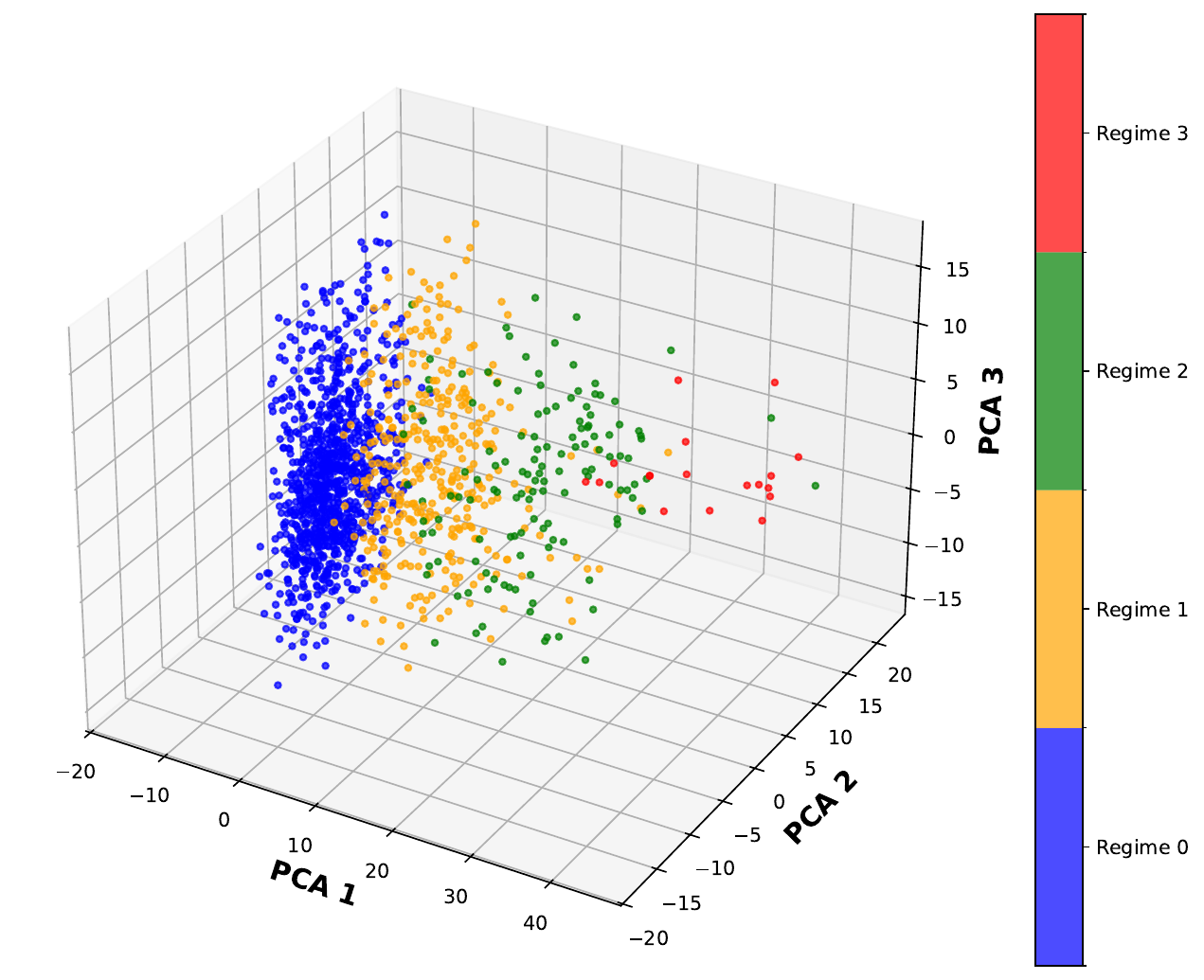}
        \caption{}
        \label{fig:pca}
    \end{subfigure}
    \caption{\centering{Visualization of Regime Separability for 1 Jan, 2023 (a) 2D t-SNE embedding of training input 
    $\{\mathbf{x}_{t}\}$ colored by regime labels $z_t$ and (b) 3D PCA projection of standardized input vectors colored by regime labels.}}
    \label{fig:regime_separability}
\end{figure}

Both visualizations in the Figure \ref{fig:tsne} and \ref{fig:pca} reveal that the regimes inferred from output-only segmentation manifest as coherent clusters in the input space. This provides further empirical evidence that the discovered regimes correspond to meaningful, distinguishable patterns in the system's state as captured by input features. We have demonstrated a mathematically sound pipeline for (i) inferring regimes from output data, (ii) associating these regimes to inputs without leakage, and (iii) training a regime classifier with strong empirical performance. Statistical dependence analysis and geometric visualizations validate that the inferred regimes are embedded in the input space, justifying regime-based modeling for predictive tasks.

To evaluate the consistency and quality of regime separability across datasets, we computed four key aggregate metrics: classification accuracy, weighted F1 score, ANOVA F-statistic, and mean mutual information between features and regimes. The distribution of these metrics across datasets is illustrated in Figure \ref{fig:input_regimes_assignment_analysis}. Notably, the classification accuracy and F1 scores shown in the Figure \ref{subfig:accuracy_box_plot} and \ref{subfig:f1_Weighted_box_plot} respectively, are consistently high across the majority of datasets (median $\approx 0.9$), indicating strong discriminative performance of the learned models.

\begin{figure}[ht!]
    \centering
    \begin{subfigure}{0.5\linewidth}
        \includegraphics[height = 5.5cm, width = 7cm]{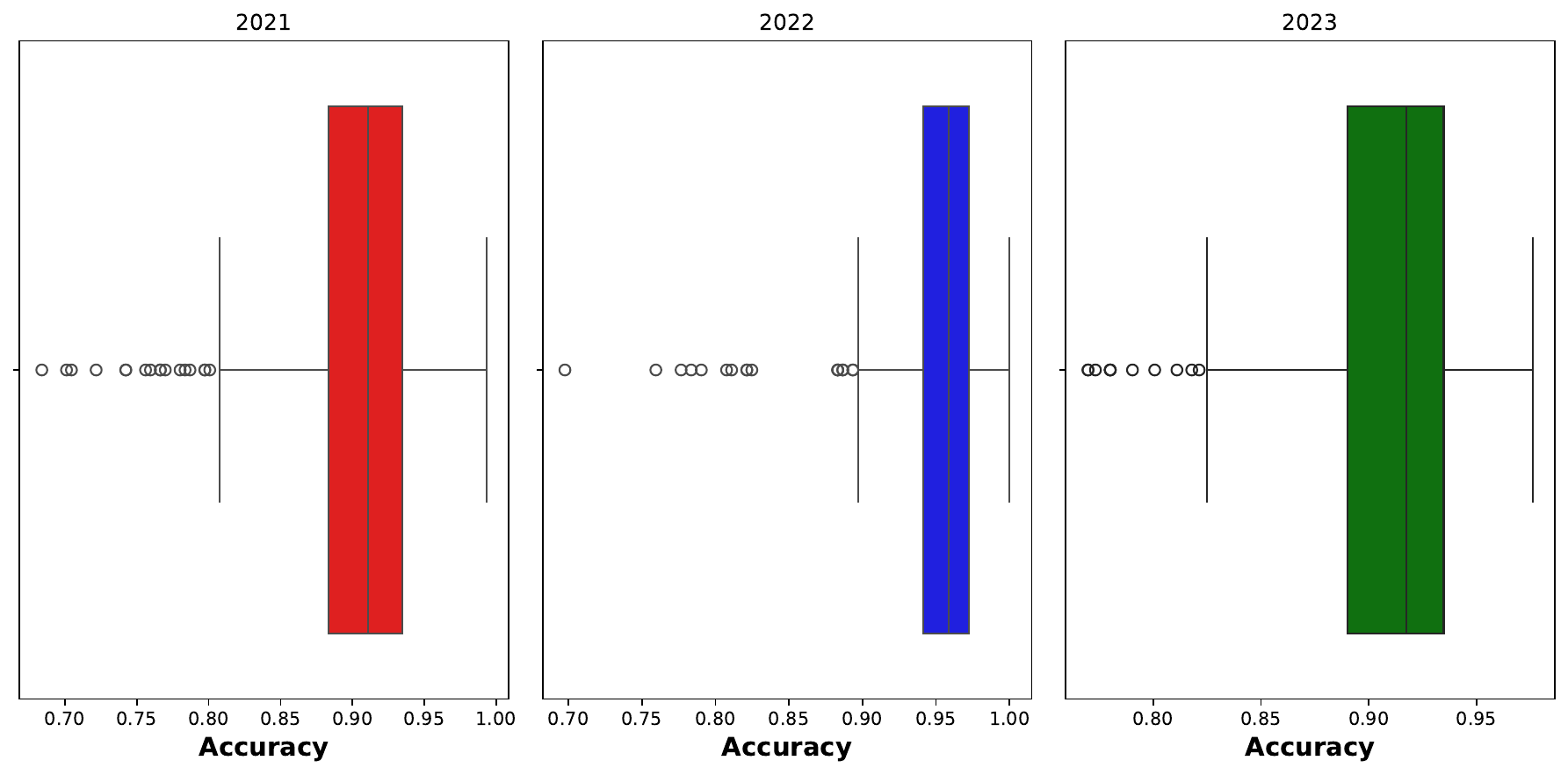}
        \caption{}
        \label{subfig:accuracy_box_plot}
    \end{subfigure}
    \begin{subfigure}{0.45\linewidth}
        \includegraphics[height = 5.5cm, width = 7cm]{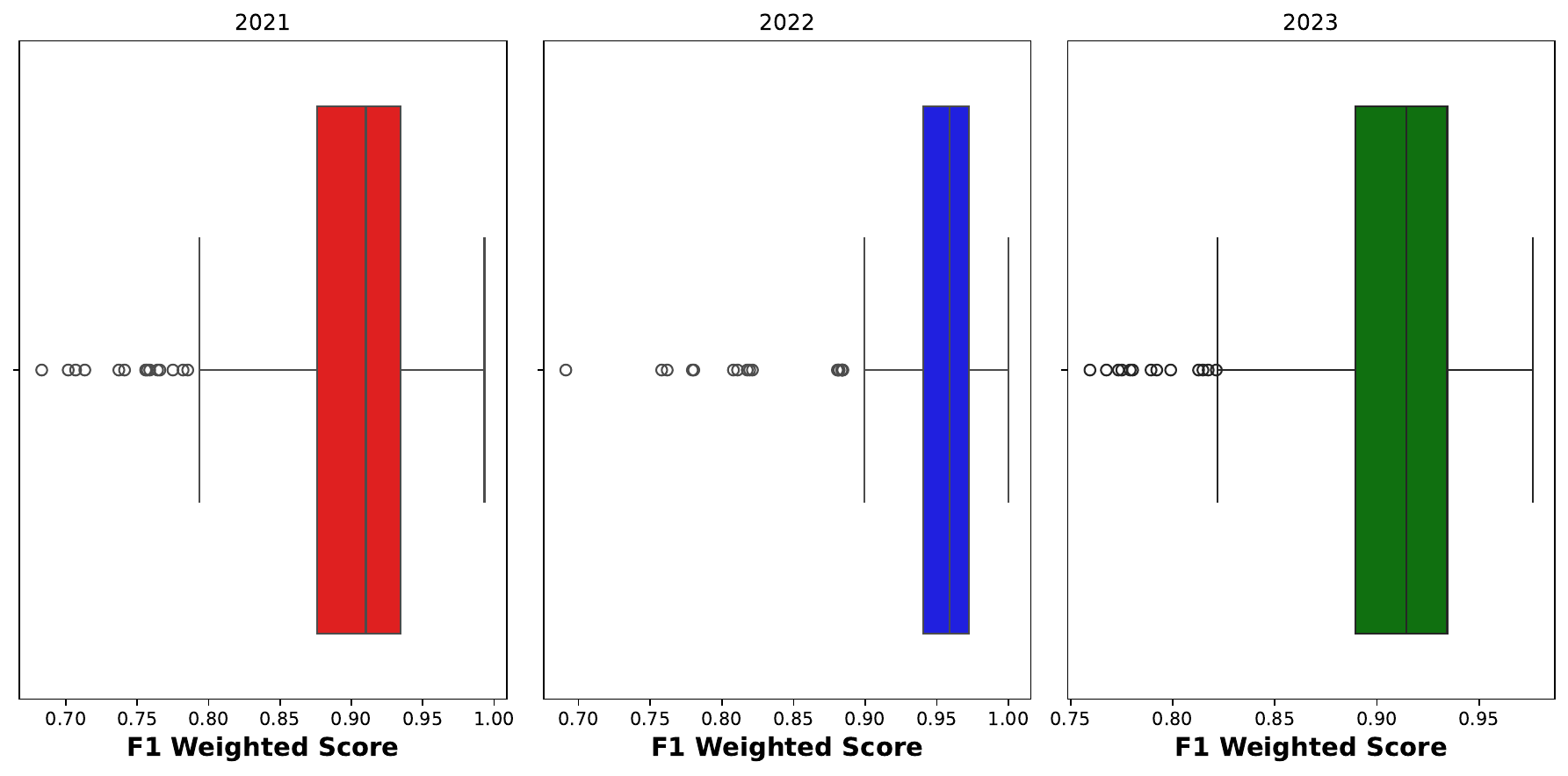}
        \caption{}
        \label{subfig:f1_Weighted_box_plot}
    \end{subfigure}
    \begin{subfigure}{0.5\linewidth}
        \includegraphics[height = 5.5cm, width = 7cm]{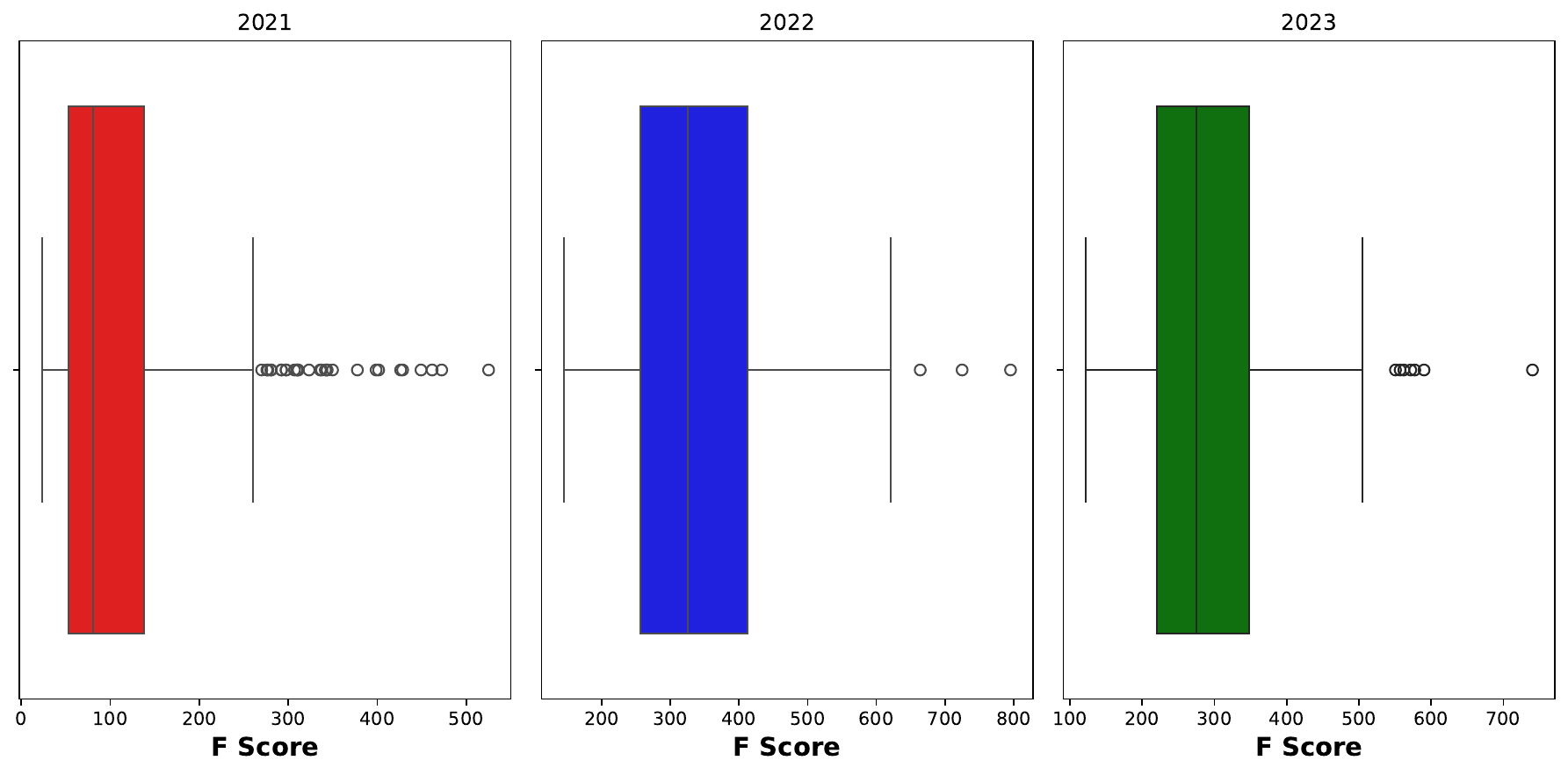}
        \caption{}
        \label{subfig:f_Score_box_plot}
    \end{subfigure}
    \begin{subfigure}{0.45\linewidth}
        \includegraphics[height = 5.5cm, width = 7cm]{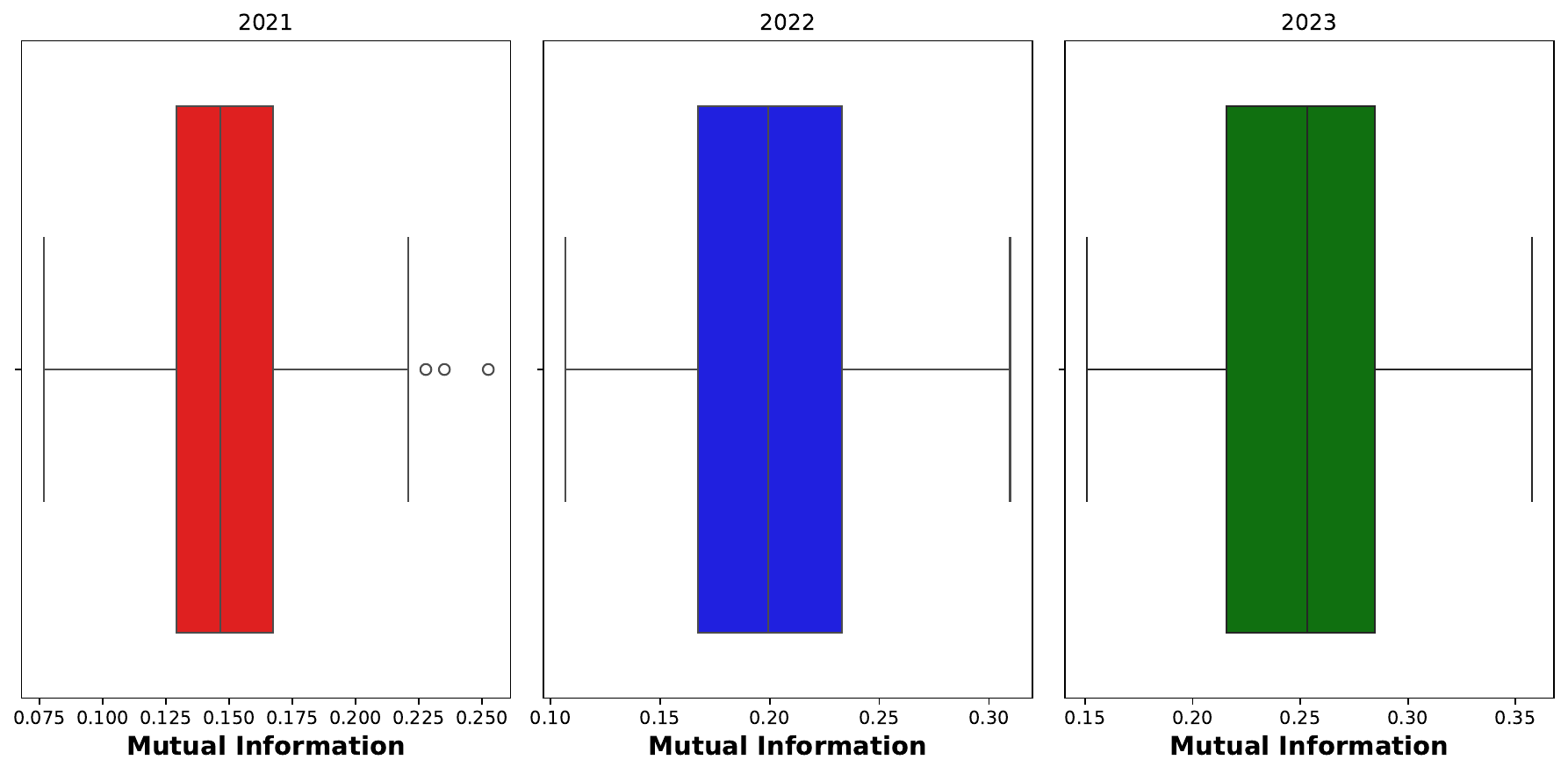}
        \caption{}
        \label{subfig:mutual_information_box_plot}
    \end{subfigure}
    \caption{\centering{Analysis of Assignment of Regimes to Training Input Data for Prediction of Prices of Year 2021,2022 and 2023  Via (a) Accuracy Across Dataset (b) Weighted Average F1 Score (c) ANOVA F-Statistic Mean and (d) Mutual Information}}
    \label{fig:input_regimes_assignment_analysis}
\end{figure}
From a statistical dependency perspective, the ANOVA F-statistic, shown in the Figure \ref{subfig:f_Score_box_plot} exhibits large values (mean $\approx 300$), suggesting strong linear separability between regimes along multiple input dimensions. In contrast, the mean mutual information (MI) per feature, shown in the Figure \ref{subfig:mutual_information_box_plot} is comparatively low (median $\approx 0.2$). This discrepancy is not contradictory but rather expected: mutual information is a marginal measure and does not capture interactions among features. That is, even if individual features exhibit weak dependence on regime labels, their joint distribution can still encode highly separable structure. This is a classical manifestation of distributed or synergistic representations, which are effectively exploited by multivariate models such as neural networks but not captured by univariate statistics. Therefore, the combination of low per-feature MI and high classification accuracy highlights that regime identity is embedded in complex, possibly nonlinear feature interactions—further justifying the use of expressive classifiers in our analysis.

\section{Neural Processes for Regime-Specific Forecasting}\label{sec:neural_process}
Once the latent market regimes are robustly identified by the DS-HDP-HMM, the next crucial step is to perform accurate and uncertainty-aware forecasting within each detected regime. Traditional forecasting models often struggle to adapt to the distinct statistical properties and functional relationships that characterize different regimes. To address this, we employ conditional neural processes (CNPs), a class of meta-learning models that learn a distribution over functions. This allows them to generalize across different tasks or, in our case, different market regimes, providing probabilistic predictions with inherent uncertainty quantification \cite{DBLP:journals/corr/abs-1807-01622}.

\subsection{Conditional Neural Processes}
While standard Neural Processes (NPs) provide a powerful framework for distributional meta-learning, Conditional Neural Processes (CNPs) represent a more specific instantiation in which the global latent variable is omitted, yielding a deterministic yet probabilistic model that directly learns the conditional distribution $\mathbb{P}(P_{\mathbf{x}_{t}}^* \mid \mathbf{x}_{t}^*, \mathcal{C})$ via learned function approximators. The CNP framework is particularly suitable for structured prediction in regime-specific financial forecasting due to its ability to condition explicitly on observed context data while preserving permutation invariance and scalability.

Let $\mathcal{C} = \{(\mathbf{x}_{t}, P_{\mathbf{x}_{t}})\}_{i=1}^{N_C}$ denote the context set and let $\mathcal{T} = \{\mathbf{x}_j^*\}_{j=1}^{N_T}$ be a set of target inputs for which predictions are required. The goal of a CNP is to model the conditional predictive distribution $\mathbb{P}(\{P_{\mathbf{x}_j}^*\}_{j=1}^{N_T} \mid \mathcal{T}, \mathcal{C})$.

The architecture of a Conditional Neural Process can be formalized as follows. Define a function $h_\theta : \mathcal{X} \times \mathcal{Y} \to \mathbb{R}^d$ parameterized by a neural network with weights $\theta$, which maps each input-output pair $(\mathbf{x}_i, P_{\mathbf{x}_{i}}) \in \mathcal{C}$ into a representation vector $r_i = h_\theta(\mathbf{x}_i, P_{\mathbf{x}_{i}})$. Permutation invariance across the context set is ensured via an aggregation function $\rho: (\mathbb{R}^d)^{N_C} \to \mathbb{R}^d$ such that the aggregated representation is $r = \rho(r_1, \ldots, r_{N_C})$. A standard choice for $\rho$ is the element-wise mean: $r = \frac{1}{N_C} \sum_{i=1}^{N_C} h_\theta(\mathbf{x}_i, P_{\mathbf{x}_{i}})$.

Once the global context representation $r$ is obtained, each target input $\mathbf{x}_{j}^* \in \mathcal{T}$ is paired with $r$ and passed through a decoder network $g_\phi : \mathcal{X} \times \mathbb{R}^d \to \mathcal{Y}$ parameterized by weights $\phi$, to produce the predictive distribution parameters. That is,
\[
(\mu_j^*, \sigma_j^{*2}) = g_\phi(x_j^*, r), \quad \text{for each } j \in \{1, \ldots, N_T\},
\]
which implies a predictive distribution of the form $\mathbb{P}(P_{\mathbf{x}_{j}}^* \mid x_j^*, \mathcal{C}) = \mathcal{N}(P_{\mathbf{x}_{j}}^* \mid \mu_j^*, \sigma_j^{*2})$. The conditional independence assumption is enforced across target points given the shared representation $r$.

The learning objective is to minimize the expected negative log-likelihood of the targets under the predictive distribution. For a dataset of tasks $\mathcal{D} = \{(\mathcal{C}_k, \mathcal{T}_k, \{P_{\mathbf{x}_{j}}^{*(k)}\})\}_{k=1}^{K}$ drawn from a distribution over tasks (e.g., regime-specific financial dynamics), the loss function is given by
\[
\mathcal{L}(\theta, \phi) = - \frac{1}{K} \sum_{k=1}^{K} \sum_{j=1}^{N_T^{(k)}} \log \mathbb{P}(P_{\mathbf{x}_{j}}^{*(k)} \mid \mathbf{x}_j^{*(k)}, \mathcal{C}_k),
\]
where each $\mathbb{P}(P_{\mathbf{x}_{j}}^{*(k)} \mid \mathbf{x}_j^{*(k)}, \mathcal{C}_k)$ is computed as a Gaussian likelihood with parameters given by $g_\phi(x_j^{*(k)}, r^{(k)})$ and $r^{(k)} = \rho\left( \{h_\theta(\mathbf{x}_i^{(k)}, P_{\mathbf{x_{i}}}^{(k)})\}_{i=1}^{N_C^{(k)}} \right)$.

This deterministic conditioning structure, as opposed to the latent variable modeling in full NPs or ANPs (Attentive Neural Processes), makes CNPs more efficient and interpretable in scenarios where the functional variation can be sufficiently captured by the context set alone. This is particularly advantageous in financial time series forecasting, where contextual data within a market regime can effectively guide the predictive behavior without requiring additional stochastic global priors.

Moreover, due to the Gaussian output assumption, the model inherently provides a quantification of predictive uncertainty through the variance $\sigma_j^{*2}$, which increases in extrapolative or low-context regions. This capability is essential for risk-sensitive financial decision-making, as it allows for probabilistic reasoning about future market states rather than relying on deterministic point forecasts.

Conditional neural processes offer a theoretically grounded, data-efficient, and uncertainty-aware approach to modeling conditional distributions over functions. Their capacity to encode permutation-invariant representations of context data and their computational tractability make them ideal candidates for regime-specific function approximation in non-stationary financial environments.

\subsection{Integration with DS-HDP-HMM}
Our framework integrates neural processes with the DS-HDP-HMM in a sequential manner. After the DS-HDP-HMM has identified the latent regimes from the historical electricity price data, we train a separate neural process for each detected regime.
\begin{itemize}
    \item \textbf{Regime-Specific Data Partitioning:} The historical electricity price time series is partitioned based on the inferred regime assignments from the DS-HDP-HMM. For each regime $r$, we collect all data points $(x_t, y_t)$ where the DS-HDP-HMM assigned $z_t = r$. This forms the regime-specific dataset $\mathcal{D}_r$.
    \item \textbf{Regime-Specific NP Training:} For each regime $r$, a dedicated neural process is trained exclusively on $\mathcal{D}_r$. During training, context sets $\mathcal{C}_r$ are sampled from $\mathcal{D}_r$, and the NP learns to infer the conditional distribution $\mathbb{P}(P_{\mathbf{x}}^* \mid \mathbf{x}^*, \mathcal{C}_r)$ that characterizes the price dynamics within that specific regime. This ensures that each NP specializes in the unique statistical properties and functional relationships of its assigned regime, avoiding the "mode averaging" problem encountered by global models.
    \item \textbf{Probabilistic Forecasting:} When forecasting future electricity prices, the DS-HDP-HMM first predicts the most probable regime for the upcoming period. Then, the corresponding regime-specific neural process is queried with relevant historical context points from that regime to generate a probabilistic forecast (e.g., a mean prediction and an associated variance) for the future price. This approach ensures that forecasts are tailored to the prevailing market conditions, providing more accurate and reliable predictions with quantified uncertainty.
\end{itemize}
This integration leverages the strengths of both models: the DS-HDP-HMM provides a flexible and data-driven mechanism for discovering the underlying market states, while the neural processes offer adaptive, probabilistic forecasting capabilities within each identified state, leading to a comprehensive and robust electricity price prediction framework.

\section{Probabilistic Aggregation Mechanism}\label{sec:aggreate_mechanism}

In a dynamic environment where multiple regimes can influence future electricity prices, a single regime-specific forecast, while valuable, may not capture the full spectrum of possibilities. To provide a more robust and comprehensive prediction, we employ a probabilistic aggregation mechanism that combines the forecasts from each regime-specific neural process, weighted by the likelihood of each regime occurring. This approach accounts for the inherent uncertainty in regime assignment and provides a more holistic predictive distribution. Let $K$ be the total number of distinct regimes identified by the DS-HDP-HMM. For a given future time step $t^*$, each regime-specific neural process $NP_k$ (trained on data from regime $k$) provides a conditional predictive distribution $\mathbb{P}(P_{\mathbf{x}^*} \mid \mathbf{x}_{t^*}, \mathcal{C}_k, \text{Regime } k)$, which we can approximate as a Gaussian distribution $\mathcal{N}(\mu_{t^*}^{(k)}, (\sigma_{t^*}^{(k)})^2)$, where $\mu_{t^*}^{(k)}$ is the mean forecast and $(\sigma_{t^*}^{(k)})^2$ is the predictive variance from $NP_k$ for time $t^*$, given context $\mathcal{C}_k$. The DS-HDP-HMM also provides the posterior probability of being in each regime $k$ at time $t^*$, denoted as $\mathbb{P}(\text{Regime } k \mid \text{Data})$. This probability can be derived from the transition probabilities and the observed data history. Let $w_k = \mathbb{P}(\text{Regime } k \mid \text{Data})$ be the weight associated with regime $k$, such that $\sum_{k=1}^K w_k = 1$. The compution of the weights is explained in the next sub-section.These weights effectively represent the confidence in each regime being active at time $t^*$. The aggregated predictive distribution for $y_{t^*}$ is then a mixture of Gaussians, given by:
$$\mathbb{P}(P_{\mathbf{x}^*} \mid \mathbf{x}_{t^*}, \text{Data}) = \sum_{k=1}^K w_k \mathcal{N}(\mu_{t^*}^{(k)}, (\sigma_{t^*}^{(k)})^2)$$
From this mixture distribution, we can derive the overall aggregated mean forecast $\hat{y}_{t^*}$ and the total predictive variance $(\hat{\sigma}_{t^*})^2$. The aggregated mean forecast is simply the weighted average of the regime-specific means:
$$\hat{P}_{\mathbf{x}^*} = \sum_{k=1}^K w_k \mu_{t^*}^{(k)}$$
The total predictive variance, incorporating both the within-regime uncertainty and the uncertainty across regimes, is given by the law of total variance:
$$(\hat{\sigma}_{t^*})^2 = \sum_{k=1}^K w_k \left( (\sigma_{t^*}^{(k)})^2 + (\mu_{t^*}^{(k)} - \hat{P}_{\mathbf{x}^*})^2 \right)$$
This formulation ensures that the final forecast not only provides a point estimate but also a comprehensive measure of uncertainty that accounts for both the intrinsic variability within each regime and the uncertainty associated with which regime will be active. This robust, uncertainty-aware prediction is critical for downstream applications such as risk management and optimal decision-making in electricity markets.

\subsection{Weight Computation}\label{ss:weight_comp}

To determine the degree to which a new observation $\mathbf{x}_{\text{new}} \in \mathbb{R}^d$ is associated with each of the $R$ learned regimes $\{r_1, \ldots, r_R\}$, we utilize a probabilistic classifier trained to map input features $\mathbf{x}_t$ to their corresponding regimes $z_t \in \{1, \ldots, R\}$. Specifically, we employ a multilayer perceptron (MLP) classifier trained on the regime-labeled training set $\{(\mathbf{x}_t, z_t)\}_{t=1}^T$.

Once trained, the MLP provides a posterior probability distribution over the regime labels for any input $\mathbf{x}_{\text{new}}$ via the \texttt{predict\_proba} function. Formally, for each regime $r$, we obtain:
\[
    w_r(\mathbf{x}_{\text{new}}) = \mathbb{P}(z = r \mid \mathbf{x}_{\text{new}}),
\]
where the weights $\{w_r\}_{r=1}^R$ satisfy $w_r \in [0,1]$ and $\sum_{r=1}^R w_r = 1$. These probabilities are directly interpretable as soft membership scores indicating the classifier's confidence in the assignment of $\mathbf{x}_{\text{new}}$ to each regime.

This approach is mathematically robust as it avoids comparing likelihoods from separately estimated generative models—which are not directly comparable due to differing normalization constants, data scales, and priors. Instead, the discriminative training of the MLP ensures that the posterior probabilities reflect meaningful regime distinctions learned from data. As such, this method provides a stable, interpretable, and computationally efficient mechanism for regime attribution that integrates seamlessly into a probabilistic decision-making pipeline.

\section{Operational Strategy with Battery and Multi-Criteria Optimization}\label{sec:operational_strategies}
Given the probabilistic forecasts of electricity prices, specifically the predicted price levels $\hat{P}_{\text{new}} \in \mathbb{R}^{24}$, and their associated uncertainty estimates, a crucial application involves using these predictions into real-world operational strategies. For a battery storage system operated over a planning horizon (e.g., the next 24 hours), the decision on charging/discharging schedules must balance several objectives. These objectives include maximizing forecasted price level for profit, minimizing risk by accounting for price volatility, ensuring operational feasibility within storage limits and efficiency, and considering market cost asymmetry related to over/under prediction penalties. The optimization problem involves the following variables and parameters: $\hat{p}_t$ represents the predicted electricity price at hour $t$, obtained from our regime-switching neural process framework. $\sigma_t$ is the uncertainty estimate (e.g., standard deviation) of the electricity price forecast at hour $t$, also derived from the neural process or from the benchmark models. $x_t \in \mathbb{R}$ denotes the net charging/discharging decision at hour $t$, where a positive $x_t$ indicates charging, and a negative $x_t$ indicates discharging. $x_t^+ \geq 0$ is the amount of electricity charged into the battery at hour $t$, while $x_t^- \geq 0$ is the amount of electricity discharged from the battery at hour $t$. $s_t$ refers to the state of charge (SoC) of the battery at the end of hour $t$. $\eta_c \in (0, 1]$ signifies the charging efficiency of the battery, representing the fraction of energy input successfully stored. $\eta_d \in (0, 1]$ is the discharging efficiency, representing the fraction of stored energy that can be retrieved. $C_{\text{max}}$ denotes the maximum storage capacity of the battery in energy units (e.g., MWh). $x_{\min}$ represents the minimum allowable net charging rate (e.g., maximum discharge rate), and $x_{\max}$ is the maximum allowable net charging rate. Finally, $\lambda \geq 0$ is a risk aversion parameter, which quantifies the decision-maker's trade-off between maximizing expected profit and minimizing exposure to forecast uncertainty; a higher $\lambda$ implies greater risk aversion. Below we show the application of predicted price for different trading strategies, i.e. we formulate different optimization problems with different objective functions and constraints representing different market conditions. 

\paragraph{Case I: Trading Strategy with Predicted Price Only}
The operational strategy focuses purely on maximizing economic profit through price arbitrage, is formulated as a linear programming problem. The objective function is defined in the optimization problem \ref{no_uncer_no_exogebous_var}  aims to maximize the difference between revenue from selling discharged energy and cost from buying charged energy, adjusted by battery efficiencies. The constraints ensure physical feasibility: the initial state of charge $s_1$ is fixed at $s_{\text{init}}$; the state of charge evolves according to $s_{t+1} = s_t + \eta_c x_t^+ - \frac{1}{\eta_d} x_t^-$ for all $t \in \{1, \dots, T\}$; the state of charge must remain between $0$ and $C_{\text{max}}$ for all $t \in \{1, \dots, T+1\}$; a minimum final state of charge, $s_{T+1} \ge s_{\text{final}}$, is enforced for battery health; individual charging and discharging power limits are respected, $0 \le x_t^+ \le P_{\text{charge,max}}$ and $0 \le x_t^- \le P_{\text{discharge,max}}$; and finally, the net power exchanged with the grid must comply with $x_{\min} \le x_t^- - x_t^+ \le x_{\max}$ for all $t$. This model dictates the optimal hourly charging and discharging schedule to capitalize on predicted price differentials, thereby demonstrating the direct economic value of accurate price forecasts for maximizing BESS profitability.

\begin{align}\label{no_uncer_no_exogebous_var}
\max_{\{x_t^+, x_t^-, s_t\}_{t=1}^{24}} \quad & \sum_{t=1}^{24} \left( \hat{p}_t x_t^-\eta_{d} - \hat{p}_t \frac{x_t^+}{\eta_{c}} \right) \\
\text{subject to} \quad & s_1 = s_{\text{init}}, \nonumber\\
& s_{t+1} = s_t + \eta_c x_t^+ - \frac{1}{\eta_d} x_t^-, \quad \forall t = 1, \dots, 24, \nonumber\\
& 0 \leq s_t \leq C_{\text{max}}, \quad \forall t=1, \dots, 25,\nonumber \\
& s_{25} \geq s_{\text{final}}, \nonumber\\
& 0 \leq x_t^+ \leq P_{\text{charge,max}}, \quad \forall t, \nonumber\\
& 0 \leq x_t^- \leq P_{\text{discharge,max}}, \quad \forall t, \nonumber\\
& x_{\min} \leq x_t^- - x_t^+ \leq x_{\max}, \quad \forall t. \nonumber
\end{align}
(Note: The constraint $x_{\min} \leq x_t^- - x_t^+ \leq x_{\max}$ implies that the net power injected into the grid (discharge minus charge) must be within the defined limits. If $x_{\min}$ is negative, it indicates maximum power drawn from the grid. If $x_{\max}$ is positive, it indicates maximum power injected to the grid.)
\hfill\newline
The corresponding perfect foresight model calculates the absolute maximum profit achievable by knowing the true future electricity prices $p_t^{\text{real}}$. For the purpose of comparison, keeping the constraints same, the objective function for perfect foresight  is as follows: the  It shares identical physical and operational constraints with the model-based strategy.

\begin{equation}\label{no_uncer_no_exogebous_var_perfect_foresight}
\max_{\{x_t^+, x_t^-, s_t\}_{t=1}^{24}} \quad  \sum_{t=1}^{24} \left( p_t^{\text{real}} x_t^-\eta_{d} - p_t^{\text{real}} \frac{x_t^+}{\eta_{c}} \right)
\end{equation}

When evaluating the performance of the model for Case I, the actual profit obtained must be calculated using the optimal decisions $(x_t^{*+}, x_t^{*-})$ derived from solving \eqref{no_uncer_no_exogebous_var} and substituting them into the objective function of \eqref{no_uncer_no_exogebous_var_perfect_foresight} (i.e., using $p_t^{\text{real}}$). This actual profit will always be less than or equal to the profit obtained from solving \eqref{no_uncer_no_exogebous_var_perfect_foresight} (the Perfect Foresight profit).

This formulation serves as an upper bound on achievable profit, representing the ideal scenario where future electricity prices are known with absolute certainty. This perfect foresight problem is identical in its structure to the model-based strategy, but critically, it substitutes the predicted electricity prices $\hat{p}_t$ with the true realized (actual) prices $p_t^{\text{real}}$ in its objective function. Formally, the objective becomes $\max_{\{x_t^+, x_t^-, s_t\}_{t=1}^{T}} \sum_{t=1}^{T} \left( p_t^{\text{real}} x_t^-\eta_{d} - p_t^{\text{real}} \frac{x_t^+}{\eta_{c}} \right)$. All operational and physical constraints remain unchanged, including the battery dynamics, SoC limits, power limits, and net power flow constraints. The solution to this perfect foresight problem yields the maximum theoretical profit obtainable under the given physical system and market rules. By comparing the profit achieved by the model-based strategy (using its optimal decisions but evaluated against $p_t^{\text{real}}$) with this perfect foresight profit, the economic value of the price forecasting model, i.e., its ability to approximate optimal decisions under perfect information, can be precisely quantified.

\paragraph{Case II: Trading Strategy with predicted price and the uncertainty associated with the predicted price}

Case II extends the price arbitrage strategy by explicitly incorporating a risk aversion mechanism that penalizes trading actions based on the uncertainty of price forecasts. This strategy, implemented as a linear programming problem, aims to balance profit maximization with a reduction in exposure during periods of high price volatility. The additional terms, $-\lambda_1 \sigma_t x_t^-$ and $-\lambda_2 \sigma_t x_t^+$, in the objective function in problem \ref{with_uncer_no_exogebous_var} introduce a cost for both discharging and charging activities that is proportional to the forecast uncertainty and the volume of energy traded. This encourages the battery to adopt less aggressive trading patterns when price predictions are less reliable. All physical and operational constraints, including battery dynamics, SoC limits, power limits, and net power flow limits, remain identical to Case I, ensuring consistency in the battery's physical operation while its economic behavior is modulated by risk.

\begin{align}\label{with_uncer_no_exogebous_var}
\max_{\{x_t^+, x_t^-, s_t\}_{t=1}^{24}} \quad & \sum_{t=1}^{24} \left( \hat{p}_t x_t^-\eta_{d} - \hat{p}_t \frac{x_t^+}{\eta_{c}} - \lambda_1 \sigma_t x_t^- - \lambda_2 \sigma_t x_t^+ \right) \\
\text{subject to} \quad & s_1 = s_{\text{init}}, \nonumber\\
& s_{t+1} = s_t + \eta_c x_t^+ - \frac{1}{\eta_d} x_t^-, \quad \forall t = 1, \dots, 24, \nonumber\\
& 0 \leq s_t \leq C_{\text{max}}, \quad \forall t=1, \dots, 25,\nonumber \\
& s_{25} \geq s_{\text{final}}, \nonumber\\
& 0 \leq x_t^+ \leq P_{\text{charge,max}}, \quad \forall t, \nonumber\\
& 0 \leq x_t^- \leq P_{\text{discharge,max}}, \quad \forall t, \nonumber\\
& x_{\min} \leq x_t^- - x_t^+ \leq x_{\max}, \quad \forall t. \nonumber
\end{align}
The penalty term $-\lambda_1 \sigma_t x_t^- - \lambda_2 \sigma_t x_t^+$ explicitly penalizes higher volumes of selling ($x_t^-$) and buying ($x_t^+$) when forecast uncertainty $\sigma_t$ is high, adjusted by risk aversion parameters $\lambda_1$ and $\lambda_2$. This encourages less aggressive trading in uncertain periods.

\hfill\newline
The corresponding foresight for Case II is identical to that of Case I, as the system's inherent maximum profit capability is independent of our predictive uncertainty or risk aversion with th objective function as follows:

\begin{equation}\label{with_uncer_no_exogebous_var_perfect_foresight}
\max_{\{x_t^+, x_t^-, s_t\}_{t=1}^{24}} \quad  \sum_{t=1}^{24} \left( p_t^{\text{real}} x_t^-\eta_{d} - p_t^{\text{real}} \frac{x_t^+}{\eta_{c}} \right)
\end{equation}
When evaluating the performance of the model for Case II, the actual profit must be calculated using the optimal decisions $(x_t^{*+}, x_t^{*-})$ derived from solving \eqref{with_uncer_no_exogebous_var} and substituting them into the objective function of \eqref{with_uncer_no_exogebous_var_perfect_foresight} (i.e., using $p_t^{\text{real}}$ and {omitting the penalty terms}). This actual profit will always be less than or equal to the Perfect Foresight profit.

This foresight strategy for Case II, in contrast to its model-based counterpart, fundamentally disregards any notion of forecast uncertainty or risk aversion, as it operates under the assumption of complete knowledge of future realized prices. Therefore, its objective function is precisely the same as that for Case I's perfect foresight: $\max_{\{x_t^+, x_t^-, s_t\}_{t=1}^{T}} \sum_{t=1}^{T} \left( p_t^{\text{real}} x_t^-\eta_{d} - p_t^{\text{real}} \frac{x_t^+}{\eta_{c}} \right)$. The terms involving $\sigma_t$ are entirely omitted from the perfect foresight objective, as risk aversion is a policy choice made by an agent facing uncertainty, not an inherent property of the maximal profit achievable with perfect information. The constraints, including battery dynamics, SoC bounds, and power limits, remain unchanged. This perfect foresight formulation provides the absolute best achievable profit benchmark, allowing for an evaluation of how much profit is sacrificed by the model-based strategy due to its consideration of (and aversion to) forecast uncertainty, thus highlighting the trade-off between risk mitigation and maximal profit.

\paragraph{Case III: Trading Strategy with predicted price, the uncertainty associated with the predicted price, the solar production and the load}

Case III presents a more sophisticated operational strategy where the battery not only seeks profit from price arbitrage and manages risk from price uncertainty, but also incorporates objectives related to grid support and renewable energy integration. In the objective function of problem \ref{with_uncer_with_exogebous_var}, $\hat{r}_t \in R$ represents the simulated hourly load and $\hat{g}_t \in R_{\ge 0}$ represents the simulated solar generation at hour $t$. For the simulation of load and solar energy production we have used the Python Package PVLIB \cite{article_pvlib}. The parameter $\lambda \ge 0$ is a unified risk aversion coefficient for price uncertainty, while $\mu, \gamma \ge 0$ are weighting parameters for the grid support policies. Specifically, the term $-\mu \hat{r}_t x_t^+$ penalizes charging during periods of high predicted residual load, aiming to reduce strain on conventional generators, while $+\gamma \hat{g}_t x_t^+$ incentivizes charging when renewable generation is abundant, promoting better utilization of clean energy. All physical and operational constraints on battery dynamics, state of charge, power limits, and net power flow are identical to the previous cases, ensuring realistic operation within the combined objective of profit, risk, and grid services.

\begin{align}\label{with_uncer_with_exogebous_var}
\max_{\{x_t^+, x_t^-, s_t\}_{t=1}^{24}} \quad & \sum_{t=1}^{24} \left[ \hat{p}_t x_t^- \eta_d - \hat{p}_t \frac{x_t^+}{\eta_c} - \lambda \sigma_t (x_t^+ + x_t^-) - \mu \hat{r}_t x_t^+ + \gamma \hat{g}_t x_t^+ \right] \\
\text{subject to} \quad & s_1 = s_{\text{init}}, \nonumber\\
& s_{t+1} = s_t + \eta_c x_t^+ - \frac{1}{\eta_d} x_t^-, \quad \forall t = 1, \dots, 24, \nonumber\\
& 0 \leq s_t \leq C_{\text{max}}, \quad \forall t=1, \dots, 25,\nonumber \\
& s_{25} \geq s_{\text{final}}, \nonumber\\
& 0 \leq x_t^+ \leq P_{\text{charge,max}}, \quad \forall t, \nonumber\\
& 0 \leq x_t^- \leq P_{\text{discharge,max}}, \quad \forall t, \nonumber\\
& x_{\min} \leq x_t^- - x_t^+ \leq x_{\max}, \quad \forall t. \nonumber
\end{align}
The term $-\mu \hat{r}_t x_t^+$ discourages charging during periods of high forecasted residual load $\hat{r}_t$, aiming to reduce grid stress. The term $+\gamma \hat{g}_t x_t^+$ incentivizes charging during periods of high forecasted renewable availability $\hat{g}_t$, supporting renewable integration. These are additional objectives influencing the battery's operational policy.
\hfill\newline
The corresponding perfect foresight model must account for the actual grid conditions ($r_t^{\text{real}}$, $g_t^{\text{real}}$) when calculating the maximum achievable benefit from grid-supportive actions. Constraints remaining same as the \ref{with_uncer_with_exogebous_var}, the objective function is defined as follows:

\begin{equation}\label{with_uncer_with_exogebous_var_pefect_foresight}
\max_{\{x_t^+, x_t^-, s_t\}_{t=1}^{24}} \quad  \sum_{t=1}^{24} \left[ p_{t}^{\text{real}} x_t^- \eta_d - p_{t}^{\text{real}} \frac{x_t^+}{\eta_c} - \mu r_t^{\text{real}} x_t^+ + \gamma g_t^{\text{real}} x_t^+ \right]
\end{equation}
When evaluating the performance of the model for Case III, the actual profit must be calculated using the optimal decisions $(x_t^{*+}, x_t^{*-})$ from solving \eqref{with_uncer_with_exogebous_var}, but using $p_t^{\text{real}}$, $r_t^{\text{real}}$, $g_t^{\text{real}}$ and {omitting the uncertainty penalty terms}. This actual profit will always be less than or equal to the Perfect Foresight profit obtained from solving \eqref{with_uncer_with_exogebous_var_pefect_foresight}.

The perfect foresight strategy for Case III, as a benchmark, reveals the maximum theoretical benefit achievable if all relevant market and grid conditions were known precisely. Its objective function utilizes the actual realized electricity prices $p_t^{\text{real}}$, realized residual load $r_t^{\text{real}}$, and realized renewable availability $g_t^{\text{real}}$. Crucially, the terms related to forecast uncertainty ($\lambda \sigma_t (x_t^+ + x_t^-)$) are absent, as perfect foresight implies no uncertainty to mitigate. The remaining terms are re-evaluated with their realized values, reflecting the true impact of the battery's actions on economic profit and grid stability if perfect information were available. All physical and operational constraints remain consistent with the model-based problem. By comparing the actual profit (calculated using realized data and omitting uncertainty penalties) from the model's decisions against this perfect foresight benchmark, the economic and operational value of the forecasting system and policy choices in integrating grid support objectives can be accurately assessed.

\paragraph{Case IV:Trading Strategy with only predicted price and fixed total load}

Case IV fundamentally shifts the objective from profit maximization to cost minimization, specifically for a battery system serving a predetermined, fixed hourly electricity demand, denoted by $L_t \in R_{\ge 0}$, for each hour $t$. The decision variables $x_t^{\text{gb}} \in \mathbb{R}_{\ge 0}$ represent energy purchased directly from the grid to meet demand, $x_t^{\text{bl}} \in \mathbb{R}_{\ge 0}$ is energy discharged from the battery to serve the load, and $x_t^{\text{bg}} \in \mathbb{R}_{\ge 0}$ is energy purchased from the grid to charge the battery. The cost in the objective function of the problem \ref{with_uncer_no_exogebous_var_fix_load} terms include direct grid purchases for load, cost of charging the battery from the grid, and a penalty for total transaction volume (buying for load, charging, discharging to load) weighted by price uncertainty $\sigma_t$ and risk aversion $\lambda$. The core new constraint is the load satisfaction constraint: $x_t^{\text{grid\_buy}} + x_t^{\text{bl}}\eta_d = L_t$ for all $t$, ensuring that the hourly demand is met either by direct grid purchase or by discharging the battery (adjusted for efficiency). Battery dynamics ($s_{t+1} = s_t + \eta_c x_t^{\text{bg}} - \frac{1}{\eta_d} x_t^{\text{bl}}$), SoC limits ($0 \le s_t \le C_{\text{max}}$), initial and final SoC requirements ($s_1 = s_{\text{init}}$, $s_{T+1} \ge s_{\text{final}}$), and individual battery power limits ($0 \le x_t^{\text{bg}} \le P_{\text{charge,max}}$, $0 \le x_t^{\text{bl}} \le P_{\text{discharge,max}}$) are also enforced. This formulation determines the most cost-effective way to utilize the battery to manage electricity purchases for a given demand profile under predicted prices and uncertainty.

This formulation assumes a known hourly load profile $L_t$ that sums to $L_{\text{total}}$.

\begin{align}\label{with_uncer_no_exogebous_var_fix_load}
\min_{\{x_t^{\text{gb}}, x_t^{\text{bl}}, x_t^{\text{bg}}, s_t\}_{t=1}^{24}} \quad & \sum_{t=1}^{24} \left[ \hat{p}_t x_t^{\text{gb}} + \hat{p}_t x_t^{\text{bg}} + \lambda \sigma_t (x_t^{\text{gb}} + x_t^{\text{bg}} + x_t^{\text{bl}}) \right] \\
\text{subject to} \quad & s_1 = s_{\text{init}}, \nonumber\\
& s_{t+1} = s_t + \eta_c x_t^{\text{bg}} - \frac{1}{\eta_d} x_t^{\text{bl}}, \quad \forall t = 1, \dots, 24, \nonumber\\
& 0 \leq s_t \leq C_{\text{max}}, \quad \forall t=1, \dots, 25,\nonumber \\
& s_{25} \geq s_{\text{final}}, \nonumber\\
& x_t^{\text{gb}} \geq 0, \quad \forall t, \nonumber\\
& 0 \leq x_t^{\text{bg}} \leq P_{\text{charge,max}}, \quad \forall t, \nonumber\\
& 0 \leq x_t^{\text{bl}} \leq P_{\text{discharge,max}}, \quad \forall t, \nonumber\\
& x_t^{\text{gb}} + x_t^{\text{bl}}\eta_d = L_t, \quad \forall t. \nonumber
\end{align}
Here, $L_t$ represents the fixed hourly demand for the household. The term $\hat{p}_t x_t^{\text{gb}}$ is the cost of buying electricity directly from the grid to meet demand. $\hat{p}_t x_t^{\text{bl}}$ is the cost of buying electricity to charge the battery. The penalty term for uncertainty discourages high transaction volumes (buying for load, charging battery, discharging battery to load) when price forecasts are uncertain.

The perfect foresight model calculates the absolute minimum cost to serve the fixed hourly load $L_t$ using actual electricity prices $p_t^{\text{real}}$ . Under the same constraints, the perfect foresight has the following objective function:

\begin{equation}\label{with_uncer_no_exogebous_var_fix_load_perfect_foresight}
\min_{\{x_t^{\text{gb}}, x_t^{\text{bl}}, x_t^{\text{bg}}, s_t\}_{t=1}^{24}} \quad  \sum_{t=1}^{24} \left[ p_t^{\text{real}} x_t^{\text{gb}} + p_t^{\text{real}} x_t^{\text{bg}} \right]
\end{equation}
When evaluating the performance of the model for Case IV, the actual cost incurred must be calculated using the optimal decisions from solving \eqref{with_uncer_no_exogebous_var_fix_load} and substituting them into the objective function of \eqref{with_uncer_no_exogebous_var_fix_load_perfect_foresight} (i.e., using $p_t^{\text{real}}$ and {omitting the uncertainty penalty term}). This actual cost will always be greater than or equal to the Perfect Foresight cost obtained from solving \eqref{with_uncer_no_exogebous_var_fix_load_perfect_foresight}.

The perfect foresight strategy for Case IV serves as the theoretical minimum cost achievable for serving the fixed hourly load $L_t$, assuming perfect knowledge of future electricity prices. The uncertainty penalty terms are omitted, as there is no uncertainty in prices to hedge against in this ideal scenario. The constraints are identical to the model-based problem: $s_1 = s_{\text{init}}$, the battery dynamics $s_{t+1} = s_t + \eta_c x_t^{\text{bg}} - \frac{1}{\eta_d} x_t^{\text{bl}}$, SoC bounds $0 \le s_t \le C_{\text{max}}$ for all $t$, the terminal SoC requirement $s_{T+1} \ge s_{\text{final}}$, battery power limits $0 \le x_t^{\text{bg}} \le P_{\text{charge,max}}$ and $0 \le x_t^{\text{bl}} \le P_{\text{discharge,max}}$, and the load satisfaction constraint $x_t^{\text{gb}} + x_t^{\text{bl}}\eta_d = L_t$. By comparing the actual cost incurred by the model-based strategy (using its optimal decisions but evaluated against $p_t^{\text{real}}$ and without the penalty terms) against this perfect foresight minimum cost, the efficacy of the price forecasting system in reducing operational costs for fixed-load scenarios can be quantifiably assessed.

The numerical evaluation of all aforementioned is explained in the next section.

\section{Numerical Simulation and Empirical Validation}\label{sec:numerical_result}

\subsection{Case Study Setup and Calibration Issues}

To rigorously evaluate the performance of our proposed framework, we conduct a series of comprehensive numerical simulations and empirical validations on real-world electricity market data. We train the conditional neural process for each regime using the python based PYTORCH library. For the prediction of one day in future 4 years of past data is used. Each training input and output are of dimension $248 \times 1$ and $24 \times 1$ which results to the training input of size $1460 \times 248$ and $1460 \times24$. However, the training data is passed through the regime detection which segregates the training data into $r$ number of so called sub-training data where $r$ is the number of regimes detected. Then the conditional neural process models are trained individually using $r$ sub-training data. The prediction for new input is obtained using all $r$ trained models and finally the prediction via each trained model is linearly comined using the weights which are computed using the method mentioned in the Section \ref{ss:weight_comp}. The parameters of the conditional neural process is not fixed a priori but there are tuned using the hyperopt library based in python \cite{pmlr-v28-bergstra13}. 
The evaluation methodology is structured into several steps to assess different facets of the model's capabilities:
The first step involves a quantitative assessment of the overall electricity price prediction accuracy of our integrated RS-NP model. We employ standard forecasting error metrics, including the root mean squared error (RMSE), mean absolute error (MAE), and mean absolute percentage error (MAPE), to compare our model's point forecasts against those of several established benchmark models. These benchmarks include deep neural network (DNN) model and lasso-estimated autoregressive (LEAR) model.
The second and crucial step evaluates the quality of the uncertainty quantification provided by our RS-NP framework. Since neural processes inherently provide probabilistic forecasts (e.g., predictive distributions), we assess the calibration and sharpness of these distributions. We use the coverage probability to check if true values fall within predicted confidence intervals at the expected rate, and continuous ranked probability score (CRPS). Furthermore, we integrate the predicted prices and their associated uncertainty estimates into the proposed convex optimization problem for battery storage operation. We simulate the performance of the battery storage system using actual realized prices and compare the profitability and risk exposure of strategies derived from our RS-NP forecasts against those derived from benchmark models that may not provide robust uncertainty estimates. This allows us to demonstrate how the enhanced risk quantification capabilities of our model translate into tangible benefits for financial decision-making and risk management in the electricity market. We will analyze the trade-off between expected profit and risk (controlled by $\lambda$) to illustrate the flexibility of our optimization framework.
\begin{figure}[ht!]
    \centering
    \begin{subfigure}{0.4\linewidth}
        \includegraphics[height = 5.5cm, width = 7cm]{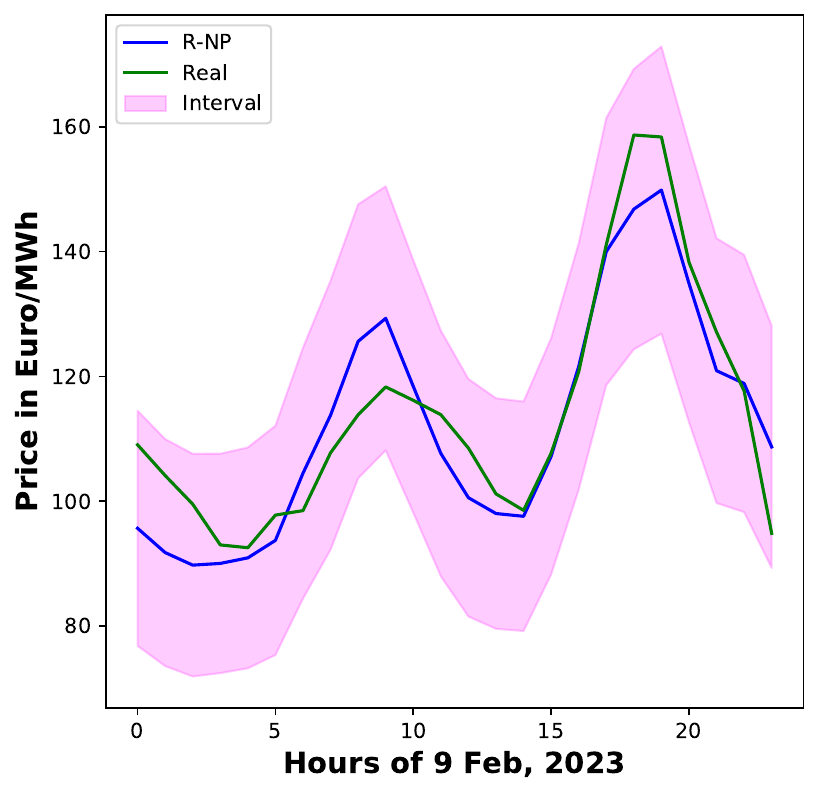}
        \caption{}
        \label{subfig:RNP_19_Feb}
    \end{subfigure}
    \begin{subfigure}{0.9\linewidth}
        \includegraphics[height = 5.5cm, width = 14cm]{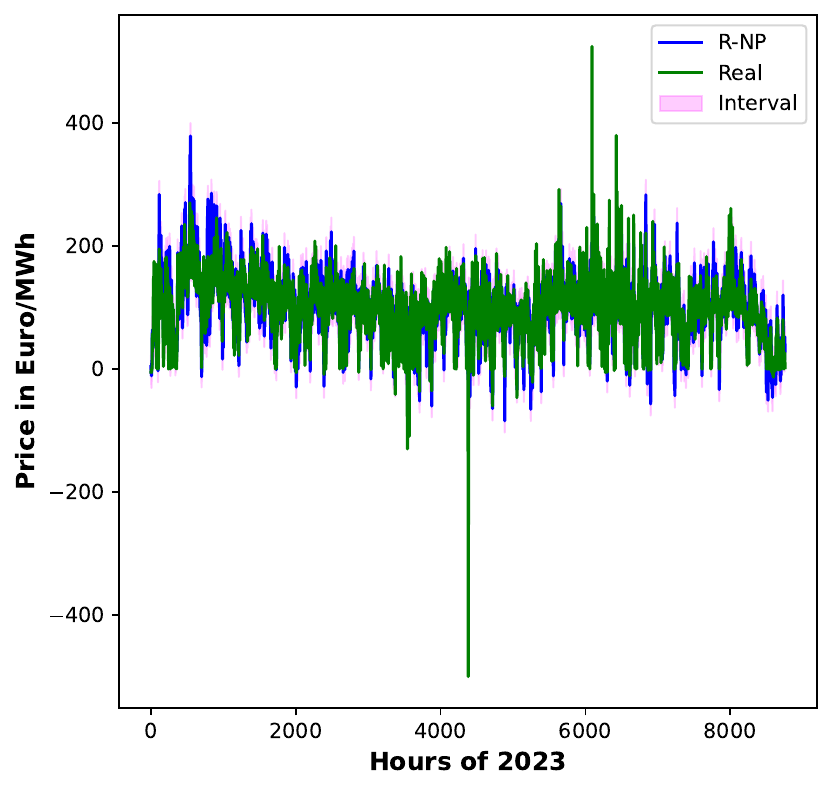}
        \caption{}
        \label{subfig:RNP_2023}
    \end{subfigure}
    \caption{Price Prediction via Regime-Aware Conditional Neural Process (a) For 9 Feb, 2023 and (b) For whole year 2023}
    \label{fig:RNP_2023_Interval}
\end{figure}
To further investigate the practical implications of our forecasting framework on operational strategies, we compare trading strategies under two price observation setups for a full week. The setup considers electricity prices for all 24 hours of the day across the entire week, representing a continuous trading environment. In this scenario, the optimization problem for the battery storage system will be solved over a 24-hour horizon, with $t \in \{1, \ldots, 24\}$, utilizing the full range of predicted prices and their associated uncertainties. 
The simulations are conducted using historical data from the German electricity market, covering multiple years to capture diverse market conditions, including periods of high renewable penetration, price spikes, and regulatory changes. This rigorous validation process provides compelling evidence of the superior predictive performance and improved risk quantification capabilities of our RS-NP framework, offering valuable insights for market participants, policymakers, and investors.

\subsection{Weight distributions to each Regime}

\begin{figure}[ht!]
    \centering
    \begin{subfigure}{0.45\linewidth}
        \includegraphics[height = 4.5cm, width = 7cm]{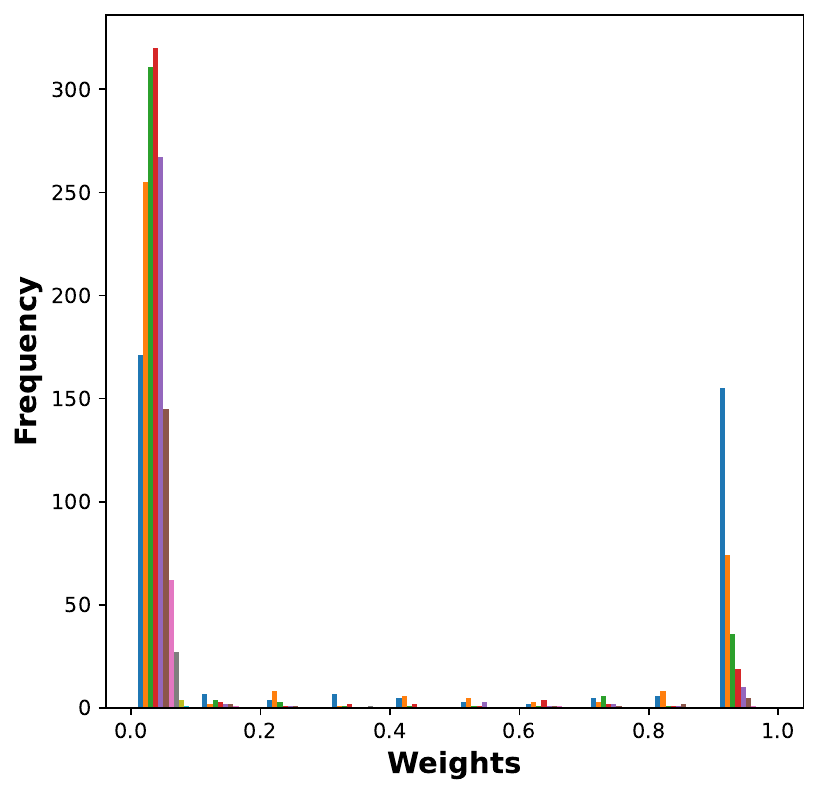}
        \caption{}
        \label{subfig:Weight_hist_2021}
    \end{subfigure}
    \begin{subfigure}{0.4\linewidth}
        \includegraphics[height = 4.5cm, width = 7cm]{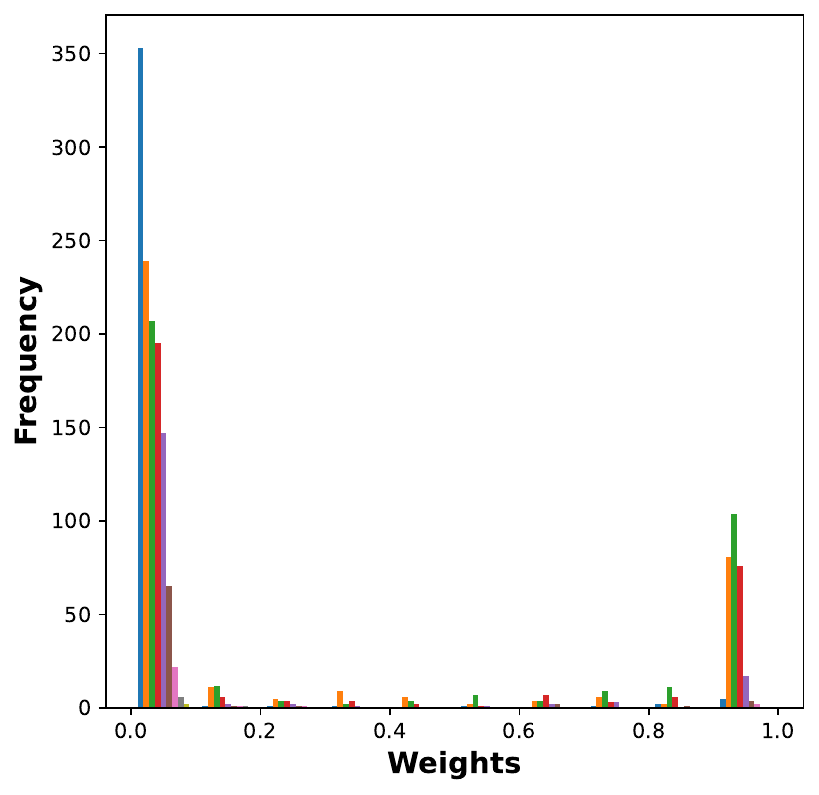}
        \caption{}
        \label{subfig:Weight_hist_2022}
    \end{subfigure}
    \begin{subfigure}{0.45\linewidth}
        \includegraphics[height = 4.5cm, width = 7cm]{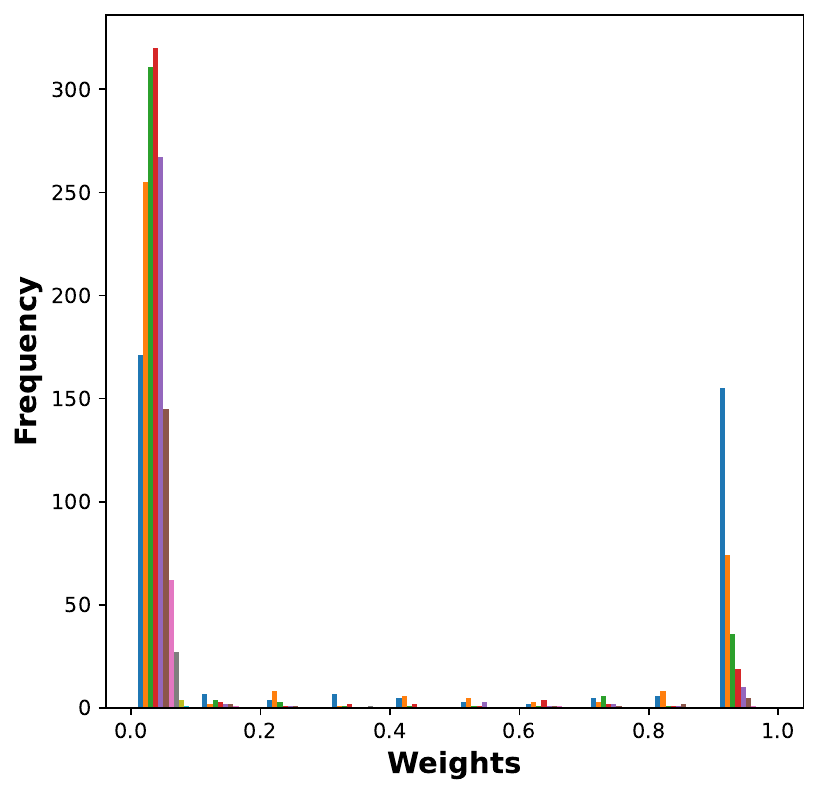}
        \caption{}
        \label{subfig:Weight_hist_2023}
    \end{subfigure}
    \caption{\centering{Weight Associated to the Different regimes Detected for the Prediction of Year (a) 2021, (b) 2022 and (c) 2023}}
    \label{fig:weight_distribution}
\end{figure}


To analyze how the newly constructed input features $\mathbf{x}_t$ for each prediction date in years 2021, 2022, and 2023 align with previously discovered regimes, we compute the regime association weights using the trained MLP classifier. For each prediction day $t \in \{1, 2, \dots, 365\}$ within a given year $Y$, a historical training set of size 1453 is used to determine the soft regime association weights $W_Y^{(t)} = [w_{r_1}^{(t)}, \ldots, w_{r_{n_t}}^{(t)}]$, where $w_{r_i}^{(t)}$ denotes the posterior probability that the input $\mathbf{x}_t$ belongs to regime $r_i$ among $n_t$ regimes inferred from the corresponding training window.

Figure~\ref{fig:weight_distribution} displays the histogram of all such weights across the 365 prediction days for each of the years 2021 (a), 2022 (b), and 2023 (c). Each histogram aggregates the distributions of weights across regimes and days, allowing us to inspect how sharply or diffusely the predictive regime assignment behaves over time. From the visual distribution, we observe that:
 A large proportion of weights are clustered near zero, suggesting that for most days, the majority of regimes contribute negligibly to the predictive inference.
 In each year, a sharp secondary concentration near weight $1.0$ is present. This indicates that the classifier is highly confident in assigning most of the prediction days almost exclusively to a single regime. In other words, $\exists\, r^*$ such that $w_{r^*}^{(t)} \approx 1$ and $w_{r \neq r^*}^{(t)} \approx 0$.
The multimodal nature of the histogram—with spikes near 0 and near 1—reveals a regime attribution pattern that is highly sparse in practice. Despite using a soft probabilistic model, the effective regime responsibility is nearly hard (i.e., close to 0–1) for the majority of predictions. Mathematically, this implies that the posterior over regimes,
\[
    w_r^{(t)} = \mathbb{P}(z = r \mid \mathbf{x}_t),
\]
is highly peaked for most $\mathbf{x}_t$, despite the use of a discriminative probabilistic classifier (MLP). In practice, such sparsity is beneficial for interpretability and regime-specific forecasting, as it allows one to attribute the prediction almost entirely to a single dominant regime. However, it also limits the potential benefit of ensembling across multiple regimes, as the non-dominant weights are almost entirely suppressed.

\subsection{Analysis of Prediction Accuracy}

This section presents a mathematically integrated evaluation of the Regime-aware Neural Process (R-NP) model against benchmark models—Deep Neural Network (DNN) and Lasso Estimated Autoregressive Regression (LEAR)—based on point prediction accuracy, uncertainty quantification, and statistical significance. Importantly, these dimensions are not orthogonal: precise forecasting in nonstationary, regime-shifting time series environments must simultaneously account for predictive accuracy and the model's epistemic and aleatoric uncertainty. We therefore analyze the metrics in an interdependent fashion to rigorously assess the R-NP's suitability as a probabilistic forecast model under complex dynamics.

\begin{figure}[ht!]
    \centering
    \begin{subfigure}{0.45\linewidth}
        \includegraphics[height = 5.5cm, width = 7cm]{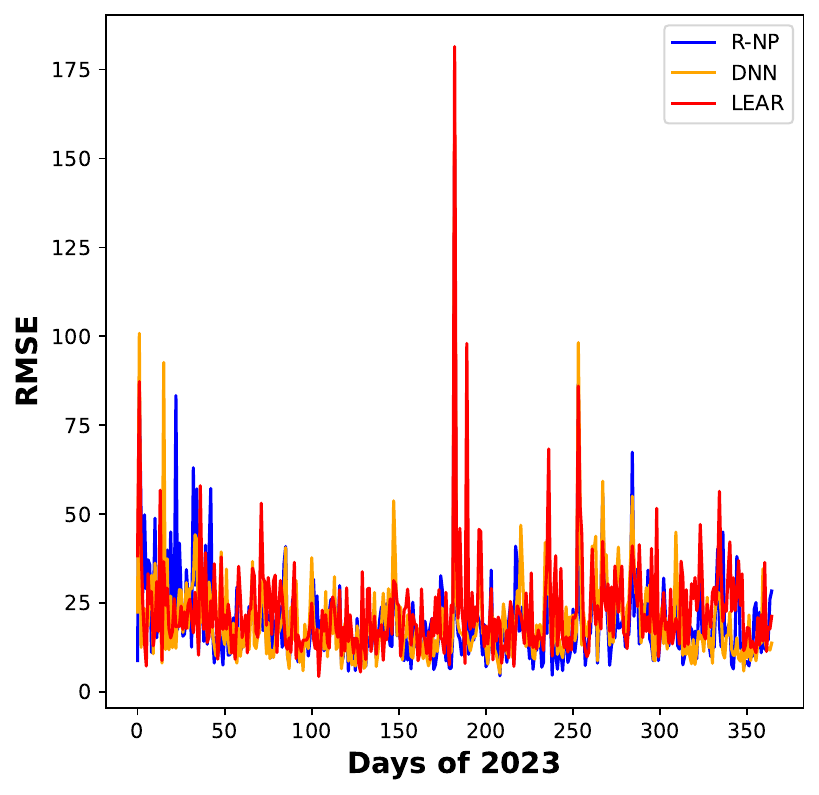}
        \caption{}
        \label{subfig:RMSE_2023_accuracy}
    \end{subfigure}
    \begin{subfigure}{0.4\linewidth}
        \includegraphics[height = 5.5cm, width = 7cm]{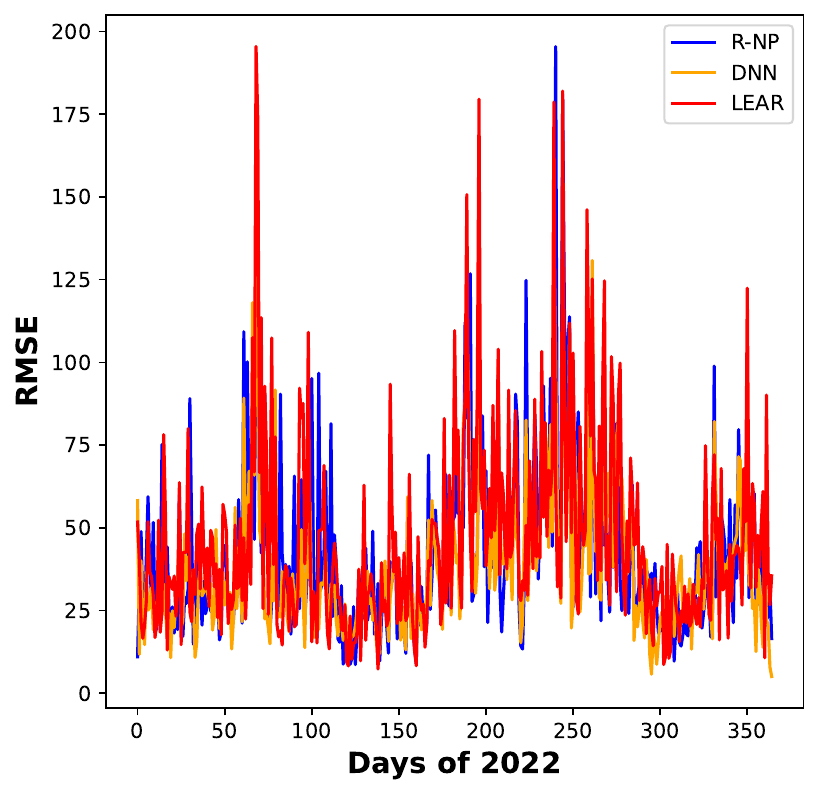}
        \caption{}
        \label{subfig:RMSE_2022_accuracy}
    \end{subfigure}
    \begin{subfigure}{0.45\linewidth}
        \includegraphics[height = 5.5cm, width = 7cm]{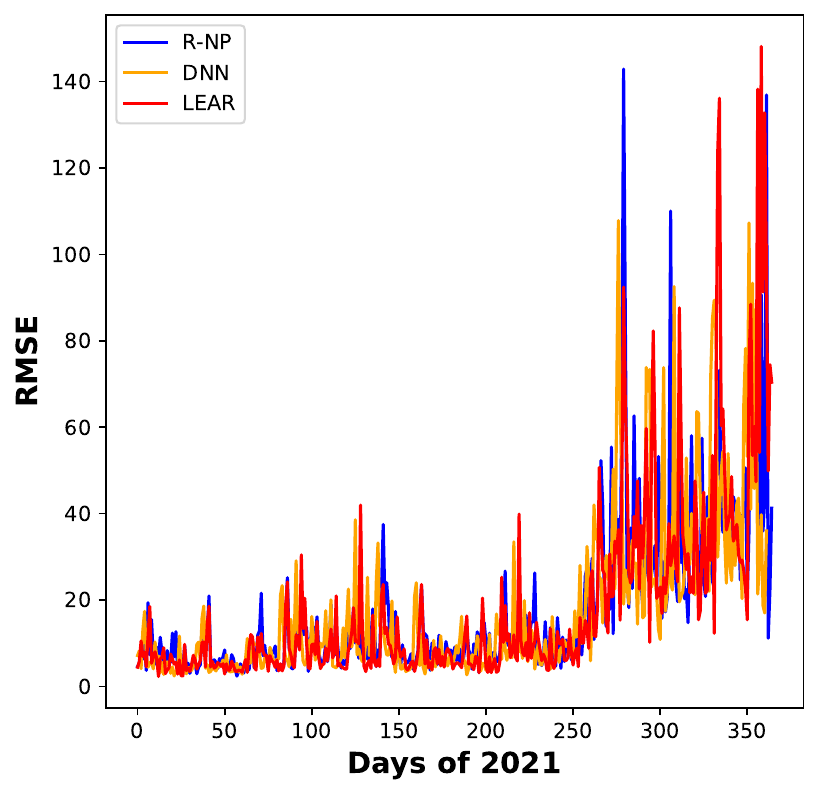}
        \caption{}
        \label{subfig:RMSE_2021_accuracy}
    \end{subfigure}
    \caption{\centering{Prediction Error Comparison for Year (a) 2021, (b) 2022 and (c) 2023}}
    \label{fig:RMSE_accuracy}
\end{figure}

The poin-twise predictive performance is measured using standard loss metrics:  Mean Absolute Error (MAE), Root Mean Squared Error (RMSE), Mean Absolute Percentage Error (MAPE), and Symmetric Mean Absolute Percentage Error (SMAPE). Each of these metrics offers complementary perspectives on predictive fidelity. MAE evaluates the average magnitude of errors without disproportionately penalizing outliers, thus serving as a robust estimator under asymmetric noise. RMSE, by contrast, amplifies the effect of larger deviations and is more sensitive to error variance, effectively capturing both bias and dispersion. MAPE and SMAPE provide scale-invariant measures crucial for economic variables like prices, where proportional error matters more than absolute deviation. 
In the year 2023, which empirically corresponds to a high-variance year, the R-NP model achieves the lowest MAE (16.045) and RMSE (19.690) as shown in Table \ref{all_errors_merged}, indicating strong bias and variance control. These values outperform LEAR significantly and narrowly surpass DNN, suggesting that the regime-conditioned structure of R-NP enables superior local adaptation.
Similarly we see that the MAPE for DNN and LEAR (13.509 and 14.123 receptively) is bigger, yet comparable, to that of R-NP (12.466) which implies superior ability to minimize both the bias and variance of the predictive mean.  However, in 2021, a relatively low-variance year, DNN slightly edges out R-NP in terms of MAPE and SMAPE, despite the latter achieving a lower MAE. This suggests that while R-NP is statistically well-calibrated, its predictive mean in stable time series may be slightly over-regularized, likely due to the model’s tendency to preserve uncertainty even in low-variance settings. In 2022, all error metrics for R-NP degrade marginally, indicating a transitional regime where predictive difficulty increases due to shifts in underlying dynamics. Here, the RMSE and MAPE rise simultaneously, implying both a failure in modeling the conditional mean and in capturing regime-induced aleatoric variance. This dip is instructive—it suggests that regime boundary estimation or the posterior variance in the encoder-decoder pipeline may have underfit local transitions, thus amplifying out-of-sample errors.
\begin{table}[ht!]
\centering
\caption{Model Performance Across Different Error Metrics (2021-2023)}
\begin{tabular}{l c c c c}
    \toprule
    Metric & Model & {Error 2021} & {Error 2022} & {Error 2023} \\
    \midrule
    \multirow{3}{*}{MAE} & R-NP     & 14.917 & 35.740 & 16.045 \\
                         & DNN     & 14.333 & 31.637  & 16.285 \\
                         & LEAR    & 15.271  & 39.130  & 19.241 \\
    \bottomrule
    \multirow{3}{*}{RMSE} & R-NP     & 17.714 & 42.239  & 19.690  \\
                          & DNN     & 17.205 & 38.171   & 19.993  \\
                          & LEAR    & 18.069 & 46.100  & 22.741  \\
    \bottomrule
    \multirow{3}{*}{MAPE} & R-NP     & 6.020 & 7.062  & 12.466  \\
                          & DNN     & 5.773 & 7.236  & 13.509  \\
                          & LEAR    & 7.035  & 13.651 & 14.123  \\
    \bottomrule
    \multirow{3}{*}{SMAPE} & R-NP     & 0.194 & 0.226 & 0.284 \\
                           & DNN     & 0.183 & 0.206 & 0.292 \\
                           & LEAR    & 0.189 & 0.243 & 0.30  \\
    \bottomrule
\end{tabular}
\label{all_errors_merged}
\end{table}

\begin{table}[ht!]
\centering
\caption{Diebold-Mariano Test Results}
\begin{tabular}{l c c }
    \toprule
    \textbf{Model} & \textbf{DM Statistic} & \textbf{P-Value} \\
    \midrule
    \multicolumn{3}{c}{\textbf{2021}} \\
    R-NP Vs LEAR    & -0.999 & 0.317 \\
    R-NP Vs DNN     & -4.639 & $3.493 \times 10^{-6}$ \\
    \midrule
    \multicolumn{3}{c}{\textbf{2022}} \\
    R-NP Vs LEAR    & -1.569 & 0.116 \\
    R-NP Vs DNN     & 2.257  & 0.024 \\
    \midrule
    \multicolumn{3}{c}{\textbf{2023}} \\
    R-NP Vs LEAR    & -1.594 & 0.110 \\
    R-NP Vs DNN     & -0.281 & 0.778 \\
    \bottomrule
\end{tabular}
\label{tab:dm_combined}
\end{table}

\begin{table}[ht!]
\centering
\caption{Prediction Interval Evaluation: PICP and MPIW (2021–2023)}
\begin{tabular}{lcccccc}
\toprule
\multirow{2}{*}{Model} & \multicolumn{2}{c}{\textbf{2023}} & \multicolumn{2}{c}{\textbf{2022}} & \multicolumn{2}{c}{\textbf{2021}} \\
\cmidrule(r){2-3} \cmidrule(r){4-5} \cmidrule(r){6-7}
 & PICP & MPIW & PICP & MPIW & PICP & MPIW \\
\midrule
R-NP       & 0.8260 & 52.35  & 0.5758 & 52.34 & 0.7410 & 29.07\\
\bottomrule
\end{tabular}
\label{tab:picp_mpiw}
\end{table}

\begin{table}[ht!]
\centering
\caption{Pairwise Nemenyi Post-hoc Test p-values (Friedman Test Ranks across Forecast Years and Metrics)}
\begin{tabular}{lccccc}
\toprule
 & \textbf{R-NP}  & \textbf{LEAR} & \textbf{DNN}  \\
\midrule
\textbf{R-NP} & 1.000 & $1.71\times 10^{-10}$ & 0.440  \\
\textbf{LEAR} & $1.71\times 10^{-10}$  &   1.00 &  $2.947\times 10^{-7}$    \\
\textbf{DNN} &  0.440  &   $2.947\times 10^{-7}$    &  1.00      \\
\bottomrule
\end{tabular}
\label{tab:nemenyi}
\end{table}

To evaluate the quality of the predictive distributions, we consider the model’s ability to quantify uncertainty through the dual metrics of Prediction Interval Coverage Probability (PICP) and Mean Prediction Interval Width (MPIW). These metrics are theoretically connected to calibration and sharpness. PICP approximates the empirical coverage of the model’s predictive intervals, and is ideally close to a nominal threshold (e.g., 80\% or 90\%), while MPIW captures the average width of these intervals. Statistically, a well-calibrated model satisfies the inequality $P(y \in \hat{I}_\alpha(x)) \approx 1 - \alpha$ with minimal $\text{width}(\hat{I}_\alpha(x))$, balancing informativeness and reliability. As shown in Table \ref{tab:picp_mpiw}, in 2023, R-NP achieves a PICP of 0.8260 with MPIW of 52.35, suggesting high calibration under volatile conditions without unnecessarily inflated intervals. This is a hallmark of high-quality uncertainty quantification: confidence intervals are both meaningful and precise. In contrast, in 2022, PICP drops to 0.5758 despite a similar MPIW, revealing undercoverage. This discrepancy cannot be attributed to a trade-off; rather, it indicates miscalibration, as the intervals fail to encapsulate the true targets. This reinforces earlier findings from RMSE and MAPE—namely, that 2022 is a regime where the R-NP model struggles both in mean prediction and variance estimation, likely due to transitional or ambiguous dynamics that violate strong separation assumptions in the regime encoding. In 2021, a lower MPIW of 29.07 coupled with PICP of 0.7410 suggests a reasonable calibration under conditions of reduced epistemic and aleatoric uncertainty, which aligns with the relatively benign regime of that year. Crucially, benchmark models such as DNN and LEAR offer no native support for uncertainty quantification—an omission that makes them fundamentally inadequate for applications requiring interval forecasts or risk-sensitive decision-making.

These findings are not merely descriptive but statistically testable. The Diebold-Mariano (DM) test results presented in Table~\ref{tab:dm_combined} further solidify these observations by statistically testing the null hypothesis of equal predictive accuracy between R-NP and each competing model. The DM test examines whether the mean loss differential between two forecasts is statistically distinguishable from zero under consistent loss functions. In 2021, R-NP significantly outperforms DNN (DM statistic of -4.639, $p = 3.49 \times 10^{-6}$), affirming that its forecast errors are not only lower in expectation but also statistically distinguishable. In contrast, for 2022, the DNN significantly outperforms R-NP ($p = 0.024$), aligning with the earlier observation of poor PICP and higher RMSE, likely due to regime transition ambiguity that hinders effective posterior inference in R-NP. For 2023, no significant difference is detected between R-NP and DNN ($p = 0.778$), although R-NP still exhibits superior uncertainty quantification. LEAR consistently trails both models and is never statistically superior to R-NP, indicating its ineffectiveness in both linear approximation and generalization across regimes.

The importance of aggregating these multiple metrics into a single statistical framework is further supported by the pairwise Nemenyi post-hoc test (Table~\ref{tab:nemenyi}), applied to Friedman test ranks across years and metrics. The results show that R-NP significantly outperforms LEAR with $p = 1.71 \times 10^{-10}$, while the R-NP and DNN comparison yields a non-significant $p = 0.44$. These findings echo the nuanced interpretation: while R-NP does not dominate DNN uniformly in point prediction, it exhibits more statistically consistent performance when interval coverage and risk quantification are accounted for. DNN, though competitive in average prediction error, is a deterministic model incapable of producing predictive intervals without post hoc ensembling or bootstrapping—methods that are not only computationally intensive but also theoretically less principled than variational inference in the neural process framework.

The absence of uncertainty quantification in both LEAR and DNN models represents a fundamental limitation in their applicability for probabilistic forecasting under regime shifts. Time series prediction in economic, financial, or energy domains frequently demands not just a point forecast but also an understanding of confidence and risk. In such contexts, deterministic predictions are brittle, particularly when the data distribution is nonstationary or the regime identity itself is latent and dynamic. The R-NP model, with its capacity to encode contextual information and yield full predictive distributions through latent-variable inference, provides both epistemic and aleatoric uncertainty estimates. This enables not only better forecasting but also risk-aware decision making, scenario generation, and anomaly detection—none of which are readily supported by standard DNN or LEAR models.

In sumary, the R-NP model demonstrates a holistic superiority across predictive performance, statistical calibration, and model interpretability under structural uncertainty. Its design allows adaptation to regime-dependent dynamics while preserving probabilistic coherence through Bayesian latent variable modeling. While challenges remain in transitional regimes, as seen in the 2022 degradation, the model's ability to outperform traditional and black-box baselines in volatile conditions (2023) and to retain statistical significance in low-variance regimes (2021) underlines its robustness. The comprehensive statistical evidence presented here justifies the adoption of R-NP as a reliable forecasting framework in applications where both accuracy and uncertainty are critical.

\subsection{Analysis of Operation Strategies}
 
The operational strategies formulated in the different cases as the optimization problems \ref{no_uncer_no_exogebous_var} to \ref{with_uncer_no_exogebous_var_fix_load} are tested using the price predicted via R-NP and the benchmark models. In this section we show the graphical results only for the year 2023 for all of the cases while numerical results for 2021, 2022 and 2023 have been shown. The graphical results for the year 2021 and 2022 are given in the Appendix \ref{ap:opertational_strategies}. We choose the capacity of battery $C_{\max} = 10 KWh$ with charging and discharging capacity, $\eta_{c}, \eta_{d} = 95 \%$. The maximum charge in and out, $x_{\min}, x_{\max} = 5 KWh, -5 KWh$.

The comprehensive evaluation of strategies, guided by various electricity price prediction models (R-NP, DNN, LEAR) across diverse cases and years, reveals an inherently complex decision landscape. The evaluation is done using two metrics: the mean realized profit (or cost for Case IV) and the Mean Absolute Error (MAE) of this realized profit/cost relative to a Perfect Foresight (PF) benchmark. The Table \ref{profit_ Case_I} to \ref{mae_error_profit_Case_IV} provide insights into operational effectiveness, which is a composite outcome influenced by the underlying price forecasts and the robustness of the optimization model. 

\begin{table}[ht!]
\centering
\caption{Profit Comparison (MWh/Euros) for Case I}
\begin{tabular}{l c c c}
    \toprule
    Model & {Mean Profit in 2021} & {Mean Profit in 2022} & {Mean Profit in 2023} \\
    \midrule
    R-NP     & 0.4459 & 1.0792 & 0.6160 \\
    DNN    & 0.4616  & 1.0867  & 0.6227 \\
    LEAR     & 0.4654 & 1.1250  & 0.6267 \\
    Perfect Foresight & 0.5345 & 1.2712 & 0.6995 \\
    
    \bottomrule
\end{tabular}
\label{profit_ Case_I}
\end{table}

\begin{table}[ht!]
\centering
\caption{Profit Comparison (MWh/Euros) for Case II}
\begin{tabular}{l c c c}
    \toprule
    Model & {Mean Profit in 2021} & {Mean Profit in 2022} & {Mean Profit in 2023} \\
    \midrule
    R-NP     & 0.2712 & 0.7203 & 0.1150 \\
    DNN    & 0.2307  & 0.5834  & 0.0210 \\
    LEAR     & 0.2720 & 0.8001  & 0.1254 \\
    Perfect Foresight & 0.5345 & 1.2712 & 0.6995 \\
    
    \bottomrule
\end{tabular}
\label{profit_ Case_II}
\end{table}

\begin{table}[ht!]
\centering
\caption{Profit Comparison (MWh/Euros) for Case III}
\begin{tabular}{l c c c}
    \toprule
    Model & {Mean Profit in 2021} & {Mean Profit in 2022} & {Mean Profit in 2023} \\
    \midrule
    R-NP     & 0.2806 & 0.7444 & 0.1310 \\
    DNN    & 0.2416  & 0.6099  & 0.0252 \\
    LEAR     & 0.2856 & 0.8286  & 0.1401 \\
    Perfect Foresight & 0.5617 & 1.3015 & 0.7330 \\
    
    \bottomrule
\end{tabular}
\label{profit_Case_III}
\end{table}

\begin{table}[ht!]
\centering
\caption{Cost Comparison (MWh/Euros) for Case IV}
\begin{tabular}{l c c c}
    \toprule
    Model & {Mean Cost in 2021} & {Mean Cost in 2022} & {Mean Cost in 2023} \\
    \midrule
    R-NP     & 0.7842 & 1.9818 & 0.8197 \\
    DNN    & 0.7951  & 1.9905  & 0.8459 \\
    LEAR     & 0.7815 & 1.9718  & 0.8071 \\
    Perfect Foresight & 0.7307 & 1.8876 & 0.7026 \\
    
    \bottomrule
\end{tabular}
\label{profit_Case_IV}
\end{table}

The LEAR consistently achieves the highest mean realized profit in profit-maximization scenarios (e.g., Table \ref{profit_ Case_I}, Table \ref{profit_ Case_II}, Table \ref{profit_Case_III}) and the lowest mean realized cost in cost-minimization (Table \ref{profit_Case_IV}), its MAE performance is not always universally superior. Specifically, in Case IV for 2022, DNN exhibits a significantly lower MAE (Table \ref{mae_error_profit_Case_IV}: 0.1028) compared to LEAR (0.8420), implying that DNN's operational decisions in that specific instance were far more optimally aligned with the ideal, despite LEAR achieving a slightly lower absolute cost for that year. Beyond these operational trade-offs, a crucial and often overlooked conflict lies between a model's direct price prediction accuracy and its ultimate operational performance. It is mathematically possible for a model like LEAR, despite demonstrating superior operational outcomes (higher profits, lower operational MAEs), to exhibit higher Mean Absolute Errors or Root Mean Squared Errors in its raw price forecasts compared to other models. This phenomenon can occur if LEAR's forecasts, while perhaps globally less accurate, are precisely accurate during critical periods that supports the trading strategies (e.g., extreme price spikes or dips), or if the optimization algorithm is robust enough to tolerate certain types of prediction inaccuracies. Such a scenario creates a fundamental conflict: should the preferred model be the one that provides the most accurate raw predictions, or the one that, regardless of its raw prediction quality, consistently translates its forecasts into the most effective real-world operational decisions? Given these inherent conflicts across multiple criteria (absolute economic value, optimality gap, raw forecasting accuracy), diverse operational contexts (Case I to IV), and varying performance across years, a simple univariate ranking is insufficient. The Technique for Order Preference by Similarity to Ideal Solution (TOPSIS) provides a rigorous, quantitative framework to aggregate these diverse and often conflicting criteria. TOPSIS allows for the explicit weighting of each criterion according to the decision-maker's priorities, thus enabling a transparent and defensible identification of the ``best compromise" solution that is closest to the ideal and farthest from the anti-ideal solution.

\begin{table}[ht!]
\centering
\caption{Mean Absolute Error for Profit from Real Data and Predicted Data for Case I}
\begin{tabular}{l c c c}
    \toprule
    Model & {Error 2021} & {Error 2022} & {Error 2023} \\
    \midrule
    R-NP     & 0.0886 & 0.1919 & 0.0835 \\
    DNN    & 0.0765  & 0.1844  & 0.0768 \\
    LEAR     & 0.0690 & 0.1461  & 0.0728 \\
    
    \bottomrule
\end{tabular}
\label{mae_error_profit_ Case_I}
\end{table}

\begin{table}[ht!]
\centering
\caption{Mean Absolute Error for Profit from Real Data and Predicted Data for Case II}
\begin{tabular}{l c c c}
    \toprule
    Model & {Error 2021} & {Error 2022} & {Error 2023} \\
    \midrule
    R-NP     & 0.2633 & 0.5508 & 0.5844 \\
    DNN    & 0.3074  & 0.6877  & 0.6785 \\
    LEAR     & 0.2625 & 0.4711  & 0.5742 \\
    
    \bottomrule
\end{tabular}
\label{mae_error_profit_ Case_II}
\end{table}

\begin{table}[ht!]
\centering
\caption{Mean Absolute Error for Profit from Real Data and Predicted Data for Case III}
\begin{tabular}{l c c c}
    \toprule
    Model & {Error 2021} & {Error 2022} & {Error 2023} \\
    \midrule
    R-NP     & 0.2810 & 0.5571 & 0.6019 \\
    DNN    & 0.3238  & 0.6916  & 0.7078 \\
    LEAR     & 0.2760 & 0.4729  & 0.5929 \\
    
    \bottomrule
\end{tabular}
\label{mae_error_profit_Case_III}
\end{table}

\begin{table}[ht!]
\centering
\caption{Mean Absolute Error for Profit from Real Data and Predicted Data for Case IV}
\begin{tabular}{l c c c}
    \toprule
    Model & {Error 2021} & {Error 2022} & {Error 2023} \\
    \midrule
    R-NP     & 0.5350 & 0.9420 & 0.1171 \\
    DNN    & 0.6176  & 0.1028  & 0.1433 \\
    LEAR     & 0.5077 & 0.8420  & 0.1044 \\
    
    \bottomrule
\end{tabular}
\label{mae_error_profit_Case_IV}
\end{table}

\begin{figure}[ht!]
    \centering
    \begin{subfigure}{0.45\linewidth}
        \includegraphics[height = 5.5cm, width = 7cm]{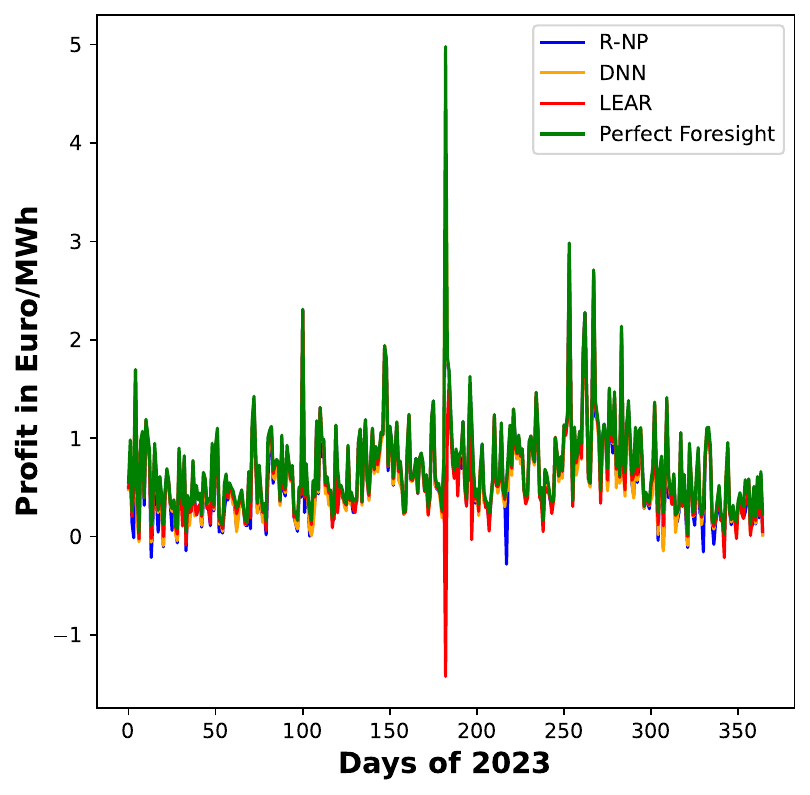}
        \caption{}
        \label{subfig:Profit_2023_case_I}
    \end{subfigure}
    \begin{subfigure}{0.45\linewidth}
        \includegraphics[height = 5.5cm, width = 7cm]{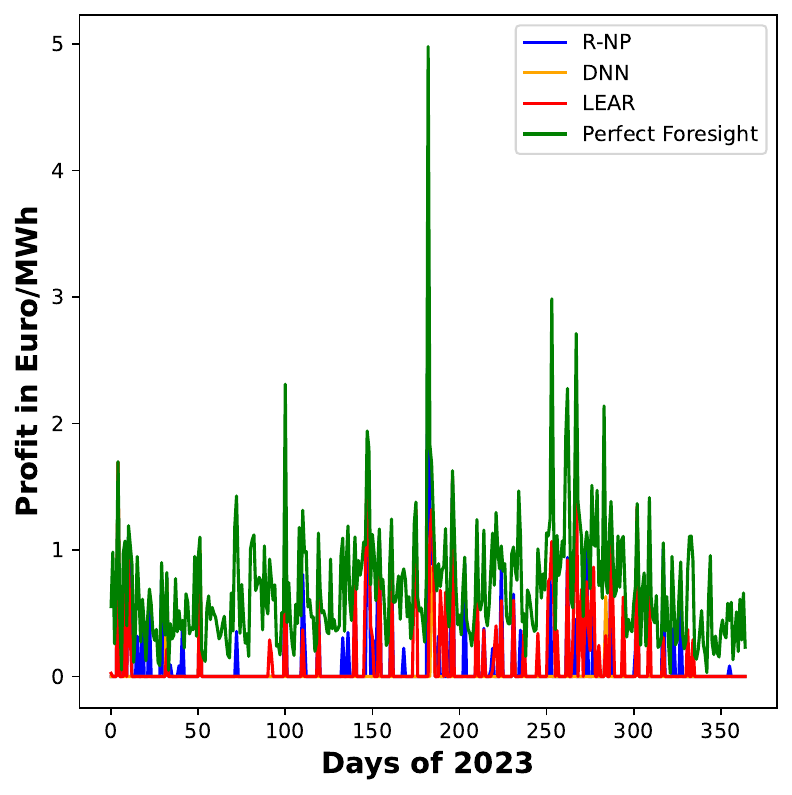}
        \caption{}
        \label{subfig:Profit_2023_case_II}
    \end{subfigure}
    \begin{subfigure}{0.45\linewidth}
        \includegraphics[height = 5.5cm, width = 7cm]{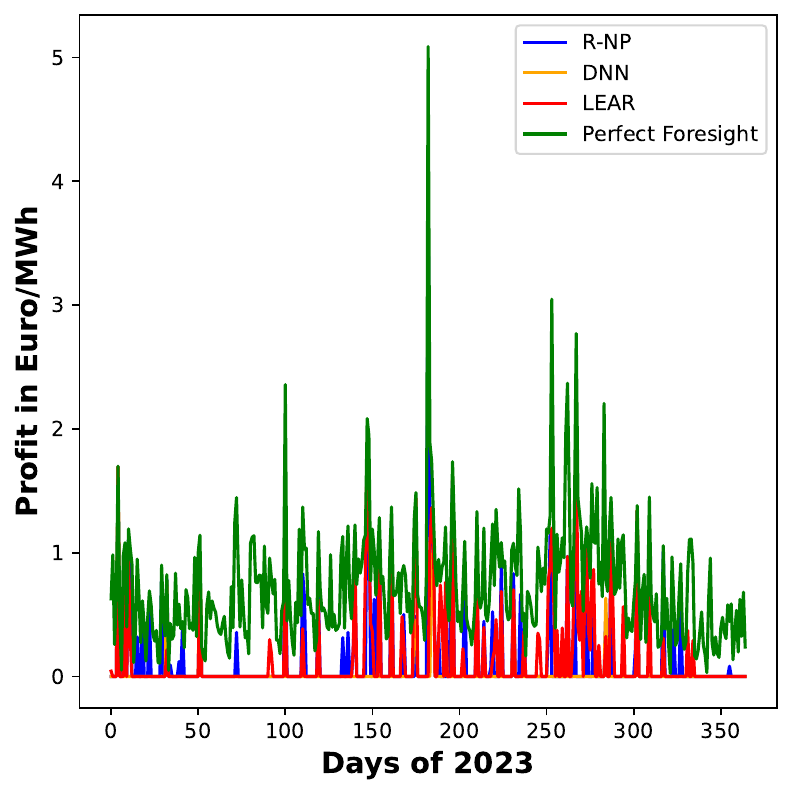}
        \caption{}
        \label{subfig:Profit_2023_case_III}
    \end{subfigure}
    \begin{subfigure}{0.45\linewidth}
        \includegraphics[height = 5.5cm, width = 7cm]{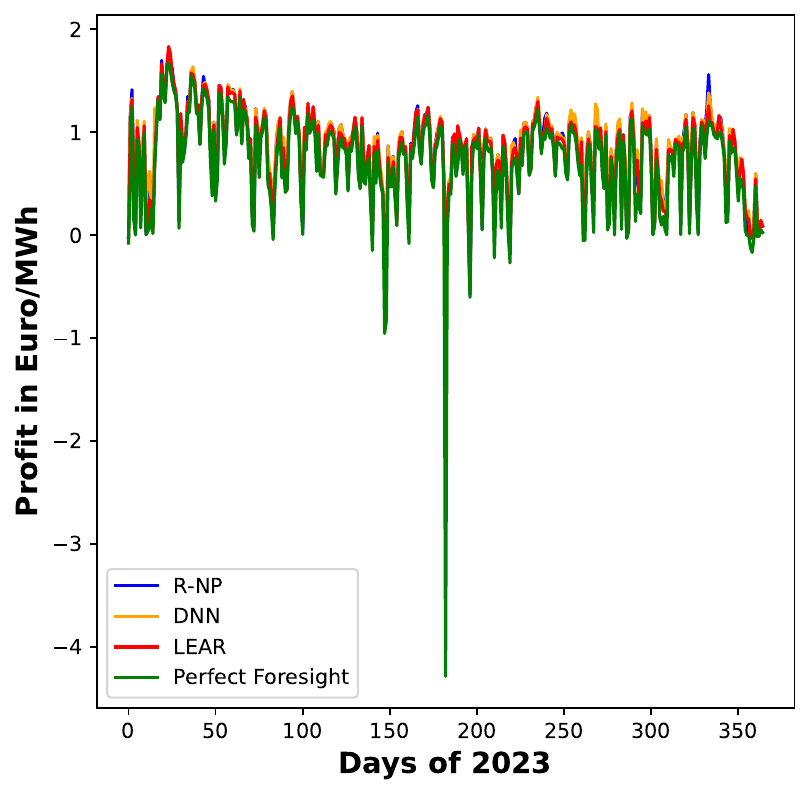}
        \caption{}
        \label{subfig:Profit_2023_case_IV}
    \end{subfigure}
    \caption{\centering{Comparison of daily profit through the predicted price and real price for year 2023 (a) Case I, (b) Case II, (c) Case III and (d) Case IV}}
    \label{fig:Profit_Comparison}
\end{figure}

\begin{figure}[ht!]
    \centering
    \begin{subfigure}{0.45\linewidth}
        \includegraphics[height = 5.5cm, width = 7cm]{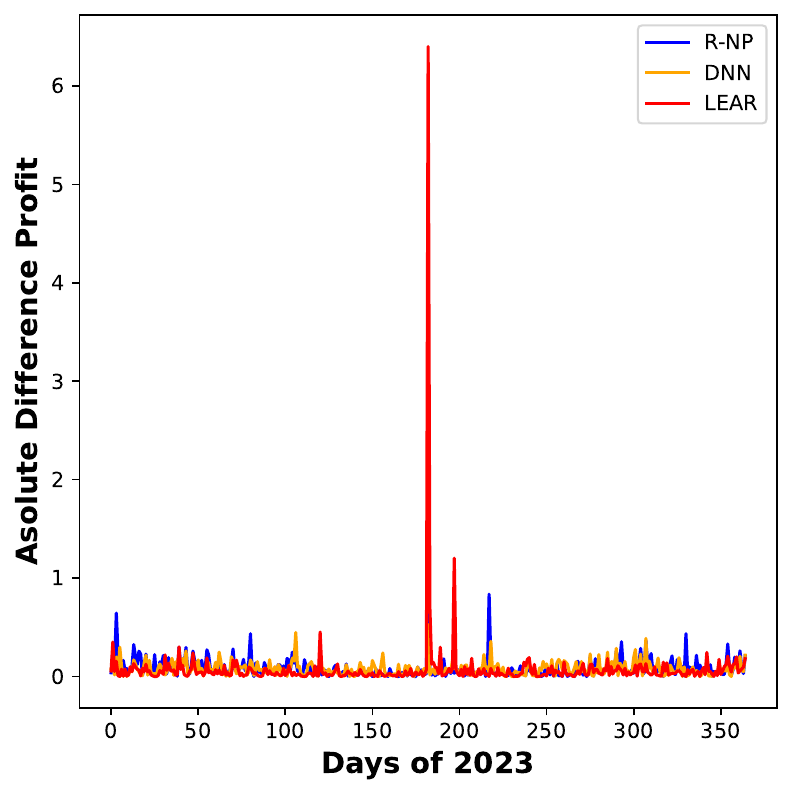}
        \caption{}
        \label{subfig:Profit_Diff_2023_case_I}
    \end{subfigure}
    \begin{subfigure}{0.45\linewidth}
        \includegraphics[height = 5.5cm, width = 7cm]{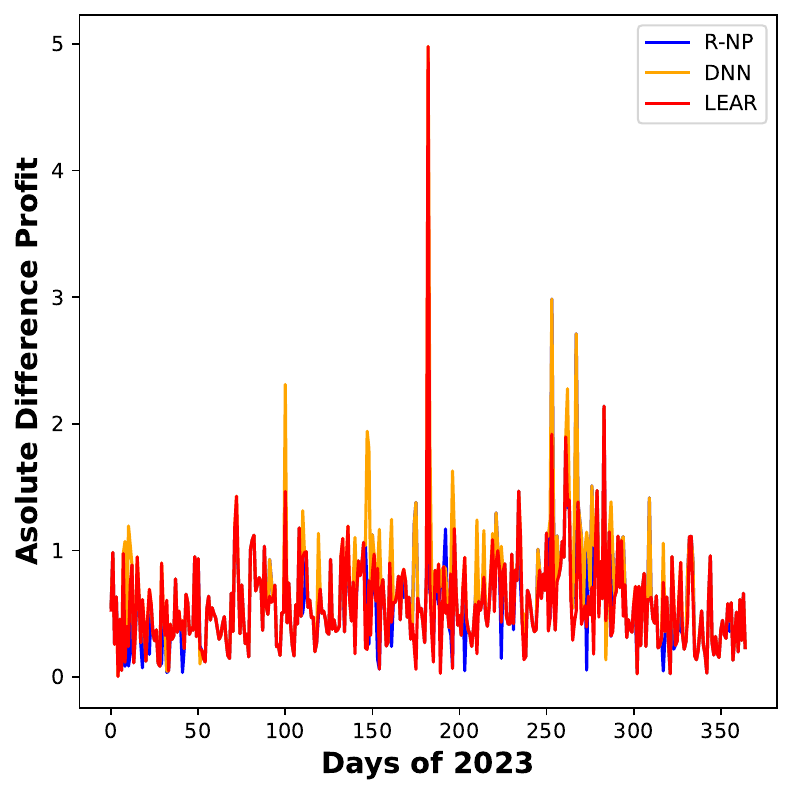}
        \caption{}
        \label{subfig:Profit_Diff_2023_case_II}
    \end{subfigure}
    \begin{subfigure}{0.45\linewidth}
        \includegraphics[height = 5.5cm, width = 7cm]{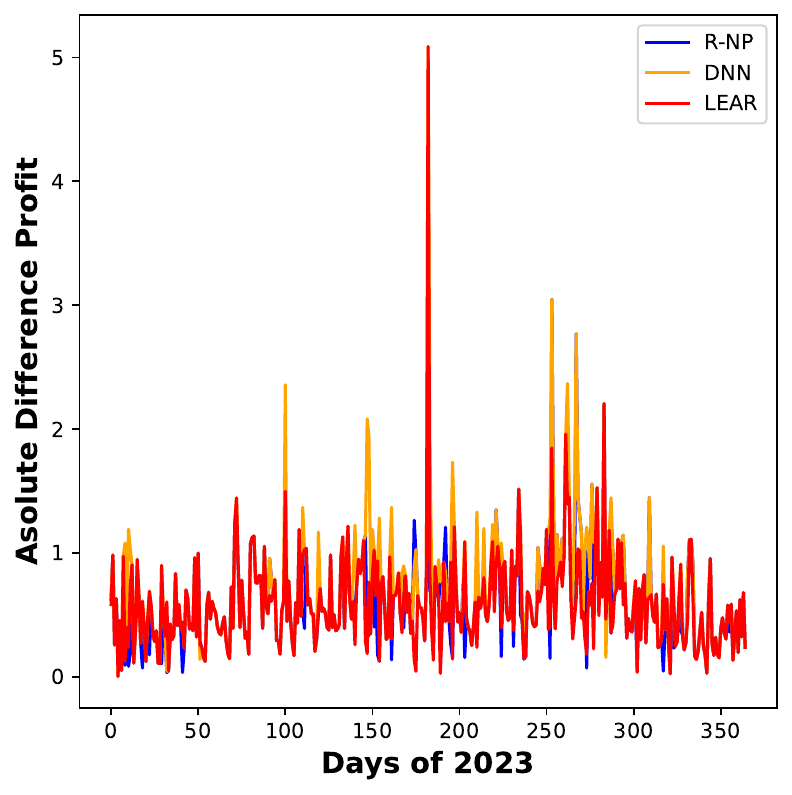}
        \caption{}
        \label{subfig:Profit_Diff_2023_case_III}
    \end{subfigure}
    \begin{subfigure}{0.45\linewidth}
        \includegraphics[height = 5.5cm, width = 7cm]{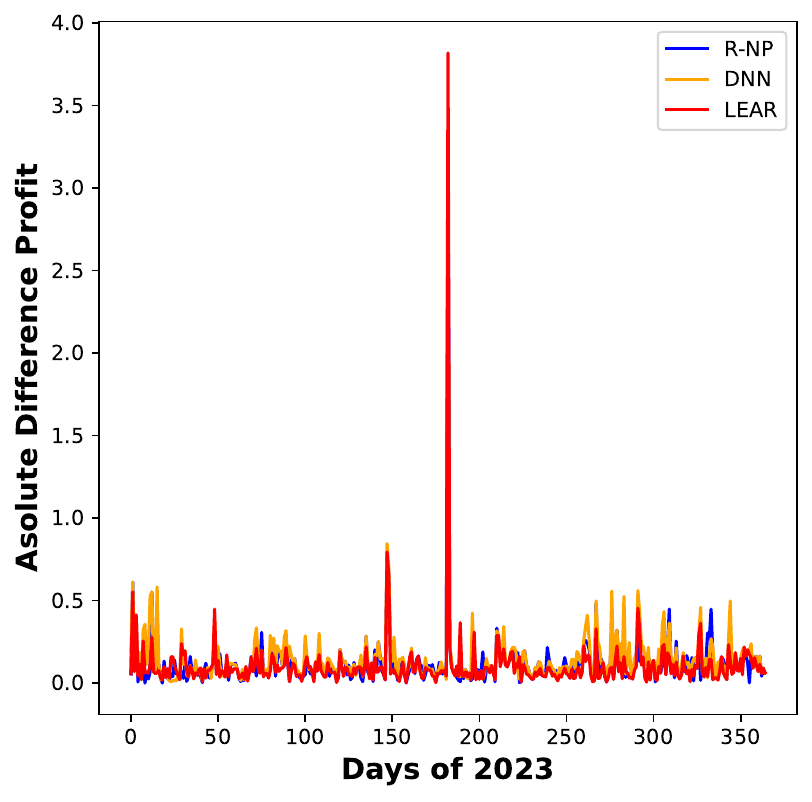}
        \caption{}
        \label{subfig:Profit_Diff_2023_case_IV}
    \end{subfigure}
    \caption{\centering{Comparison of daily profit difference through the predicted price and real price for year 2023 (a) Case I, (b) Case II, (c) Case III and (d) Case IV}}
    \label{fig:Profit_Diff_Comparison}
\end{figure}
\newpage
\subsection{Computation of MCDM Scores using TOPSIS}
TOPSIS is a widely used MCDM method that ranks alternatives based on their Euclidean distance from an ideal best solution and a negative-ideal worst solution. In our case, the alternatives are the three predictive models—RNP, DNN, and LEAR—evaluated annually using the predictive accuracy metrics: MAE, RMSE, SMAPE, MAPE (all to be minimized) , profit difference (i.e., regret) for Cases I–IV (all to be minimized) and the , mean profit for each year (to be maximized).

Mathematically, let $A = \{a_1, a_2, a_3\}$ denote the set of models (RNP, DNN, LEAR), and $C = \{c_1, \ldots, c_8\}$ the set of criteria. The decision matrix $X = [x_{ij}]$ is formed where $x_{ij}$ represents the performance of model $a_i$ on criterion $c_j$. The steps for TOPSIS are as follows:

\begin{enumerate}
    \item \textbf{Normalize the decision matrix}:
    \[
    r_{ij} = \frac{x_{ij}}{\sqrt{\sum_{i=1}^m x_{ij}^2}}, \quad \forall i,j
    \]
    \item \textbf{Construct the weighted normalized matrix}:
    \[
    v_{ij} = w_j \cdot r_{ij}, \quad \text{where } w_j \text{ is the weight of criterion } c_j, \text{ and } \sum_{j=1}^{n} w_j = 1
    \]
    In our case we have assigned equal weights $w_j = \frac{1}{12}$ to the criteria.

    \item \textbf{Determine the positive ideal solution (PIS) and negative ideal solution (NIS)}:
    \[
    v_j^+ = \min_i v_{ij}, \quad v_j^- = \max_i v_{ij} \quad \text{(for minimization problems)}
    \]

    \item \textbf{Compute the Euclidean distance to the ideal and anti-ideal solutions}:
    \[
    S_i^+ = \sqrt{\sum_{j=1}^n (v_{ij} - v_j^+)^2}, \quad S_i^- = \sqrt{\sum_{j=1}^n (v_{ij} - v_j^-)^2}
    \]

    \item \textbf{Calculate the relative closeness to the ideal solution}:
    \[
    C_i = \frac{S_i^-}{S_i^+ + S_i^-}
    \]

    \item \textbf{Rank the alternatives}: A higher $C_i$ indicates better overall performance.
\end{enumerate}

This approach enables a quantitative aggregation of both prediction fidelity and operational robustness into a single composite score for each model, per year. By applying TOPSIS annually, we systematically identify the most balanced model that not only forecasts well but also leads to effective economic decisions under realistic market constraints. The results in Table \ref{TPOSIS} shows that in all three years, R-NP model achieves the highest score which further supports the use of R-NP model.

\begin{table}[ht!]
\centering
\caption{TOPSIS Score}
\begin{tabular}{l c c c }
    \toprule
    Model & {TPOSIS Score 2021} & {TPOSIS Score 2022} & {TPOSIS Score 2023} \\
    \midrule
    R-NP     & 0.5312 & 0.7748 & 0.8282 \\
    DNN    & 0.5001  & 0.3511  & 0.2095 \\
    LEAR     & 0.5831 & 0.6216  & 0.7822 \\ 
    
    \bottomrule
\end{tabular}
\label{TPOSIS}
\end{table}

\section{Conclusion}\label{sec:conslusion}
Through this work we have introduced and validated a regime-aware conditional neural processes for operational forecasting in volatile electricity markets leveraging the Disentangled Sticky Hierarchical Dirichlet Process Hidden Markov Model (DS-HDP-HMM) for unsupervised regime discovery. This model explicitly accounts for inherent non-stationarities introduced by renewable energy integration. Our comprehensive evaluation, which integrated these forecasts into diverse opertaional strategies in presence of battery storage system optimization frameworks, empirically confirmed a critical finding: raw prediction accuracy alone often fails to translate directly into optimal operational outcomes, such as maximized profit or minimized regret. By employing the multi criteria decision making framework of TOPSIS, we rigorously quantified the performance of R-NP against leading benchmarks (DNN and LEAR). The findings reveal that while LEAR demonstrated superior overall performance in 2021, our proposed R-NP model consistently achieved the highest composite scores in 2022 and 2023, establishing it as the most balanced and preferred solution for navigating complex operational strategies across varying market conditions. These results underscore the indispensable value of multi-criteria evaluation for selecting forecasting models that genuinely optimize economic objectives in dynamic energy environments.

The future work aims to enhance the practical utility and robustness of such forecasting and evaluation frameworks. Methodologically, investigating adaptive weighting schemes within the TOPSIS framework could allow for dynamic prioritization of criteria based on evolving market conditions or decision-makers' risk appetites, moving beyond fixed equal weights. Furthermore, extending the R-NP model to incorporate a broader spectrum of real-time market data, including grid congestion and service prices, alongside explicit stochastic optimization for battey storage under uncertainty, could enhance its responsiveness and applicability in short-term dispatch. Exploring the generalization capabilities of the regime-aware architecture to other forms of energy storage (e.g., hydrogen, thermal) or diverse international energy markets with distinct regulatory structures would also be valuable.

\bibliographystyle{plain}
\bibliography{bibfile}

\newpage
\appendixpage

\appendix

\section{Theoretical Motivation for Transitioning from HDP-HMM to Sticky and Disentangled Sticky HDP-HMM}

In this section, we sequentially formalize the theoretical justification for moving from the Hierarchical Dirichlet Process Hidden Markov Model (HDP-HMM) to the Sticky HDP-HMM (sHDP-HMM), and finally to the Disentangled Sticky HDP-HMM (DS-HDP-HMM), using a series of theorems and remarks.

\subsection{Exchangeability of Transitions in HDP-HMM}\label{thm:hdp-exchangeable}
\begin{theorem}[Exchangeability in HDP-HMM]
Let $\boldsymbol{\pi}_j^{\text{HDP}} \sim \mathrm{DP}(\alpha, \beta)$, where $\beta = (\beta_1, \beta_2, \dots)$ is a base measure over a countable state space $\mathbb{N}$. Then, the expected transition probability from state $j$ to any state $k \in \mathbb{N}$ is
\[
\mathbb{E}[\pi_{jk}^{\text{HDP}}] = \beta_k.
\]
In particular,
\[
\mathbb{E}[\pi_{jj}^{\text{HDP}}] = \beta_j.
\]
This implies that transitions are a priori exchangeable: self-transitions are not preferred over other transitions.
\end{theorem}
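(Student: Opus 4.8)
The plan is to reduce the statement to the elementary fact that a Dirichlet process draw is unbiased for its base measure, and then read off both displays as instances of this. First I would invoke the finite-partition characterization of the Dirichlet process: if $\boldsymbol{\pi}_j^{\text{HDP}} \sim \mathrm{DP}(\alpha,\beta)$ on the countable state space $\mathbb{N}$, then for any finite measurable partition $\{B_1,\dots,B_m\}$ of $\mathbb{N}$ the vector $(\pi_j(B_1),\dots,\pi_j(B_m))$ is Dirichlet-distributed with parameters $(\alpha\beta(B_1),\dots,\alpha\beta(B_m))$. Applying this with the two-block partition $\{\{k\},\,\mathbb{N}\setminus\{k\}\}$ gives $(\pi_{jk},\,1-\pi_{jk}) \sim \mathrm{Dirichlet}(\alpha\beta_k,\ \alpha(1-\beta_k))$, i.e. $\pi_{jk} \sim \mathrm{Beta}(\alpha\beta_k,\ \alpha(1-\beta_k))$. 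Taking the mean of this Beta law yields $\mathbb{E}[\pi_{jk}^{\text{HDP}}] = \frac{\alpha\beta_k}{\alpha\beta_k + \alpha(1-\beta_k)} = \beta_k$, which is the first claim; the self-transition identity $\mathbb{E}[\pi_{jj}^{\text{HDP}}] = \beta_j$ is the special case $k=j$.

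An alternative route, which I would mention only as a remark, works directly from the stick-breaking construction $\pi_{jk} = u_k \prod_{l<k}(1-u_l)$ with $u_k \sim \mathrm{Beta}(\alpha\beta_k,\ \alpha\sum_{l>k}\beta_l)$ (after a size-biased reordering of the atoms of $\beta$), and computes $\mathbb{E}[\pi_{jk}]$ by the tower property over the independent stick proportions. This is equivalent but involves bookkeeping over an infinite product, so the finite-partition argument is the cleaner one to present. Either way, the exchangeability conclusion is immediate: since $\mathbb{E}[\pi_{jk}^{\text{HDP}}]$ depends on $k$ only through $\beta_k$, with no extra term distinguishing the index $k=j$, the prior assigns no a priori preference to self-transitions — precisely the structural deficiency that the sticky and disentangled sticky variants are introduced to remedy.

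The only point requiring care — rather than a genuine obstacle — is handling the infinite-dimensional nature of $\boldsymbol{\pi}_j^{\text{HDP}}$ rigorously: one must note that $\{k\}$ and $\mathbb{N}\setminus\{k\}$ form a bona fide finite measurable partition of the (discrete) state space, that $\boldsymbol{\pi}_j^{\text{HDP}}$ is almost surely a probability measure so that $\pi_{jk}$ is a well-defined random variable taking values in $[0,1]$, and that integrability of $\pi_{jk}$ is automatic from this boundedness. With those observations in place, the Beta-mean computation is the entire content of the proof.
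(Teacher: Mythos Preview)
Your proof is correct and follows essentially the same route as the paper: both arguments reduce to the fact that the base measure of a Dirichlet process is its mean measure, i.e.\ $\mathbb{E}[\pi_j(A)] = \beta(A)$ for any measurable $A$, and then specialize to $A=\{k\}$. The paper simply cites this DP property directly, whereas you unpack it via the finite-partition characterization and the Beta mean computation, but the content is identical.
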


\begin{proof}
From the properties of the Dirichlet Process, if $\boldsymbol{\pi}_j \sim \mathrm{DP}(\alpha, \beta)$, then for any measurable subset $A \subseteq \mathbb{N}$,
\[
\mathbb{E}[\pi_j(A)] = \beta(A).
\]
Setting $A = \{k\}$, we obtain $\mathbb{E}[\pi_{jk}] = \beta_k$ for all $k$, including $k = j$.
\end{proof}

\subsection{Biasing Self-Transitions with Sticky HDP-HMM}\label{rem:shdp-self}
\begin{remark}[Self-transition in sHDP-HMM]
In the sHDP-HMM, the prior on transitions is modified:
\[
\boldsymbol{\pi}_j^{\text{sHDP}} \sim \mathrm{DP}(\alpha \beta + \kappa \delta_j),
\]
where $\kappa > 0$ is the self-transition ("stickiness") parameter. Then the expected self-transition becomes:
\[
\mathbb{E}[\pi_{jj}^{\text{sHDP}}] = \frac{\alpha \beta_j + \kappa}{\alpha + \kappa} > \beta_j.
\]
This breaks the exchangeability of transitions and promotes state persistence.
\end{remark}

\begin{remark}[Why \textbf{sHDP-HMM} is Preferable to HDP-HMM]
Consider the difference in expected self-transition probabilities:
\[
\mathbb{E}[\pi_{jj}^{\text{sHDP}}] - \mathbb{E}[\pi_{jj}^{\text{HDP}}] = \frac{\alpha \beta_j + \kappa}{\alpha + \kappa} - \beta_j.
\]
After simplification,
\[
= \frac{\kappa (1 - \beta_j)}{\alpha + \kappa} > 0 \quad \text{for } \kappa > 0.
\]
Thus, sHDP-HMM \emph{increases} the expected self-transition probability relative to HDP-HMM, offering better temporal consistency in latent regimes, which is desirable in electricity price modeling where regimes are persistent.
\end{remark}

\subsection{Limitations of sHDP-HMM: Entangled Parameters}\label{rem::limitsHDPHMM}\
\begin{remark}[Parameter Coupling in sHDP-HMM]
In the sHDP-HMM, both $\alpha$ and $\kappa$ influence $\boldsymbol{\pi}_j^{\text{sHDP}}$:
\begin{itemize}
    \item $\alpha$ affects how closely $\boldsymbol{\pi}_j$ follows $\beta$
    \item $\kappa$ increases mass on the self-transition
\end{itemize}
This entanglement makes it difficult to independently tune self-persistence and inter-state similarity.
\end{remark}

\subsection{Disentangled Self-Persistence via DS-HDP-HMM}\label{thm:dshdp-expected}
\begin{theorem}[Expected Transitions in DS-HDP-HMM]
Let the transition distribution be defined as:
\begin{align*}
    \kappa_j &\sim \text{Beta}(\rho_1, \rho_2), \\
    \overline{\pi}_j &\sim \text{DP}(\alpha \beta), \\
    \pi_{jk}^{\text{DS}} &= \kappa_j \delta_j(k) + (1 - \kappa_j) \overline{\pi}_{jk}.
\end{align*}
Then:
\begin{enumerate}
    \item Expected self-transition:
    \[
    \mathbb{E}[\pi_{jj}^{\text{DS}}] = \mathbb{E}[\kappa_j] + (1 - \mathbb{E}[\kappa_j])\beta_j
    = \frac{\rho_1}{\rho_1 + \rho_2} + \left(1 - \frac{\rho_1}{\rho_1 + \rho_2}\right)\beta_j.
    \]
    \item Expected transition to $k \neq j$:
    \[
    \mathbb{E}[\pi_{jk}^{\text{DS}}] = (1 - \mathbb{E}[\kappa_j])\beta_k.
    \]
\end{enumerate}
\end{theorem}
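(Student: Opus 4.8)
The plan is to exploit the convex-combination form of $\pi_{jk}^{\text{DS}}$ together with the mutual independence of the two ingredients $\kappa_j$ and $\overline{\pi}_j$ built into the generative prior, and then reduce everything to two facts already in hand: the mean of a Beta random variable, $\mathbb{E}[\kappa_j] = \rho_1/(\rho_1+\rho_2)$, and the mean of a Dirichlet-process draw, which by exactly the argument used in the HDP-HMM exchangeability theorem gives $\mathbb{E}[\overline{\pi}_{jk}] = \beta_k$ for every $k$.

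First I would split on whether the destination state equals the source. For $k=j$ we have $\delta_j(j)=1$, so $\pi_{jj}^{\text{DS}} = \kappa_j + (1-\kappa_j)\overline{\pi}_{jj}$; for $k\neq j$ we have $\delta_j(k)=0$, so $\pi_{jk}^{\text{DS}} = (1-\kappa_j)\overline{\pi}_{jk}$. Next I would take expectations termwise. In the self-transition case, linearity gives $\mathbb{E}[\pi_{jj}^{\text{DS}}] = \mathbb{E}[\kappa_j] + \mathbb{E}[(1-\kappa_j)\overline{\pi}_{jj}]$; since $\kappa_j$ and $\overline{\pi}_j$ are drawn independently, the product factorizes as $(1-\mathbb{E}[\kappa_j])\,\mathbb{E}[\overline{\pi}_{jj}] = (1-\mathbb{E}[\kappa_j])\beta_j$, and substituting $\mathbb{E}[\kappa_j] = \rho_1/(\rho_1+\rho_2)$ produces the claimed closed form. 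The off-diagonal case is identical but one term shorter: $\mathbb{E}[\pi_{jk}^{\text{DS}}] = \mathbb{E}[(1-\kappa_j)]\,\mathbb{E}[\overline{\pi}_{jk}] = (1-\mathbb{E}[\kappa_j])\beta_k$.

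There is no substantial obstacle here; the statement is a direct computation. The only points deserving a word of care are (i) the independence of $\kappa_j$ and $\overline{\pi}_j$, which is what licenses factoring the expectation of the product and which I would state explicitly as a consequence of the hierarchical construction, and (ii) the normalization in $\mathrm{DP}(\alpha\beta)$: one must observe that $\beta$ is a probability measure, so $\alpha\beta$ has total mass $\alpha$, and hence the Dirichlet-process mean is $\beta$ itself (not $\alpha\beta$) — this is precisely the normalization step already carried out in the proof of the HDP-HMM exchangeability theorem. Optionally I would close with a one-line comparison of part 1 against the sticky-HDP expression $\mathbb{E}[\pi_{jj}^{\text{sHDP}}] = (\alpha\beta_j+\kappa)/(\alpha+\kappa)$, to underline that in the disentangled model the Beta mean $\rho_1/(\rho_1+\rho_2)$ governs self-persistence with no dependence on $\alpha$, which is the whole point of the disentanglement.
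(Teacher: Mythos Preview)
Your proposal is correct and follows essentially the same approach as the paper: recognize the convex-mixture form, take expectations termwise, and use independence of $\kappa_j$ and $\overline{\pi}_j$ together with $\mathbb{E}[\overline{\pi}_{jk}]=\beta_k$ to obtain both formulas. If anything, your write-up is more careful than the paper's, which compresses the entire argument into the single phrase ``The transition is a convex mixture. Taking expectations:'' without explicitly invoking independence or the DP-mean normalization you rightly flag.
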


\begin{proof}
The transition is a convex mixture. Taking expectations:
\begin{align*}
    \mathbb{E}[\pi_{jj}^{\text{DS}}] &= \mathbb{E}[\kappa_j] + (1 - \mathbb{E}[\kappa_j]) \mathbb{E}[\overline{\pi}_{jj}] = \mathbb{E}[\kappa_j] + (1 - \mathbb{E}[\kappa_j]) \beta_j, \\
    \mathbb{E}[\pi_{jk}^{\text{DS}}] &= (1 - \mathbb{E}[\kappa_j]) \beta_k \quad (k \neq j).
\end{align*}
\end{proof}

\begin{remark}[Why DS-HDP-HMM is Preferable to sHDP-HMM]
In DS-HDP-HMM, the persistence probability $\kappa_j$ is modeled separately via a Beta distribution:
\begin{align*}
    \mathbb{E}[\kappa_j] &= \frac{\rho_1}{\rho_1 + \rho_2}, \\
    \mathrm{Var}(\kappa_j) &= \frac{\rho_1 \rho_2}{(\rho_1 + \rho_2)^2 (\rho_1 + \rho_2 + 1)}.
\end{align*}
This achieves full disentanglement:
\begin{itemize}
    \item $\alpha$ controls the similarity of $\overline{\pi}_j$ across states (inter-regime behavior).
    \item $\rho_1, \rho_2$ independently control regime persistence.
\end{itemize}
Such flexibility is essential for heterogeneous markets (like electricity) where some regimes are sticky (e.g., base load), and others are fleeting (e.g., price spikes).
\end{remark}
\newpage
\section{Analysis of Operational Strategies}\label{ap:opertational_strategies}
\begin{figure}[ht!]
    \centering
    \begin{subfigure}{0.45\linewidth}
        \includegraphics[height = 5.5cm, width = 7cm]{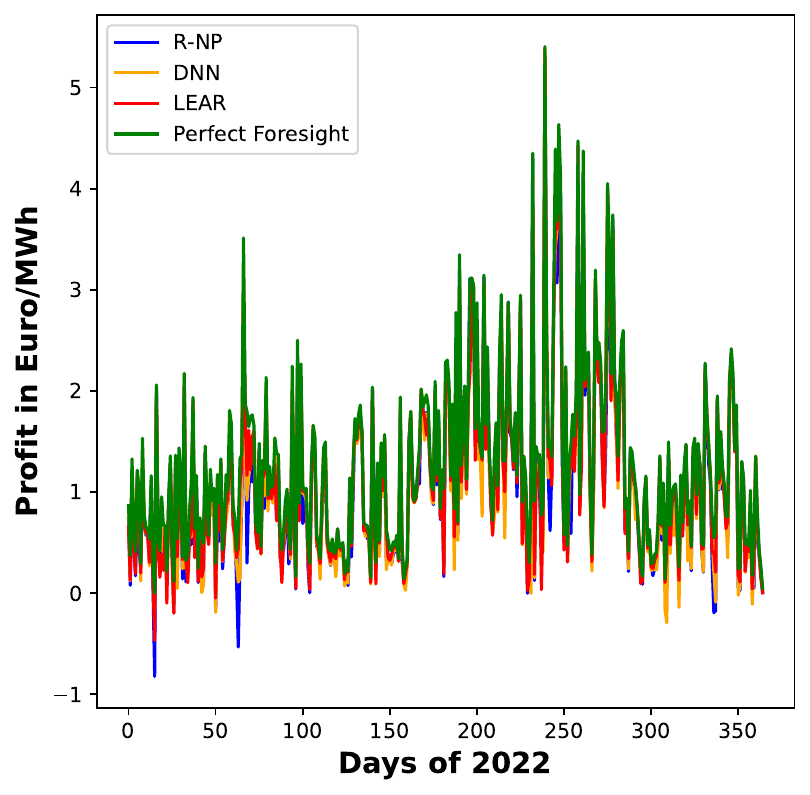}
        \caption{}
        \label{subfig:Profit_2022_case_I}
    \end{subfigure}
    \begin{subfigure}{0.45\linewidth}
        \includegraphics[height = 5.5cm, width = 7cm]{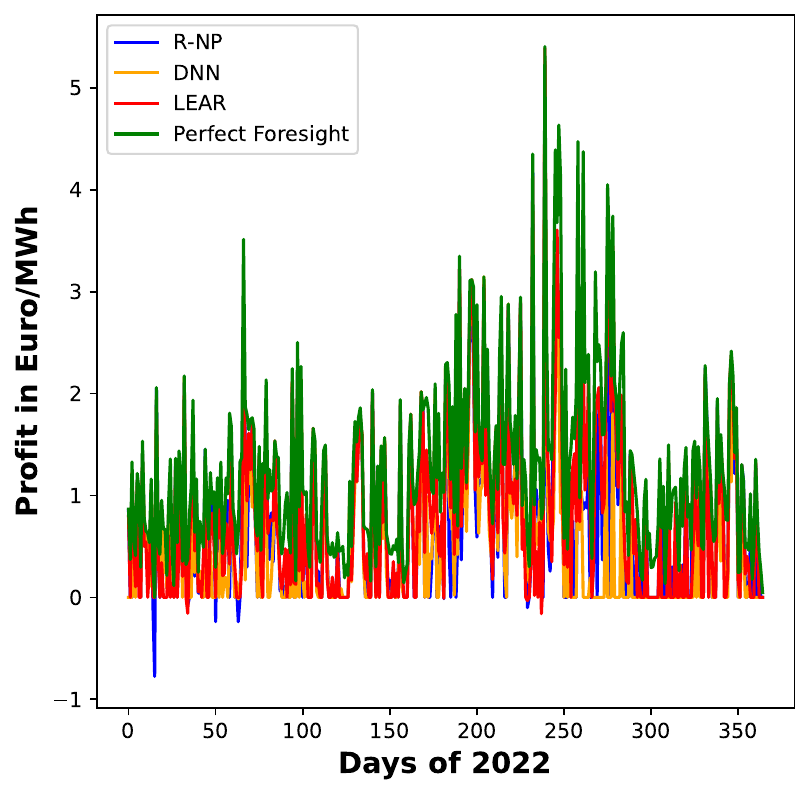}
        \caption{}
        \label{subfig:Profit_2022_case_II}
    \end{subfigure}
    \begin{subfigure}{0.45\linewidth}
        \includegraphics[height = 5.5cm, width = 7cm]{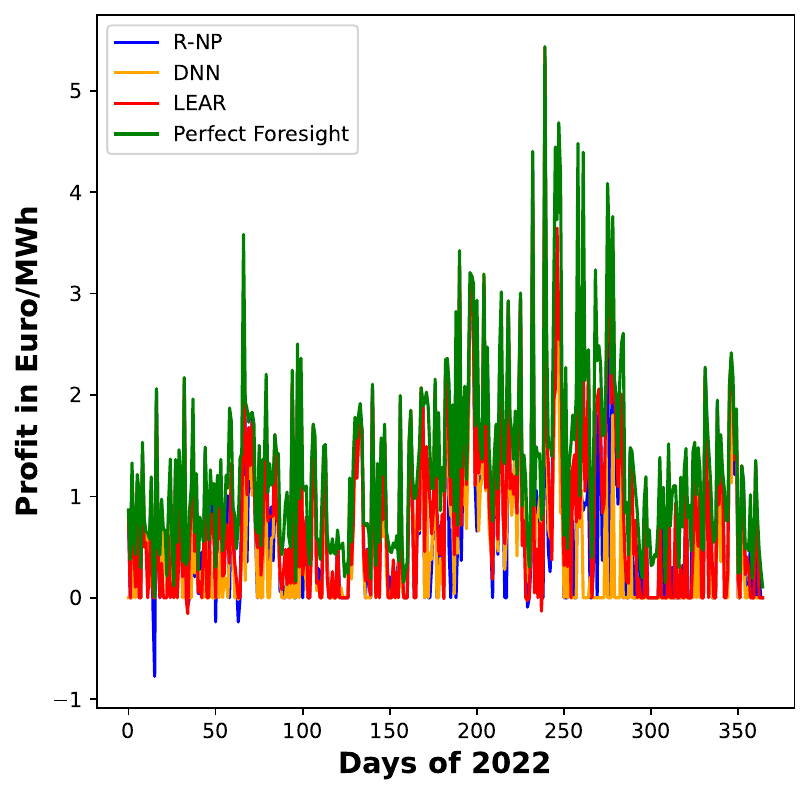}
        \caption{}
        \label{subfig:Profit_2022_case_III}
    \end{subfigure}
    \begin{subfigure}{0.45\linewidth}
        \includegraphics[height = 5.5cm, width = 7cm]{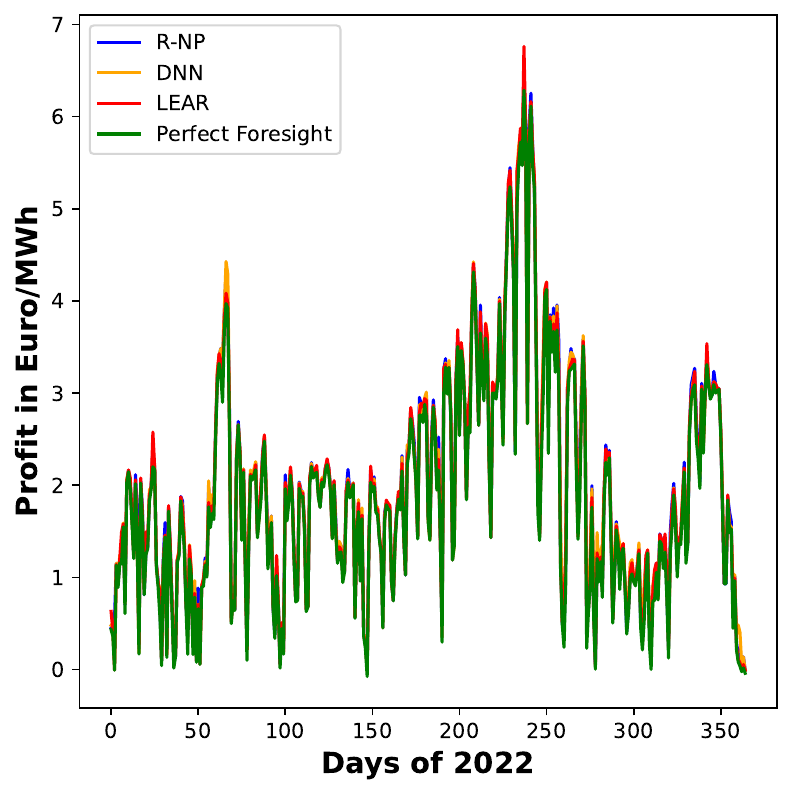}
        \caption{}
        \label{subfig:Profit_2022_case_IV}
    \end{subfigure}
    \caption{\centering{Comparison of daily profit through the predicted price and real price for year 2022 (a) Case I, (b) Case II, (c) Case III and (d) Case IV}}
    \label{fig:Profit_Comparison_2022}
\end{figure}

\begin{figure}[ht!]
    \centering
    \begin{subfigure}{0.45\linewidth}
        \includegraphics[height = 5.5cm, width = 7cm]{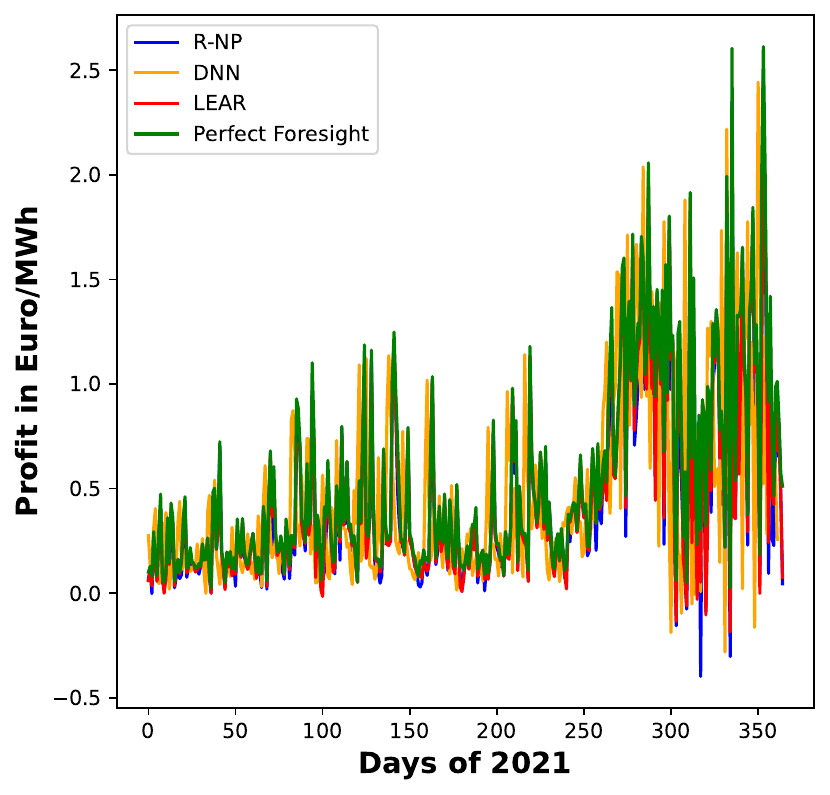}
        \caption{}
        \label{subfig:Profit_2021_case_I}
    \end{subfigure}
    \begin{subfigure}{0.45\linewidth}
        \includegraphics[height = 5.5cm, width = 7cm]{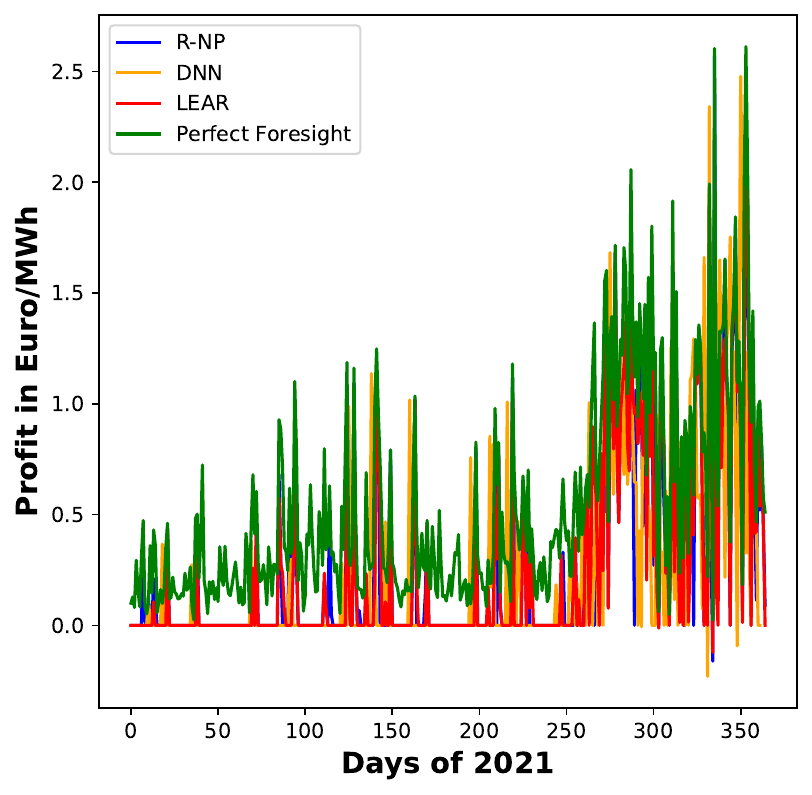}
        \caption{}
        \label{subfig:Profit_2021_case_II}
    \end{subfigure}
    \begin{subfigure}{0.45\linewidth}
        \includegraphics[height = 5.5cm, width = 7cm]{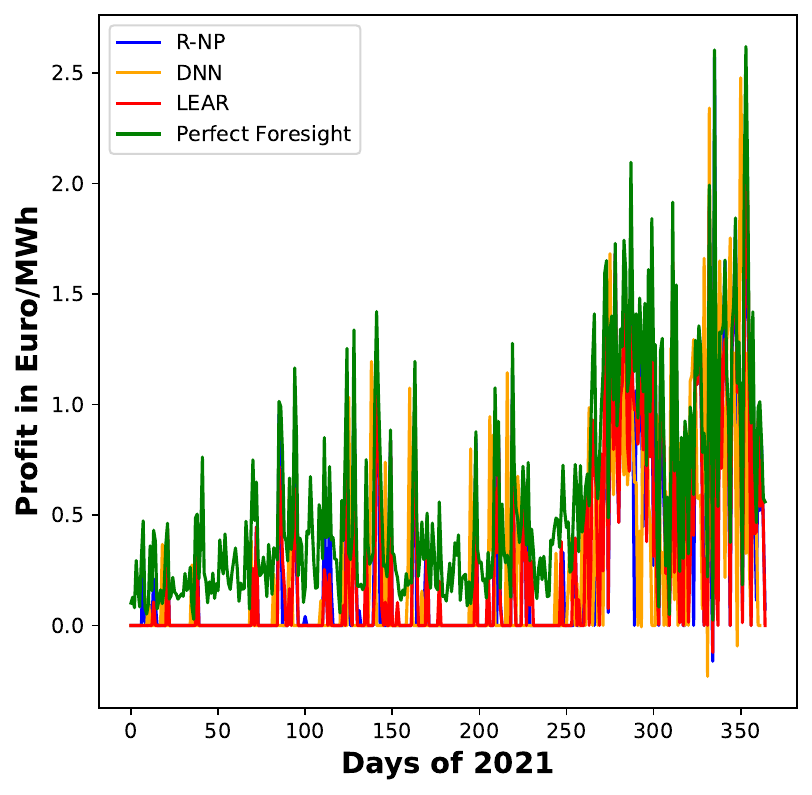}
        \caption{}
        \label{subfig:Profit_2021_case_III}
    \end{subfigure}
    \begin{subfigure}{0.45\linewidth}
        \includegraphics[height = 5.5cm, width = 7cm]{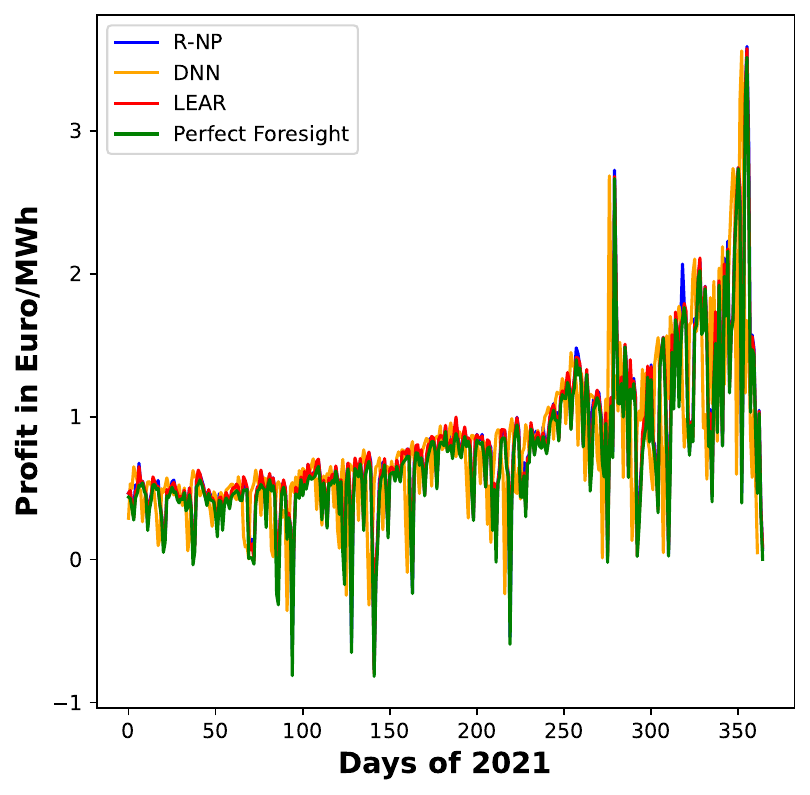}
        \caption{}
        \label{subfig:Profit_2021_case_IV}
    \end{subfigure}
    \caption{\centering{Comparison of daily profit through the predicted price and real price for year 2021 (a) Case I, (b) Case II, (c) Case III and (d) Case IV}}
    \label{fig:Profit_Comparison_2021}
\end{figure}

\section{Benchmark Models}\label{a:benchmark}
The research in \cite{LAGO2021116983} suggest that the LASSO estimated autoregressive (LEAR) model and the  deep neural network (DNN) model are the better performing model when compared to different class and categories of the models for electricity price forecasting. In this study we compare the performance of our hybrid model which predicts via GPR and SVR with both of these benchmark models. We also compare the individual predictions by GPR and SVR with both of these benchmarks. In this section we briefly introduce the construction of LEAR and DNN Models and for detail study of these we request the readers to refer to \cite{LAGO2021116983}:

\subsection{LEAR Model}\label{ass:lear_model}
Let us assume that we have a data set with price of electricity , forecast residual load and forecast total renewable energy production in hourly resolution. With the assumption that the current price of the electricity is dependent on the past prices, forecast residual load and forecast total renewable energy production, the LEAR model to predict the electricity price on $h^{th}$ hour of day $i$, is as follows:
\begin{equation}\label{lear}
    \begin{split}
        P^{(i)}_{h} = & f(P^{(i-1)}, P^{(i-2)}, P^{(i-3)}, P^{(i-7)}, L^{(i)}, L^{(i-1)}, L^{(i-7)}, R^{(i)}, R^{(i-1)}, R^{(i-7)}, \boldsymbol{\theta}_{h})+\epsilon^{(i)}_{h}\\ 
         = & \sum_{j=1}^{24}\theta_{h,j}P^{(i-1)}_{j} + \sum_{j=1}^{24}\theta_{h,24+j}P^{(i-2)}_{j} + \sum_{j=1}^{24}\theta_{h,48+j}P^{(i-3)}_{j}\\
         & + \sum_{j=1}^{24}\theta_{h,72+j}P^{(i-7)}_{j} + \sum_{j=1}^{24}\theta_{h,96+j}L^{(i)}_{j} + \sum_{j=1}^{24}\theta_{h,120+j}L^{(i-1)}_{j} \\
         & + \sum_{j=1}^{24}\theta_{h,144+j}L^{(i-7)}_{j} + \sum_{j=1}^{24}\theta_{h,168+j}R^{(i)}_{j} + \sum_{j=1}^{24}\theta_{h,192+j}R^{(i-1)}_{j}\\
         &  + \sum_{j=1}^{24}\theta_{h,216+j}R^{(i-7)}_{j} + + \sum_{j=1}^{24}\theta_{h,240+j}z^{(i)}_{j} +\epsilon^{(i)}_{h}
    \end{split}
\end{equation}
where $\boldsymbol{\theta}_{h} = [\theta_{h,1},\cdots, \theta_{h,247}]$  are the parameters for LEAR. These parameters are estimated using LASSO as follows:
$$\boldsymbol{\hat{\theta}}_{h} = \operatorname*{argmin}_{\boldsymbol{\theta}_{h}} \sum_{i=8}^{N}(P^{(i)}_{h} - \hat{P}^{(i)}_{h})^{2} + \lambda \sum_{k = 1}^{247}\vert\theta_{h,k}\vert$$
where $\lambda \geq 0$ is the regularization parameter controlling the sparsity of the solution.

\subsection{DNN}\label{ass:dnn}
A deep neural network with 4 hidden layers, trained under a multivariate framework was considered. The parameters $\theta$ are optimized using the Adam algorithm, while hyperparameters and input features were selected via Tree-structured Parzen Estimator (TPE) which is a Bayesian optimization method. This DNN model can also be accessed as python library$^2$.\footnote{$^2$\href{https://epftoolbox.readthedocs.io/en/latest/modules/started.html}{Link to Python Toolbox}}

\end{document}